\documentclass[]{fairmeta}

\usepackage{xcolor}
\usepackage{graphicx}
\usepackage{natbib}
\usepackage{appendix}
\usepackage{booktabs}
\usepackage{url}
\usepackage{xcolor}
\usepackage{tcolorbox}
\usepackage{hyperref}
\definecolor{RoyalBlue}{RGB}{65,105,225}
\definecolor{Green}{RGB}{34,139,34}
\hypersetup{
	colorlinks=true,
	citecolor=RoyalBlue,
	linkcolor=RoyalBlue,
	filecolor=magenta,      
	urlcolor=RoyalBlue
}
\usepackage{cleveref}
\usepackage{tabularx}
\usepackage{comment}
\usepackage{amsthm}
\usepackage{tikz}
\usepackage[frozencache,cachedir=minted-cache]{minted}
\setminted{fontsize=\scriptsize}
\usepackage{amssymb}
\usepackage{wrapfig}

\usepackage{fontawesome}
\usepackage{pifont}

\usepackage{thmtools,thm-restate}

\usepackage{cleveref}
\usepackage{caption}
\usepackage{subcaption}
\usepackage{multicol}
\usepackage{multirow}


\newtheorem{theorem}{Theorem}
\newtheorem{lemma}{Lemma}
\newtheorem{assumption}{Assumption}
\newtheorem{definition}{Definition}

\newtheorem{constraint}{Constraint}


\DeclareMathAlphabet{\mathbsf}{OT1}{cmss}{bx}{n}
\DeclareMathAlphabet{\mathssf}{OT1}{cmss}{m}{sl}
 %
 %


\crefname{lemma}{lemma}{lemmas}
\Crefname{lemma}{Lemma}{Lemmas}
\crefname{thm}{theorem}{theorems}
\Crefname{thm}{Theorem}{Theorems}
\crefname{prop}{proposition}{propositions}
\Crefname{prop}{Proposition}{Propositions}
\crefname{defn}{definition}{definitions}
\Crefname{defn}{Definition}{Definitions}
\creflabelformat{equation}{#1#2#3}
\crefname{equation}{equation}{equations}
\Crefname{equation}{Equation}{Equations}
\Crefname{section}{Section}{Sections}
\Crefname{appendix}{Appendix}{Appendices}
\crefname{figure}{figure}{figures}
\Crefname{figure}{Figure}{Figures}
\crefname{algorithm}{algorithm}{algorithms}
\Crefname{algorithm}{Algorithm}{Algorithms}
\crefname{assumption}{assumption}{assumptions}
\Crefname{assumption}{Assumption}{Assumptions}

\title{Multi-Domain Causal Representation Learning via Weak Distributional Invariances}

\author[1,\dagger]{Kartik Ahuja}
\author[2, \dagger]{Amin Mansouri}
\author[3]{Yixin Wang}

\affiliation[1]{FAIR at Meta}
\affiliation[2]{Mila-Quebec AI Institute}
\affiliation[3]{University of Michigan}
\contribution[\dagger]{Equal contribution}

\abstract{Causal representation learning has emerged as the center of action in causal machine learning research. In particular, multi-domain datasets present a natural opportunity for showcasing the advantages of causal representation learning over standard unsupervised representation learning. While recent works have taken crucial steps towards learning causal representations, they often lack applicability to multi-domain datasets due to over-simplifying assumptions about the data; e.g. each domain comes from a different single-node perfect intervention. In this work, we relax these assumptions and capitalize on the following observation: there often exists a subset of latents whose certain distributional properties (e.g., support, variance) remain stable across domains; this property holds when, for example, each domain comes from a multi-node imperfect intervention. Leveraging this observation, we show that autoencoders that incorporate such invariances can provably identify the stable set of latents from the rest across different settings.}

\correspondence{Kartik Ahuja at \email{kartikahuja@meta.com}}


\begin{document}

\maketitle

\section{Introduction}

Despite the incredible success of modern AI systems, they possess limited reasoning and planning skills \citep{bubeck2023sparks} and often lack controllability \citep{leivada2023dall}. Towards alleviating these concerns, causal representation learning \citep{scholkopf2021toward} aims to build models with a better causal understanding of the world.  

The theory of causal representation learning to date has largely focused on developing algorithms that are capable of identifying the underlying causal structure of the data-generating process under minimal supervision. This capability is enabled by endowing these learners with inductive biases that capture natural properties of the data \citep{locatello2020weakly, brehmer2022weakly}. Despite the advances, existing causal representation learners remain far from readily applicable to the increasingly prevalent multi-domain datasets in practice \citep{gulrajani2020search, koh2021wilds}.
One wonders why? An important reason is that existing approaches rely on strong assumptions about the data-generating process. For example, many assume that the data in different domains is gathered under perfect interventions. 
Moreover, many also require that the relationships between the latents can be described by the same fixed directed acyclic graph (DAG) across all data points. This assumption is often violated: e.g. the causal relationships between the latents can have different causal directions in two images, where a cat chases a dog in one image and the dog chases the cat in the other. In this work, we relax these assumptions, making progress towards causal representation learning for complex multi-domain datasets.

\paragraph{Contributions.}  The invariance principle considered in this paper is reminiscent of the invariance principle in \cite{peters2016causal,arjovsky2019invariant}, though we focus on unlabelled multi-domain data. At a high-level, the principle requires that a fixed subset of latents is not intervened across domains, and their distributions remain invariant. We study different forms of distributional invariance, ranging from weak invariance on the support to strong invariance on the marginal distribution of the latents. We divide our analysis into two parts. We first focus on standard settings where the latents in the entire data are governed by a fixed acyclic structural causal model; we then relax this assumption. We also consider different assumptions on the mixing function that generates the observations. In our theoretical and empirical analysis, the identification results take the form ``\emph{latents with invariant distributional properties can be disentangled from the rest}.'' 

\section{Related Works}

The field of causal representation learning bears a deep connection to the field of independent component analysis (ICA) \citep{hyvarinen2023nonlinear}. The seminal work of \citet{comon1994independent} on linear independent component analysis studied linear mixing of independent non-Gaussian latents and proposed a method that identifies the true latents up to permutation and scaling. Since then, much progress has taken place.   Existing works in the area of representation identification can be categorized into the following categories based on the assumptions: i) assumptions on the distribution of latent factors, and ii) assumptions on the mixing functions.  In the pivotal work of \citet{khemakhem2020variational}, the authors studied general diffeomorphisms mixing but made additional assumptions such as the availability of auxiliary information that renders latents conditionally independent. Recently, \citet{kivva2022identifiability} considered a setup similar to \citet{khemakhem2020variational}; they relaxed the crucial assumption that auxiliary information is observed but restricted the family of mixing maps to piecewise linear diffeomorphisms, in order to obtain a similar level of identification as \citet{khemakhem2020variational}. The recent work of \citet{liang2023causal} takes the connection between causal representation learning and ICA one step further. They study the question of identifiability under the supposition that the underlying causal graph is known, much in the same spirit that ICA supposes the graph is known and all latent variables are independent. 

More recently, the problem of interventional causal representation learning has come to attention in \cite{ahuja2022interventional, seigal2022linear,varici2023score}. \cite{ahuja2022interventional} study a) polynomial mixing with interventions that induce independent support; b) general diffeomorphisms with hard do interventions. \cite{seigal2022linear} study linear mixing with perfect interventions,  and \cite{varici2023score} study linear mixing with perfect and imperfect interventions.  The relatively recent work of \cite{von2023nonparametric} studied general diffeomorphism mixing with perfect interventions, and \cite{buchholz2023learning} studied general diffeomorphisms with latents that follow linear Gaussian structural causal model (SCM)  under both perfect and imperfect interventions. The different identification guarantees in these works are summarized in Table \ref{table_synth}, where we also contrast our results. There are a few aspects that separate us from existing works. Firstly, these works study single-node interventions and we study multi-node imperfect interventions. We also study the setting where a fixed DAG does not explain the relationships between the latents for the entire observational dataset.  Another close line of work focuses on the intermediate goal of learning the underlying latent causal graph. Some examples in this line of work include \citet{cai2019triad, xie2020generalized, jiang2023learning}
and a concurrent work \citep{zhang2023identifiability}.

Aside from the above works, other causal representation learning settings that have been studied include settings where the learner has access to i) paired observations (e.g., data generated pre- and post-intervention) \citep{locatello2020weakly, lachapelle2022disentanglement, ahuja2022weakly, lippe2022citris, lippe2022icitris, von2021self}, ii) temporal data \citep{hyvarinen2019nonlinear, yao2022learning1, lachapelle2022partial, ahuja2021properties}, iii) multi-view data \citep{gresele2020incomplete} iv) other forms of auxiliary information \citep{khemakhem2020variational, khemakhem2020ice, hyvarinen2019nonlinear}, v) object-centric inductive biases \citep{mansouri2022object, lachapelle2023additive, brady2023provably}. These settings are qualitatively different from ours.

Lastly, the distributional invariances used in our work may remind the readers of the seminal works of~\citet{ganin2016domain, muandet2013domain}. There are a few notable differences: i) these works focus on domain generalization in the presence of labeled data, while we focus on the unsupervised setting, ii) these works enforce invariance of the joint distribution of all the latents, while we enforce a weaker invariance on a subset of the latents.

\begin{table*}
\footnotesize
	\caption{\small{Our results compared with related works. Existing works assume that the relationship between latents can be described by a fixed DAG across domains. We relax this assumption to work with general multi-domain settings.} }
	\label{table_synth}
	\renewcommand{\arraystretch}{1.1}
	\centering
	\begin{tabular}{llll}
		\toprule
		\textbf{Input data}  & \textbf{Assm.  on}  $p_Z$   &  \textbf{Assm. on} $g$     &   \textbf{Identification}                     \\  \midrule
                 \textcolor{gray}{Observational} & \textcolor{gray}{$z_i \perp z_j | u$, $u$ aux info.}  & \textcolor{gray}{Diffeomorphism} & \textcolor{gray}{Perm \& scale (Khemakhem et al.) } \\ 
               Multi $do$ intvn/node  & Non-parametric & Diffeomorphism & $\approx$ Comp-wise (Ahuja et al.)\\
               Perfect ($1$-node)  &  Linear &  Linear & Comp-wise  (Seigal et al.) \\
               Perfect ($1$-node)  &  Non-parametric &  Polynomial & Comp-wise  (Ahuja et al.) \\ 
               Perfect ($1$-node)  & Non-parametric & Diffeomorphism & Comp-wise (Kugelgen et al.)  \\  
               Imperfect ($1$-node) & Non-parametric & Linear & Mix  consistency (Varici et al.)\\
               Imperfect ($1$-node) &  Non-parametric + ind support & Polynomial & Block affine (Ahuja et al.) \\ 
               Imperfect ($1$-node) & Linear Gaussian & Diffeomorphism & Affine (Buchholz et al.) \\ 
               Imperfect (multi-node) & Non-linear & Polynomial & Block affine (Theorem \ref{thm3}) \\  
             General multi-domain   & Non-param, sup inv $\mathcal{S}$ & Polynomial & Block affine (Theorem \ref{thm4}) \\
             General multi-domain & Non-param, sup inv $\mathcal{S}$& Diffeomorphism & $\Gamma^{c}$ identification (Theorem \ref{thm5}) \\
                \textcolor{gray}{Counterfactual } & \textcolor{gray}{Non-parametric} & \textcolor{gray}{Diffeomorphism} & \textcolor{gray}{Comp-wise (Brehmer et al.)} \\ 
		\bottomrule
	\end{tabular}
 \label{table1}
\end{table*}

\section{Unsupervised Multi-Domain Causal Representation Learning}

\paragraph{Problem statement.}

We are given unlabelled data--- a set of $x$'s (e.g., images)---from multiple domains. Consider a domain $j\in [k]$, where  $k$ is the number of domains, $[k]$ is shorthand for $\{1, \cdots, k\}$. The latent variables $z \in \mathbb{R}^{d}$ in domain $j$ are sampled from a distribution $p_Z^{(j)}$ whose support is denoted as $\mathcal{Z}^{(j)}$. These sampled latents $z$ are then rendered by an injective mixing function $g:\mathbb{R}^{d}\rightarrow \mathbb{R}^{n}$ to generate $x \in \mathbb{R}^{n}$. The support of the corresponding $x$'s in domain $j$ is denoted as $\mathcal{X}^{(j)}$.  Define the union of the support of the latents across domains as $\mathcal{Z} = \cup_{j \in [k]} \mathcal{Z}^{(j)}$ and correspondingly for the observations $x$'s as $\mathcal{X}  = \cup_{j \in [k]}\mathcal{X}^{(j)}$.  The data-generating process (DGP) is formally stated below. In each domain $j \in [k],$
\begin{equation} 
   z \sim p_Z^{(j)},  \;  x \leftarrow g(z) \qquad 
    \label{eqn: dgp}
\end{equation}
The goal of causal representation learning is \textit{provable
representation identification}, i.e. to learn an encoder function that can take in the observation $x$ and provably output its underlying true latent $z$ (or its desirable approximation). In practice, such an encoder is often learned via solving a reconstruction identity,  $h \circ f(x) = x, \forall x\in\mathcal{X},$ where $f:\mathbb{R}^{n} \rightarrow
\mathbb{R}^{d}$ and $h:\mathbb{R}^{d}\rightarrow
\mathbb{R}^{n}$ are a pair of encoder and decoder that
jointly satisfy the reconstruction identity. 
The pair $(f,h)$ together is referred to as the autoencoder. Given the
learned encoder $f$, the resulting representation is $\hat{z}
\triangleq f(x)$, which holds the encoder's estimate of the latents.   A common goal in causal representation learning is to achieve \textit{component-wise} disentanglement, i.e., each $\hat{z}_i$ is a scalar and invertible function of some $z_j$, where $\hat{z}_i$ and $z_j$ are $i^{th}$ and $j^{th}$ components of $\hat{z}$ and $z$.

\paragraph{Invariance principle for causal representations.}  The invariance principle we consider here is inspired by the folklore cow-on-the-beach example \citep{beery2018recognition}. The distributional properties of a certain set of latents (e.g., the alphabets across domains as shown in Figure~\ref{fig1}, or the cow characteristics across domains) are stable. In contrast, the distribution properties of the other latents (e.g. color characteristics in Figure~\ref{fig1}) are unstable; they vary across domains. More concretely, we divide the different components of latent $z$ into two sets, $\mathcal{S}$ and $\mathcal{U}$, where $\mathcal{S}$ corresponds to the stable set of latents and $\mathcal{U}$ corresponds to the unstable set of latents, and without loss of generality we write $z=[z_{\mathcal{S}},z_{\mathcal{U}}]$.  We require that some aspect of the joint distribution of $\mathcal{S}$---denoted as $p_{z_{\mathcal{S}}}^{(j)}$---does not vary across domains. Formally, there exists a functional $F$ such that $F\big[p_{z_{\mathcal{S}}}^{(j)}\big]$ is invariant across $j$. If $F[\cdot]$ is the identity functional, then the distribution itself is invariant. Other examples of $F[\cdot]$ include the support of the latents' distributions, the mean of the latents, the variance of the latents, etc. To realize this invariance principle in causal representation learning, we study autoencoders that enforce similar invariance on a certain subset $\hat{\mathcal{S}}\subseteq [d] $ of its estimated latents $\hat{z}$: 
    \begin{align}
    h \circ f(x) = x, \qquad &\forall x\in\mathcal{X}; \label{eqn: recons}\\
    F\big[p_{\hat{z}_{\hat{\mathcal{S}}}}^{(p)}\big] = F\big[p_{\hat{z}_{\hat{\mathcal{S}}}}^{(q)}\big], \qquad&\forall p \not= q, p,q \in [k]. \label{eqn: const}
    \end{align}
In what follows, we will show how autoencoders equipped with this class of invariance constraints can learn to disentangle the stable latents from the unstable latents: they return representations $\hat{z}$ that can provably satisfy $\hat{z}_{\hat{\mathcal{S}}} = u(z_{\mathcal{S}})$, where $u(\cdot)$ is an injective map.   For some choice of $\hat{\mathcal{S}}$, a solution to the reconstruction identity under invariance constraint may not exist. The learner can select $\hat{\mathcal{S}}$ as follows. It can start with the largest possible $\hat{\mathcal{S}}$, i.e. a set of size $d$. It reduces the size of the set by one until a solution to the reconstruction identity under invariance constraint is found, which is guaranteed to occur when $|\hat{\mathcal{S}|} = |\mathcal{S}|$. 

\begin{figure}
    \centering
    \includegraphics[ width=2.5in]{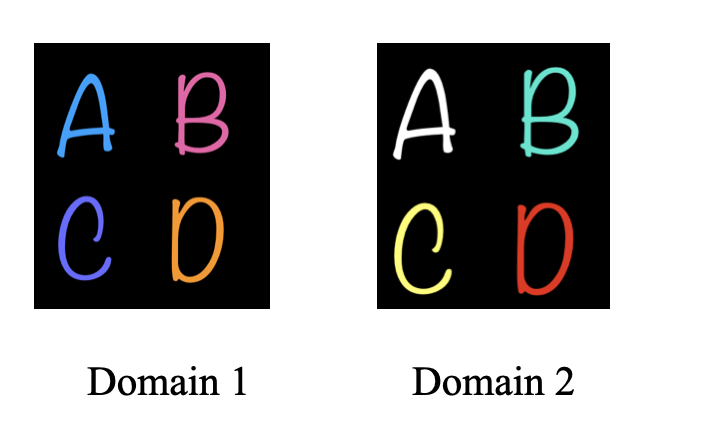}
    \caption{The distribution of the alphabet styles is stable across the domains but the distribution of color is unstable.}
    \label{fig1}
\end{figure}

\subsection{Acyclic Structural Causal Models $p_z$}

We start with the setting where the distribution of the latents $p_z$ comes from an acyclic causal model. To identify the stable latents, we first leverage previous results to achieve affine identification of all latents. We then use distributional invariance to achieve the identification of the stable latents. Let us now revisit a result from \citet{ahuja2022interventional} for affine identification under a polynomial mixing $g$.

\begin{assumption} (Polynomial mixing)
\label{assm1: dgp1} The interior of the support of $z$, denoted as $\mathcal{Z}$, is a non-empty subset of $\mathbb{R}^{d}$. The mixing map $g$ is a polynomial of finite degree $p$ whose corresponding coefficient matrix $G$ has full column rank. Specifically,  $g$ is determined by the coefficient matrix $G$ as follows,
 $$ g(z) = G[1, z, z\bar{\otimes} z, \cdots, \underbrace{z \bar{\otimes} \cdots \bar{\otimes}\;z}_{p \;\text{times}}]^{\top} \qquad \forall z\in \mathbb{R}^d,$$
where $ \bar{\otimes} $ represents the Kronecker product  with all distinct entries; for example, if $z = [z_1, z_2]$, then~$z\bar{\otimes} z = [z_1^2, z_1z_2, z_2^2]$. 
\end{assumption}

\begin{constraint} (Polynomial decoder)
\label{assm3: h_poly_new} 
The learned decoder $h$ is a polynomial of degree $p$ that is determined by its
corresponding coefficient matrix $H$ as follows, $$  h(z) = H[1, z, z\bar{\otimes} z, \cdots, \underbrace{z \bar{\otimes} \cdots \bar{\otimes}\;z}_{p \;\text{times}}]^{\top}\qquad \forall z\in \mathbb{R}^d.$$
Moreover, the interior of the image of the encoder $ f(\mathcal{X})$ is a non-empty subset of $\mathbb{R}^d$.
\end{constraint}

\begin{theorem}[\citet{ahuja2022interventional}]
\label{thm1}
Suppose the multi-domain data is gathered from the DGP in equation~\eqref{eqn: dgp}  under Assumptions~\ref{assm1: dgp1}. Then the autoencoder that solves the reconstruction
identity (equation~\eqref{eqn: recons}) under Constraint~\ref{assm3: h_poly_new} achieves affine identification, i.e.,
$\forall z \in \mathcal{Z}, \hat{z} = Az + c$,
where $\hat{z}$ is the encoder $f$'s output, $z$ is the true
latent, $A \in \mathbb{R}^{d\times d}$ is  invertible  and $c\in
\mathbb{R}^{d}$.
\end{theorem}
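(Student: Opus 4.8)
The plan is to compose the reconstruction identity with a left inverse of the polynomial mixing map, turning it into a polynomial identity in the encoder outputs, and then to read off a degree bound coordinate by coordinate.

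\emph{Step 1: a polynomial identity in $\hat z$.} I would write $\psi(v)=[1,v,v\bar{\otimes}v,\dots,\underbrace{v\bar{\otimes}\cdots\bar{\otimes}v}_{p}]^{\top}$ for the degree-$p$ monomial feature map, so that $g(z)=G\psi(z)$ and, by Constraint~\ref{assm3: h_poly_new}, $h(v)=H\psi(v)$. Since $G$ has full column rank it admits a left inverse $G^{\dagger}$ with $G^{\dagger}G=I$. Fix $x\in\cX$, let $z\in\cZ$ be the latent generating it (unique since $g$ is injective) and $\hat z=f(x)$. The reconstruction identity gives $H\psi(\hat z)=h(f(x))=x=g(z)=G\psi(z)$, hence $\psi(z)=G^{\dagger}H\,\psi(\hat z)$. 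Reading off the linear block of $\psi(z)$ (which is $z$ itself) gives $z=q(\hat z)$ with $q(v):=MG^{\dagger}H\,\psi(v)$ a polynomial of degree at most $p$ ($M$ the fixed selector onto the linear block). Substituting $z=q(\hat z)$ back into $\psi(z)=G^{\dagger}H\psi(\hat z)$ yields $\psi\big(q(\hat z)\big)=G^{\dagger}H\,\psi(\hat z)$ for all $\hat z\in f(\cX)$, and since $\hat z=f(g(z))$ it also gives $q\big(f(g(z))\big)=z$ for all $z\in\cZ$.

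\emph{Step 2: degree reduction.} Both sides of $\psi(q(\hat z))=G^{\dagger}H\psi(\hat z)$ are polynomials in $\hat z$ agreeing on the nonempty interior of $f(\cX)$ (Constraint~\ref{assm3: h_poly_new}); since a nonzero polynomial cannot vanish on an open set, the identity holds for all $\hat z\in\Reals^{d}$. I would then compare the coordinate of $\psi(q(\hat z))$ equal to $q_i(\hat z)^{p}$---this monomial appears in the top, $p$-fold Kronecker block---with the corresponding coordinate of $G^{\dagger}H\psi(\hat z)$, which is a polynomial of degree at most $p$ in $\hat z$. The leading term of $q_i^{p}$ is the $p$-th power of the leading term of $q_i$, hence nonzero, so $p\cdot\deg(q_i)\le p$, i.e.\ $\deg(q_i)\le1$. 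As this holds for every $i$, $q$ is affine: $q(v)=A'v+c'$.

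\emph{Step 3: conclusion.} From Step~1, $A'f(g(z))+c'=z$ for all $z\in\cZ$. Were $A'$ singular, the image of the affine map $q$ would lie in a proper affine subspace of $\Reals^{d}$, which cannot contain $\cZ$ since $\cZ$ has nonempty interior (Assumption~\ref{assm1: dgp1}); thus $A'$ is invertible, and $\hat z=f(g(z))=(A')^{-1}(z-c')=Az+c$ with $A:=(A')^{-1}$ invertible and $c:=-(A')^{-1}c'$, which is the claimed affine identification. The step I expect to be the crux is Step~2: the key point is that composing the reconstruction identity with a left inverse of $g$ gives an honest polynomial identity in $\hat z$---which works only because it is the decoder $h$, not the arbitrary encoder $f$, that is constrained to be polynomial---after which the isolated coordinate $q_i^{p}$ of the top Kronecker block forces the degree down. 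Extending the identity off $f(\cX)$ via the nonempty-interior hypothesis and checking invertibility of $A'$ are then routine.
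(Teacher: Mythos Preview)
The paper does not supply its own proof of Theorem~\ref{thm1}; the result is quoted verbatim from \citet{ahuja2022interventional} and simply invoked as a black box throughout. So there is no in-paper proof to compare against.

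That said, your argument is correct and is essentially the proof given in the cited reference. The core mechanism---apply a left inverse $G^{\dagger}$ to the reconstruction identity to obtain $\psi(z)=G^{\dagger}H\psi(\hat z)$, extend this to a global polynomial identity via the nonempty-interior hypothesis on $f(\mathcal{X})$, and then force $\deg q_i\le 1$ by reading off the pure-power coordinate $q_i^{\,p}$ in the top Kronecker block---is exactly the one in \citet{ahuja2022interventional}. Your Step~3 invertibility check (image of a rank-deficient affine map cannot cover $\mathcal{Z}$, which has nonempty interior) is also standard. One minor point of phrasing: in the multivariate setting ``leading term'' is order-dependent, so it is cleaner to say that the top-degree homogeneous part of $q_i$ is nonzero and hence $q_i^{\,p}$ has degree exactly $p\cdot\deg q_i$ (using that the polynomial ring is an integral domain); the conclusion is unchanged.
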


We now strengthen the above affine identification by using the distributional invariance of the stable set of latents. In what follows, we focus on the latents $p_z$ that follow an acyclic structural causal model as follows. In each domain $j\in [k]$,
\begin{equation}
\begin{split}    
  & z_i^{(j)} \leftarrow q_i\big(z_{\mathrm{Pa}(i)}^{(j)}\big)  + \varrho_i^{(j)}, z_{\mathrm{Pa}(i)}^{(j)} \perp \varrho_i^{(j)},  \forall i \in [d]; \\ 
  &  x \leftarrow g(z),
\end{split}
    \label{eqn: dgp_scm_additive_nlinear}
\end{equation}
where $q_i(\cdot)$ refers to the map  that  generates $z_i^{(j)}$, namely the $i^{th}$ component of $z^{(j)}$; $\mathrm{Pa}(i)$ is the set of parents of $z_i^{(j)}$;  $\varrho_i^{(j)}$ is noise in domain $j$. Each sampled latent is mixed by $g$ to generate $x$. We drop the domain index $j$ from $z^{(j)}$  in $x\leftarrow g(z)$ and wherever else it is not needed.
We use domain index $1$ to denote the observational dataset. The domains from index $2$ and onwards correspond to interventional datasets. The interventions considered in this section correspond to imperfect interventions, where the mapping $q_i(\cdot)$ remains unchanged but the distribution of the noise variables changes across domains. We assume that the nodes in $\mathcal{U}$ undergo imperfect interventions, but the nodes in $\mathcal{S}$ are~never~intervened.

\begin{assumption}[Single-node imperfect interventions] \label{assm: imp_int_structure} 
In interventional domain $j$ ($j\geq 2$), exactly one node in $\mathcal{U}$ undergoes an imperfect intervention on the noise term. Moreover, across all domains, each node in $\mathcal{U}$ undergoes intervention at least once. Further, the children of any node in $\mathcal{U}$ must also belong to $\mathcal{U}$.
\end{assumption}

Assumption~\ref{assm: imp_int_structure} implies that the distribution of $z_{\mathcal{S}}$ remains invariant across domains. To identify $z_{\mathcal{S}}$, we thus impose the following invariance constraint: the marginal distribution of components in subset $\hat{\mathcal{S}} \subseteq [d]$ of the estimated latents must remain invariant across domains.

\begin{constraint} (Marginal invariance)
For each $i \in \hat{\mathcal{S}}$, $p_{\hat{z}_{i}^{(p)}} = p_{\hat{z}_{i}^{(q)}}, \forall p \not= q, p,q \in [k].$
\label{assm: dist_inv}
\end{constraint}

\begin{restatable}[Single-node imperfect interventions]{theorem}{singlenode}
\label{thm2}
Suppose the multi-domain data is gathered from the DGP in equation~\eqref{eqn: dgp_scm_additive_nlinear}  under Assumptions~\ref{assm1: dgp1} and \ref{assm: imp_int_structure}. Then the autoencoder that solves the reconstruction
identity (equation~\eqref{eqn: recons}) under Constraints  \ref{assm3: h_poly_new} and \ref{assm: dist_inv} achieves block-affine identification, i.e.,
$\forall z \in \mathcal{Z}, \hat{z}_{\hat{\mathcal{S}}} = D z_{\mathcal{S}} + e$, 
where $\hat{z}$ is the encoder's output, $z$ is the true
latent, $D \in \mathbb{R}^{|\hat{\mathcal{S}}|\times |\mathcal{S}|}$,  and $e\in
\mathbb{R}^{|\hat{\mathcal{S}}|}$.
\end{restatable}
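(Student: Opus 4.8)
The plan is to combine the affine identification from Theorem~\ref{thm1} with the marginal invariance constraint to pin down which coordinates of $\hat z$ can depend on the unstable latents. By Theorem~\ref{thm1}, the autoencoder satisfying the reconstruction identity under Constraint~\ref{assm3: h_poly_new} yields $\hat z = A z + c$ with $A$ invertible. Writing $A$ in block form according to the partition $z = [z_{\mathcal S}, z_{\mathcal U}]$, we have for each $i \in \hat{\mathcal S}$ that $\hat z_i = A_{i,\mathcal S} z_{\mathcal S} + A_{i,\mathcal U} z_{\mathcal U} + c_i$. The goal is to show $A_{i,\mathcal U} = 0$ for every $i \in \hat{\mathcal S}$; then setting $D = A_{\hat{\mathcal S},\mathcal S}$ and $e = c_{\hat{\mathcal S}}$ gives the claimed block-affine identification. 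So the whole argument reduces to: the marginal-invariance constraint on $\hat z_{\hat{\mathcal S}}$, together with the structure of the single-node imperfect interventions, forces the contribution of $z_{\mathcal U}$ to each invariant coordinate to vanish.

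The key step is a ``change-of-variance'' or ``change-of-distribution'' propagation argument through the SCM. First I would record the consequence of Assumption~\ref{assm: imp_int_structure}: since children of $\mathcal U$-nodes are in $\mathcal U$, the set $\mathcal U$ is ancestrally closed downward, so the distribution of $z_{\mathcal S}$ is genuinely invariant across all domains (the interventions only perturb noise terms of $\mathcal U$-nodes, which never feed back into $\mathcal S$). Next, consider a coordinate $i \in \hat{\mathcal S}$ and suppose for contradiction that $A_{i,\mathcal U} \neq 0$, i.e. $\hat z_i$ genuinely depends on some nonempty subset $T \subseteq \mathcal U$ of unstable latents through a nonzero linear combination. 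Pick a node $m \in \mathcal U$ that is a ``source'' within $T$ in the intervention DAG — more precisely, among all $\mathcal U$-nodes whose coefficient in $\hat z_i$ is nonzero, pick one that is highest in the topological order. By Assumption~\ref{assm: imp_int_structure}, $m$ is imperfectly intervened in some domain $j$; the intervention changes the noise $\varrho_m$, hence changes the marginal distribution of $z_m$ between domain $1$ and domain $j$, while leaving the marginals of all of $z_m$'s non-descendants (in particular all the other $\mathcal U$-nodes appearing in $\hat z_i$ with nonzero coefficient, by the topological choice of $m$, and all of $z_{\mathcal S}$) unchanged. Then $\hat z_i = (\text{combination not involving } z_m \text{, distributionally unchanged}) + A_{i,m} z_m + (\text{lower-order descendant terms})$; one then argues that the marginal of $\hat z_i$ must actually change between these two domains, contradicting Constraint~\ref{assm: dist_inv}. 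The subtlety is that descendants of $m$ inside $\mathcal U$ may also have nonzero coefficients in $\hat z_i$, and their distributions also shift when $m$ is intervened, so I would instead pick $m$ to be lowest (a sink within $T$) in the topological order, intervene on $m$, and note that then $z_m$'s distribution changes but the distributions of the $z_\ell$ for $\ell \in T \setminus \{m\}$ — which are non-descendants of $m$ among the nonzero-coefficient nodes — as well as $z_{\mathcal S}$, are all unchanged; the shift in $z_m$ then propagates to a shift in $\hat z_i$, contradicting invariance.

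The main obstacle, and the place requiring genuine care, is the last inference: going from ``the conditional distribution of $z_m$ changes'' to ``the marginal distribution of the scalar $\hat z_i = (\text{unchanged part}) + A_{i,m} z_m$ changes.'' In general an additive perturbation of one summand need not change the distribution of a sum (cancellations can happen), so one must exploit independence structure: since we chose $m$ to be a sink in $T$, its noise $\varrho_m$ is independent of $z_{\mathrm{Pa}(m)}$ and — crucially — the ``unchanged part'' of $\hat z_i$ is a function of latents that are non-descendants of $m$, hence independent of $\varrho_m$ conditionally on $z_{\mathrm{Pa}(m)}$. Then $\hat z_i$ is, conditionally on the parents, a fixed affine shift of $\varrho_m$ plus a quantity independent of $\varrho_m$, so its characteristic function factorizes and the change in the distribution of $\varrho_m$ shows up as a change in the conditional, hence marginal, distribution of $\hat z_i$ — unless $A_{i,m} = 0$. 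I would also need to handle the fact that the SCM here is the additive-noise model of equation~\eqref{eqn: dgp_scm_additive_nlinear} but $\hat z_i$ is a \emph{linear} function of $z$, not of the noises, so the propagation of the perturbation through $q_i$'s is nonlinear; restricting to a sink node $m$ (so that no $q$ of a nonzero-coefficient descendant needs to be traversed) is exactly what sidesteps this. Finally I would argue $|\hat{\mathcal S}| \le |\mathcal S|$ (so $D$ has the stated dimensions) and that the construction is consistent, e.g. by noting that taking $\hat{\mathcal S}$ of size $|\mathcal S|$ and the ground-truth encoder is always a feasible solution, which matches the selection procedure described after equation~\eqref{eqn: const}.
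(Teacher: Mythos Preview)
Your proposal is correct and takes essentially the same approach as the paper: invoke Theorem~\ref{thm1} for affine identification, then argue by induction in reverse topological order over $\mathcal{U}$ (equivalently, repeatedly picking a sink among the nonzero-coefficient nodes), using the additive-noise decomposition $\hat z_i = w + A_{im}\varrho_m$ with $w \perp \varrho_m$ and $w$ distributionally unchanged under the intervention on $m$, so that factorization of the characteristic/moment-generating function forces $A_{im}=0$. The paper's proof is exactly this argument (phrased via MGFs), together with the same feasibility check you mention at the end; your initial detour through a ``source'' node is correctly abandoned, and your choice of characteristic functions over MGFs is, if anything, slightly more robust.
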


The proof of Theorem~\ref{thm2} is in the Appendix. Theorem \ref{thm2} implies that, under single-node imperfect interventions and polynomial mixing,  the invariant latents $z_{\mathcal{S}}$ are disentangled from the rest of the latents. 
While the SCM (equation~\eqref{eqn: dgp_scm_additive_nlinear}) of the DGP in Theorem~\ref{thm2} does not involve any confounders, we show how this result readily extends to settings with confounders in the Appendix.

We next study multi-domain data coming from multi-node imperfect interventions. For ease of exposition, we begin with two-node imperfect interventions and assume that the noise distributions are Gaussian. We discuss how to relax these assumptions in the Appendix.  Below we describe the key assumptions we make about the mechanisms underlying the interventions.

\begin{assumption}\label{assm: multi_int_str} (Multi-node imperfect interventions) (1) The children of any node in $\mathcal{U}$ must also belong to $\mathcal{U}$ and the underlying DAG must have at least two terminal nodes. Further, the noise $\varrho'$s in \eqref{eqn: dgp_scm_additive_nlinear} are zero-mean Gaussians with variances for observational data (domain $1$) sampled i.i.d. from a non-atomic density $p_{\sigma_{\varrho}}$. 

(2) Interventional data in each domain $j\geq 2$ is generated as follows. For each $i \in \mathcal{U}$, select a random node $j$ from $\mathcal{U}\setminus\{i\}$ uniformly. The noise variance for those two nodes $(i,j)$ are two independent draws from density $p_{\sigma_{\varrho}}$.  Repeat this procedure $t$ times for each node~$i\in \mathcal{U}$.

\end{assumption}

\begin{restatable}[Multi-node imperfect interventions]{theorem}{Multinode}
\label{thm3}
Suppose the multi-domain data is gathered from the DGP in equation~\eqref{eqn: dgp_scm_additive_nlinear}  under Assumptions~\ref{assm1: dgp1} and \ref{assm: multi_int_str}. If the number of multi-node interventions $t$ impacting each node is more than $   \frac{\log(d/\delta)}{\log(1/(1-1/2d))} $, then, with probability $1-\delta$, the autoencoder that solves the reconstruction
identity (equation \eqref{eqn: recons}) under Constraints \ref{assm3: h_poly_new} and \ref{assm: dist_inv} achieves block-affine identification, i.e.,
$\forall z \in \mathcal{Z}, \hat{z}_{\hat{\mathcal{S}}} = D z_{\mathcal{S}} + e$, 
where $\hat{z}$ is the encoder's output, $z$ is the true
latent, $D \in \mathbb{R}^{|\hat{\mathcal{S}}|\times |\mathcal{S}|}, e\in
\mathbb{R}^{|\hat{\mathcal{S}}|}$.
\end{restatable}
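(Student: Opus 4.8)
The plan is to bootstrap from the all-latents affine identification of \Cref{thm1} and then use the Gaussianity of the noise, together with the marginal-invariance Constraint~\ref{assm: dist_inv}, to show that the ``invariant'' output coordinates cannot load on any unstable latent. \textbf{Step 1 (affine identification and reduction).} Under \Cref{assm1: dgp1} and Constraint~\ref{assm3: h_poly_new}, \Cref{thm1} gives $\hat z = A z + c$ with $A$ invertible. By \Cref{assm: multi_int_str} the children of any $\mathcal{U}$-node lie in $\mathcal{U}$, hence every ancestor of an $\mathcal{S}$-node lies in $\mathcal{S}$; so I would pick a topological order of the SCM~\eqref{eqn: dgp_scm_additive_nlinear} with $\mathcal{S}$ before $\mathcal{U}$, note that the sub-SCM on $z_{\mathcal{S}}$ is autonomous and never intervened, and conclude $p_{z_{\mathcal{S}}}^{(j)}$ is the same in every domain. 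Writing $\hat z_i = A_{i,\mathcal{S}} z_{\mathcal{S}} + A_{i,\mathcal{U}} z_{\mathcal{U}} + c_i$ for $i \in \hat{\mathcal{S}}$, it then suffices to prove $A_{i,\mathcal{U}} = 0$ for all $i \in \hat{\mathcal{S}}$, which gives the claim with $D = A_{\hat{\mathcal{S}},\mathcal{S}}$ and $e = c_{\hat{\mathcal{S}}}$. Because~\eqref{eqn: dgp_scm_additive_nlinear} is additive and triangular, $A_{i,\mathcal{U}} = 0$ is equivalent to ``$\hat z_i$ does not depend on $\varrho_\ell$ for any $\ell \in \mathcal{U}$'' (read off the unit coefficient of $\varrho_\ell$ inside $z_\ell$), and this is the form I would establish.

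\textbf{Step 2 (marginal invariance $\Rightarrow$ algebraic constraints on the loadings).} I would process the $\mathcal{U}$-nodes in reverse topological order and show inductively that $A_{i,\ell} = 0$. Suppose $A_{i,m} = 0$ for every $m \in \mathcal{U}$ strictly downstream of $\ell$; then $\varrho_\ell$ enters $\hat z_i$ only through $A_{i,\ell}\varrho_\ell$, so $\hat z_i = A_{i,\ell}\varrho_\ell + W_\ell$ with $W_\ell \perp \varrho_\ell$. Consider an interventional domain $j$ with intervened pair $\{\ell, P\}$. If $\varrho_P$ also does not enter $W_\ell$ (for instance because $A_{i,P}=0$ has already been shown, together with the loadings of $P$'s relevant descendants), then $W_\ell$ has the same law in domains $1$ and $j$, and equating the characteristic functions of $\hat z_i$---each a Gaussian convolution---forces $A_{i,\ell}^2 (\sigma_\ell^{2,(1)} - \sigma_\ell^{2,(j)}) = 0$; since $\sigma_\ell^{2,(1)}$ and $\sigma_\ell^{2,(j)}$ are independent draws from the non-atomic density $p_{\sigma_\varrho}$ they differ almost surely, whence $A_{i,\ell} = 0$. (More generally, once downstream loadings are cleared, each intervened pair $\{\ell, P\}$ yields a quadratic relation $A_{i,\ell}^2\,\Delta_\ell^{(j)} + A_{i,P}^2\,\Delta_P^{(j)} = 0$ with $\Delta_m^{(j)} := \sigma_m^{2,(j)} - \sigma_m^{2,(1)} \neq 0$ a.s.)

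\textbf{Step 3 (probabilistic covering).} By \Cref{assm: multi_int_str} each $\ell \in \mathcal{U}$ is the chosen node of $t$ domains whose partners are i.i.d.\ uniform on $\mathcal{U}\setminus\{\ell\}$. Call a draw \emph{good} if the partner is one of the nodes whose loading has already been forced to zero---e.g.\ a node downstream of $\ell$, or, as the base of the induction, a terminal node; here the assumption of at least two terminal nodes guarantees a nonempty pool of such anchors inside $\mathcal{U}$. A good draw collapses the Step-2 relation to $A_{i,\ell}^2 \Delta_\ell^{(j)} = 0$, hence $A_{i,\ell}=0$. Each of $\ell$'s $t$ draws is good with probability at least $1/(2d)$, independently, so the probability that none is good is at most $(1 - 1/(2d))^t$; a union bound over the at most $d$ nodes of $\mathcal{U}$ together with $t > \log(d/\delta)/\log\!\big(1/(1 - 1/(2d))\big)$ bounds the total failure probability by $\delta$. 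On the complementary event $A_{i,\mathcal{U}}=0$ for every $i\in\hat{\mathcal{S}}$, and Step 1 yields $\hat z_{\hat{\mathcal{S}}} = D z_{\mathcal{S}} + e$.

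\textbf{Expected main obstacle.} The delicate parts are Steps 2--3. First, converting ``the marginal law of $\hat z_i$ is domain-invariant'' into a clean algebraic statement about the loadings is nontrivial when the $q_i$ are nonlinear, since then $z$ is \emph{not} jointly Gaussian---this is precisely why one must peel latents in reverse topological order and why the ``children of $\mathcal{U}$ are in $\mathcal{U}$'' and two-terminal-node hypotheses are needed to anchor the induction. Second, and more fundamentally, one must certify that the random two-node interventions actually ``reach'' every unstable coordinate; the stated bound on $t$ is exactly the budget that makes this reachability hold with probability $1-\delta$. The Gaussian and two-node assumptions only keep these two steps clean: relaxing them (as the Appendix promises) should amount to replacing characteristic-function matching by a moment/cumulant computation and rerunning the same covering argument.
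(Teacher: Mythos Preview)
Your overall architecture matches the paper's: affine identification via \Cref{thm1}, then a reverse-topological induction on $\mathcal{U}$ using the Gaussian noise structure and Constraint~\ref{assm: dist_inv} to kill $A_{i,\mathcal{U}}$, wrapped in a probabilistic covering argument. The inductive step (partner already has zero loading, so the characteristic-function identity collapses to $A_{i,\ell}^2\Delta_\ell^{(j)}=0$) is correct and is exactly what the paper does.

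The genuine gap is the \emph{base case}. When you process the very first terminal node $\ell$, no loading has yet been cleared, so your definition of a ``good draw'' (partner already forced to zero) is vacuous and has probability $0$, not $1/(2d)$. Pairing $\ell$ with another terminal $P$ gives only the single relation
\[
A_{i,\ell}^2\,\Delta_\ell^{(j)} + A_{i,P}^2\,\Delta_P^{(j)} = 0,
\]
which does \emph{not} force $A_{i,\ell}=A_{i,P}=0$: if $\Delta_\ell^{(j)}$ and $\Delta_P^{(j)}$ have opposite signs there are nontrivial solutions. You quote this quadratic relation in Step~2 but never say how to break it. The paper's fix is to strengthen ``good intervention'' to additionally require that the two variance changes have the \emph{same sign} (both increase or both decrease relative to domain~$1$); then the two terms on the left have the same sign and the only solution is $A_{i,\ell}=A_{i,P}=0$. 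By symmetry of the i.i.d.\ draws from the non-atomic $p_{\sigma_\varrho}$ this same-sign event has probability $1/2$, and that is exactly the source of the factor $1/2$ in the $1/(2d)$ bound you imported from the theorem statement without explaining. Once both terminal loadings are killed in this way, the rest of your induction (with a fixed terminal node as partner) goes through verbatim, and your union bound in Step~3 is then correct.
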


The proof of Theorem~\ref{thm3} is in the Appendix. Theorem~\ref{thm3} established that, given sufficiently many random multi-node interventions, we can block identify the stable latents $z_{\mathcal{S}}$. Moreover, the required number of domains scales as $d\Big(\frac{\log(d/\delta)}{\log(1/(1-1/2d))}\Big)$. 
Before closing this section, we remark that the crucial assumptions that make these results possible involve diversity of interventions and using the structure of the causal model. While we study some relaxations, we believe these results can inspire a lot of exciting future work.

\subsection{General Distributions $p_z$}

In the previous section, we made the standard assumption that the relationships between the latents $z$ generating the data $x$ are described by a fixed DAG. In this section, we study a relaxation that is suited to more complex multi-domain datasets, where a fixed DAG is insufficient to capture the complexities of the entire data. For example, in the cow-on-the-beach example, the relationship of the cow to its surroundings changes across samples \citep{beery2018recognition}. We consider  a weaker invariance than one considered in the previous section, i.e., the support of each latent in the target set $\mathcal{S}$ is invariant.  Under these relaxations, we prove that one can still identify the stable latents, except that the number of required domains is much larger. We will also discuss how additional assumptions can help reduce this number in the Appendix. Below we begin by stating the invariance condition. The support of $z_i$ in domain $p$ is denoted as $\mathcal{Z}_{i}^{(p)}$ and the support of estimate $\hat{z}_i$ in domain $p$ is denoted as $\hat{\mathcal{Z}}_i^{(p)}$. 

\begin{assumption} (Marginal support invariance.)

\label{assm: supp_invar}
   For each $i \in  \mathcal{S}$
     \begin{align*} 
       \min_{z \in \mathcal{Z}_{i}^{(p)}} z  =  \min_{z \in \mathcal{Z}_{i}^{(q)} } z, \quad \max_{z \in \mathcal{Z}_{i}^{(p)}} z  =  \max_{z \in \mathcal{Z}_{i}^{(q)}}, \quad  \forall p, q\in [k].
    \end{align*}
\end{assumption}

 We now state a key assumption for the next result: there exists a pair of domains whose supports are sufficiently different. We make this notion mathematically precise below.
\begin{assumption}[Support variability]
\label{assm: sup_var}
There exists two domains $p,q \in [k]$ such that for each $z \in \mathcal{Z}^{(p)}$, there exists a $z^{'} \in \mathcal{Z}^{(q)}$ such that $z^{'} \succcurlyeq z$, namely  each component of $z^{'}$ is greater than or equal to $z$, i.e., $z^{'}_i\geq z_i$. Further, we require that the inequality is strict for unstable components  $j \in \mathcal{U}$, $z^{'}_j > z_j$. 
\end{assumption}
We illustrate the above assumption using an example in Figure~\ref{fig:supp_assum_illus}. The two domains shown in Figure~\ref{fig:supp_assum_illus} satisfy Assumption~\ref{assm: supp_invar}, \ref{assm: sup_var}. The latent $z_1$ in Domains 1 and 2 satisfies support invariance (Assumption~\ref{assm: supp_invar}). The latents $z=[z_1,z_2]$ in Domains 1 and 2 satisfy Assumption~\ref{assm: sup_var}. We now state the invariance constraint that enforces that the latents in subset $\hat{\mathcal{S}}$ have the same minimum and maximum across domains. 


\begin{constraint} (Marginal support invariance)
\label{assm: supp_inv}

For each $i \in \hat{\mathcal{S}}$, 
   $$\min_{z \in \hat{\mathcal{Z}}_{i}^{(p)}} z  =  \min_{z \in \hat{\mathcal{Z}}_{i}^{(q)} } z,  \;\;\; \max_{z \in \hat{\mathcal{Z}}_{i}^{(p)}} z  =  \max_{z \in \hat{\mathcal{Z}}_{i}^{(q)} }, \;  \forall p, q\in [k].$$ 

\end{constraint}

\begin{figure}
\centering
\includegraphics[width = 2.8in]{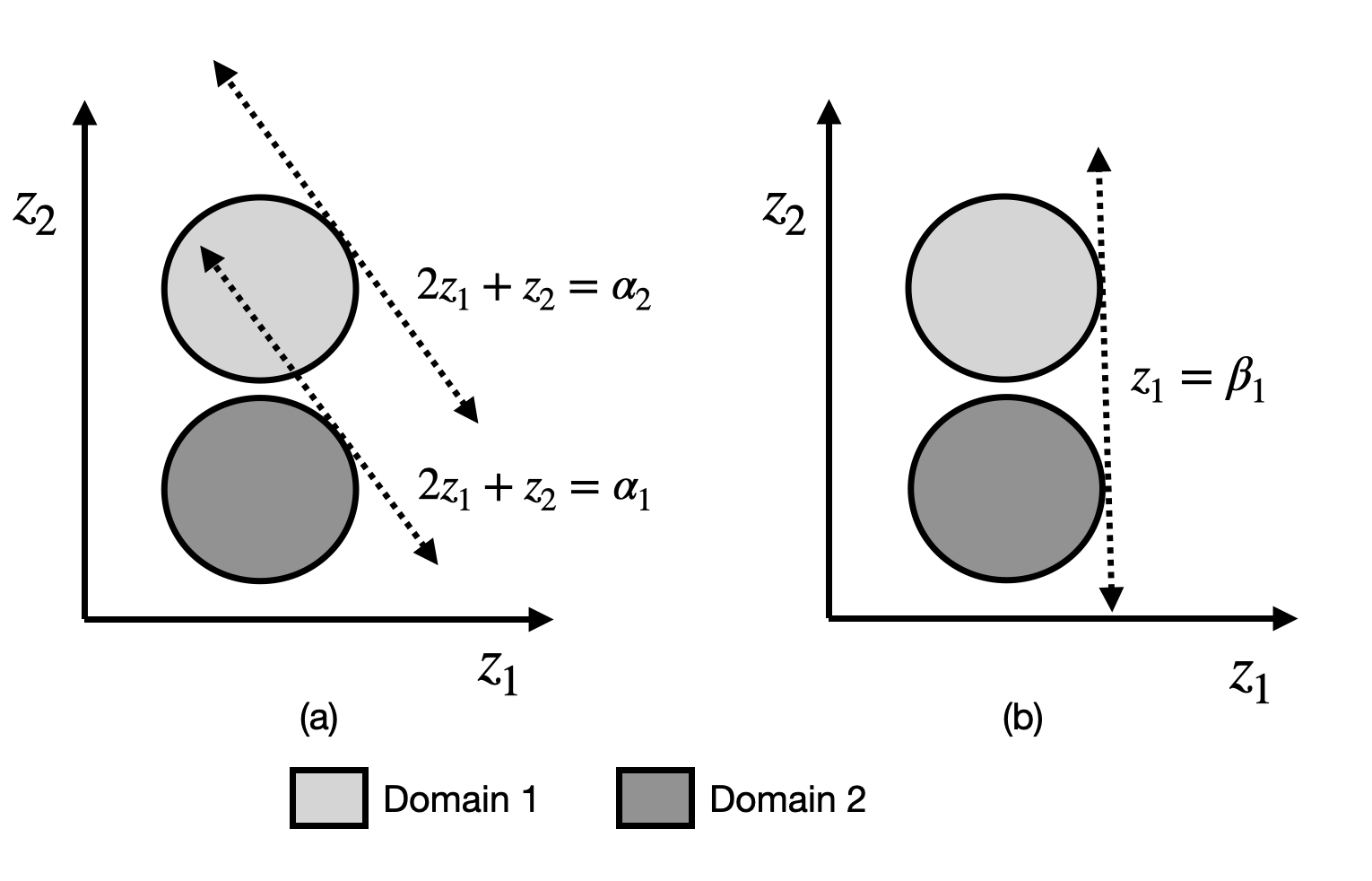}
	\caption{$z_1$ satisfies support invariance (Assumption~\ref{assm: supp_invar}). $[z_1,z_2]$ satisfies support variability (Assumption~\ref{assm: sup_var}). In panel a), we show that if $\hat{z}_1$ linearly depends on both $z_1$ and $z_2$, then it achieves a different maximum value across the two domains. Thus, support invariance (Constraint~\ref{assm: supp_inv}) is not satisfied by such functions that depend on both $z_1$ and $z_2$. In contrast, the function in panel b), which only depends on $z_1$, achieves the same maximum across domains and satisfies support invariance.}
 \label{fig:supp_assum_illus}
\end{figure}

Next, we use the above assumptions to provably identify the stable latents up to block affine transformations under polynomial mixing.

\begin{restatable}{theorem}{gmultipoly}
\label{thm4}
Suppose the multi-domain data is generated 
from  equation~\ref{eqn: dgp} and satisfies Assumptions~\ref{assm1: dgp1}, \ref{assm: supp_invar}, \ref{assm: sup_var}. Then the autoencoder that solves the reconstruction
identity in equation~\ref{eqn: recons} under Constraints \ref{assm3: h_poly_new} and \ref{assm: supp_inv}  achieves the following identification guarantees: Each latent component $i \in \mathcal{S}$ satisfies $\hat{z}_i = A_i^{\top}z + c_i$, where, among all the vectors $A_i\succcurlyeq 0$, the ones that are feasible under the assumptions and constraints in this theorem must satisfy $A_{ir}=0$ for all $r \in \mathcal{U}$. 
\end{restatable}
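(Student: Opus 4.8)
The plan is to build on Theorem~\ref{thm1}, which already gives affine identification $\hat z = A z + c$ for some invertible $A$ under Assumption~\ref{assm1: dgp1} and Constraint~\ref{assm3: h_poly_new}. Writing $A_i^\top$ for the $i$-th row of $A$, we have $\hat z_i = A_i^\top z + c_i$ for every $i$. Since the autoencoder additionally satisfies Constraint~\ref{assm: supp_inv}, the estimated components in $\hat{\mathcal{S}}$ must have the same minimum and maximum across all domains. The goal is then purely geometric: show that any row $A_i \succcurlyeq 0$ (the nonnegativity is, I expect, a normalization/sign-convention step that can be argued by replacing $A_i$ with $-A_i$ and $c_i$ accordingly, handling each coordinate's sign, or is assumed WLOG as the theorem states) that yields an affine functional with domain-invariant extrema must be supported on $\mathcal{S}$, i.e. $A_{ir}=0$ for all $r\in\mathcal{U}$.

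First I would compute the maximum of $\hat z_i = A_i^\top z + c_i$ over a domain's support. Because $A_i \succcurlyeq 0$, the maximum over $\mathcal{Z}^{(p)}$ is attained (or approached) at a point that is componentwise maximal in the relevant coordinates; more carefully, $\max_{z\in\mathcal{Z}^{(p)}} A_i^\top z$ is governed by the ``upper envelope'' of the support. Now invoke Assumption~\ref{assm: sup_var}: there are domains $p,q$ such that every $z\in\mathcal{Z}^{(p)}$ is dominated by some $z'\in\mathcal{Z}^{(q)}$ with $z'_r > z_r$ strictly for all $r\in\mathcal{U}$. Take a sequence $z^{(n)}\in\mathcal{Z}^{(p)}$ approaching the supremum of $A_i^\top z$ over $\mathcal{Z}^{(p)}$, and let $z'^{(n)}\in\mathcal{Z}^{(q)}$ be the dominating points. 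Then
\begin{equation}
A_i^\top z'^{(n)} \;\geq\; A_i^\top z^{(n)} + \sum_{r\in\mathcal{U}} A_{ir}\,(z'^{(n)}_r - z^{(n)}_r),
\end{equation}
so if some $A_{ir}>0$ for $r\in\mathcal{U}$ and the strict gap $z'^{(n)}_r - z^{(n)}_r$ is bounded below by a positive constant (this is the point that needs care — see below), then $\sup_{z\in\mathcal{Z}^{(q)}} A_i^\top z$ strictly exceeds $\sup_{z\in\mathcal{Z}^{(p)}} A_i^\top z$. Combined with $c_i$ being a domain-independent constant, this gives $\max_{\hat z_i\in\hat{\mathcal{Z}}_i^{(q)}} \hat z_i > \max_{\hat z_i\in\hat{\mathcal{Z}}_i^{(p)}} \hat z_i$, contradicting Constraint~\ref{assm: supp_inv}. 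Hence $A_{ir}=0$ for all $r\in\mathcal{U}$, which is exactly the claim; the support-invariance of $z_{\mathcal{S}}$ itself (Assumption~\ref{assm: supp_invar}) is what guarantees the constraint is satisfiable, i.e. that such feasible $A_i$ exist (namely rows supported on $\mathcal{S}$ give $\hat z_i$ with matching extrema).

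The main obstacle I anticipate is making the strict-inequality argument rigorous when the supports are not closed or the suprema are not attained: Assumption~\ref{assm: sup_var} gives a strict gap $z'_r > z_r$ pointwise, but a priori this gap could shrink to zero along the maximizing sequence, washing out the strict separation of the two suprema. I would address this either by arguing from the structure of the supports (the interior is nonempty by Assumption~\ref{assm1: dgp1}, so one can perturb into the interior and pick up a uniform margin) or by a compactness/closure argument on a bounded portion of the support, or by reading Assumption~\ref{assm: sup_var} as implicitly providing a uniform margin (which the figure suggests). A secondary, more routine obstacle is the sign normalization that lets us assume $A_i \succcurlyeq 0$: one must check that flipping signs of individual coordinates of $A_i$ is compatible with the rest of the construction, or simply restrict attention — as the theorem statement does — to the feasible vectors satisfying $A_i \succcurlyeq 0$ and show they are forced to vanish on $\mathcal{U}$. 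Everything else — the decomposition $\hat z_i = A_i^\top z + c_i$, the monotone-envelope computation of the max, and reaching the contradiction with Constraint~\ref{assm: supp_inv} — is then straightforward bookkeeping.
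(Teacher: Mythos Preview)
Your proposal is correct and follows essentially the same route as the paper: invoke Theorem~\ref{thm1} for the affine relation $\hat z_i = A_i^\top z + c_i$, take the two domains $p,q$ from Assumption~\ref{assm: sup_var}, and derive a contradiction with Constraint~\ref{assm: supp_inv} whenever $A_{ir}>0$ for some $r\in\mathcal{U}$.

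The only difference is that the paper does not pass through sequences or worry about a vanishing gap. It simply writes $z^{\max,p}=\arg\max_{z\in\mathcal{Z}^{(p)}}A_i^\top z$, applies Assumption~\ref{assm: sup_var} to this \emph{single} point to obtain a dominating $z'\in\mathcal{Z}^{(q)}$ with $A_i^\top z'>A_i^\top z^{\max,p}$, and concludes $\max_{\mathcal{Z}^{(q)}}A_i^\top z>\max_{\mathcal{Z}^{(p)}}A_i^\top z$ directly. This sidesteps your main anticipated obstacle: once the extremum is attained, a single strict pointwise inequality suffices and no uniform margin is needed. The attainment is implicit in the paper's framing, since both Assumption~\ref{assm: supp_invar} and Constraint~\ref{assm: supp_inv} are stated with $\min/\max$ rather than $\inf/\sup$; you can adopt the same convention and drop the sequence argument entirely. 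Your remark on the sign normalization is accurate --- the paper, like the theorem statement, simply restricts to $A_i\succcurlyeq 0$ and defers other orthants to the discussion following the theorem.
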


The proof of Theorem~\ref{thm4} is in the Appendix. 

\paragraph{Extending Theorem~\ref{thm4} beyond the positive orthant.} Theorem~\ref{thm4} leveraged the invariance assumption (Assumption~\ref{assm: supp_invar}) to show that $\hat{z}_i$ only depends on the set of invariant latents in $\mathcal{S}$, provided that $A_i$'s are from the positive orthant, i.e., $A_i  \succcurlyeq 0$.  We next extend this argument to other orthants. Consider $A_{i}$'s from a different orthant with sign vector $s$, where each component of $s$ corresponds to the sign of the corresponding component of $A_i$. We multiply $z$ element-wise with $s$ and denote it as $\bar{z} =  z \cdot s$ and define the set of transformed latents of domain $q$ as $\bar{\mathcal{Z}}^{(q)} = \{z \cdot s, z \in  \mathcal{Z}^{(q)}\}$. If we modify Assumption~\ref{assm: sup_var} with set $\bar{\mathcal{Z}}^{(q)}$ instead of $\mathcal{Z}^{(q)}$, then the condition in Theorem~\ref{thm4} extends to all vectors $A_i$ in orthant with sign vector $s$. Given this assumption, we require a pair of domains that satisfy a condition analogous to the one in Assumption~\ref{assm: sup_var} for each orthant. Since the total number of orthants is $2^{d}$, the total number of domains required grows as $2^{d+1}$. In Appendix~\ref{sec:poly}, we show that the number of domains required can be reduced to $d$ under some additional structural assumptions, e.g. the support is a polytope.

In Theorem~\ref{thm4}, we relied on the assumption that $g$ is a polynomial. We next relax this assumption. For ease of exposition, we consider the two-variable case and present the general case in the Appendix.

\paragraph{Two-variable case.} Consider two-dimensional $z$'s, i.e., $z=[z_1,z_2]$. We assume that the support of the first component $z_1$ is invariant across domains and the support of $z_2$ varies across domains. For the rest of this section, we assume that $z_1$ and $z_2$ are bounded between $0$ and $1$ across all domains. Specifically, the support of $z_1$ satisfies Assumption~\ref{assm: supp_invar} and is set to the entire interval $[0,1]$ across domains.   Recall that the support of the first component of the encoder in domain $p$ is $\hat{\mathcal{Z}}_{1}^{(p)}$. Under the support invariance constraint (Constraint~\ref{assm: supp_inv}), we require that $\hat{\mathcal{Z}}_{1}^{(p)}$ does not vary with $p$.  Recall $\hat{z} = f(x) =  a(z)$, where $a= f \circ g$. The first component of $\hat{z}$ thus satisfies $\hat{z}_1= a_1(z)$, where $a_1$ is the first component of  the map $a$. Under this notation, we define a large class of functions $\Gamma$ and show that, if the supports are sufficiently diverse, then $a_1$ cannot be an element of $\Gamma$, provided that the Constraint~\ref{assm: supp_inv} is enforced -- we call this \emph{$\Gamma^{c}$ identification}. The larger the set $\Gamma$ is,  the more likely $a_1(\cdot)$ is equal to a map that only depends on $z_1$, which is the ideal situation. In contrast, if Constrain~\ref{assm: supp_inv} is not enforced, then all the invertible maps $a(\cdot)$ will be allowed under reconstruction identity in equation~\eqref{eqn: recons}.  Below we state the result formally.

\begin{definition} 
\label{def: lipschitz_a}
Fix some constants $\eta>0$, $\varepsilon>0$, and $\iota>0$. We then define a set of functions $\Gamma$ as follows. Each function $\gamma_{\theta}: [0,1]\times[0,1] \rightarrow \mathbb{R}$ in $\Gamma$  satisfies i)  it is parameterized by $\theta \in \Theta$, where $\Theta$ is a bounded subset of $\mathbb{R}^s$,ii) the minima of $\gamma_{\theta}$ over $[0,1] \times [0,1]$ lie in the $\varepsilon$ interior of the set, i.e., in $[\varepsilon, 1-\varepsilon]\times[\varepsilon, 1-\varepsilon]$, and iii) there exists an interval $[\alpha^{\dagger}, \beta^{\dagger}]$  of width at least $\iota$ such that 
\begin{equation}
\left|\min_{z \in [0,1]\times[0,1]} \gamma_{\theta}(z_1, z_2) - \min_{z\in [0,1]\times [\alpha^{\dagger},\beta^{\dagger}]} \gamma_{\theta}(z_1,z_2)\right| \geq \eta.
\label{eqn:eta_min_diff}
\end{equation}
  For each $(z_1,z_2) \in [0,1]\times [0,1] $, $\gamma_{\theta}$ is Lipschitz continuous in $\theta \in \Theta$ with Lipschitz constant $L$.
\end{definition}

In simple words, $\Gamma$ consists of functions $\gamma_{\theta}$ whose minima over the entire support $[0,1] \times [0,1]$ is $\eta$ better than any other minima obtained by constraining $z_2$ to some interval. In particular, the functions that only depend on $z_1$ do not belong to $\Gamma$ because the minima of such a map do not depend on $z_2$.  A simple illustrative example of the function class $\Gamma$ is as follows: $\gamma_{\theta}:[0,1]\times [0,1] \rightarrow \mathbb{R},$ $\gamma_{\theta}(z_1,z_2) = (z_1-\frac{1}{2})^2 + (z_2-\theta)^2$, where $\theta \in [\frac{1}{2}\varepsilon, 1-\frac{3}{2}\varepsilon]$. This function has its minima over $[0,1]\times [0,1] $ at $(\frac{1}{2}, \theta)$. The function is Lipschitz continuous in $\theta$ for all $(z_1,z_2) \in [0,1] \times [0,1]$. Set $\eta=\frac{\varepsilon^2}{4}$ and $\alpha^{\dagger} = \theta + \frac{\varepsilon}{2}$ and $\beta^{\dagger}=\theta + \frac{5}{8}\varepsilon$; then the conditions in Definition~\ref{def: lipschitz_a} are satisfied.  This example illustrates how these conditions are satisfied when $\gamma_{\theta}$ has one unique global minima over the region $[0,1]\times[0,1]$.  We now state an assumption that requires that the domains are drawn at random and their supports satisfy a certain variability condition.

\begin{assumption}[Support variability]
\label{assm: diverse_int}
 The support of $z_1$ does not vary across domains and is fixed to be $[0,1].$ The support of $z_2$ satisfies $\mathbb{P}\big(\mathcal{Z}_{2}^{(p)} \subseteq [\alpha, \beta] \big) \geq c_1|(\beta-\alpha)|^{l}$  and $\mathbb{P}\big(\mathcal{Z}_{2}^{(p)} \supseteq [\kappa, 1-\kappa]\big) \geq c_2\kappa^r$, where $l$ and $r$ are some integers, $c_1$, $c_2$ are some constants and~$\alpha,\beta,\kappa \in [0,1]$.  
\end{assumption}

The first condition on $z_2$ in Assumption~\ref{assm: diverse_int} states that the probability of the support of $z_2$ in a randomly drawn domain being contained in the interval $[\alpha, \beta]$ grows faster than a polynomial in $(|\beta-\alpha|)$.  The second condition states that the support of $z_2$ captures the set $[\kappa, 1-\kappa]$ with probability at least $c_2\kappa^r$. Below we give an example where these conditions are satisfied: suppose the support of $z_2$ is sampled as follows. Sample two random variables $A$ and $B$ independently from the uniform distribution over the interval $[0,1]$. Define the upper and lower limit of the supports as $\max\{A,B\}$ and  $\min\{A,B\}$ respectively.  In this case, the probabilities in  Assumption~\ref{assm: diverse_int} are given as $(\beta-\alpha)^{2}$ and~$2\kappa^2$.

The next result builds on the following insight. If we sample sufficiently many diverse domains, then it is likely that, for each map $\gamma_{\theta} \in \Gamma$, we encounter two domains such that the values at the minima are at least $\eta$ apart as in Definition~\ref{def: lipschitz_a}. Thus, $\hat{z}_1$ constructed from any member of $\Gamma$ violates the support invariance constraint and thus $a_1$ is not in $\Gamma$.  

Define  $N(\delta, \varepsilon, \eta, \iota) = N_c\log\big(\frac{2N_c}{\delta}\big)\Big(\frac{1}{\log\big(\frac{1}{(1-c_1\iota^{l})}\big)} + \frac{1}{\log\big(\frac{1}{(1-c_2\varepsilon^{r})}\big)}\Big),$
with $N_c = \bigg( \frac{2 \max_{\theta \in \Theta}\|\theta\| \sqrt{s}}{\rho}\bigg)^{s}$, and $\rho=\frac{\eta}{4L}$.

\begin{restatable}{theorem}{gmultidiff}
\label{thm5} If we gather data generated from equation~\eqref{eqn: dgp}, where the support of $z_2$ for each domain is sampled i.i.d. from Assumption~\ref{assm: diverse_int} and support of $z_1$ is fixed to $[0,1]$. Further, suppose the number of domains satisfies $k\geq N(\delta, \varepsilon, \eta, \iota)$. Then the set of maps $a_1(\cdot)$ that relate $\hat{z}_1$ to $[z_1,z_2]$ does not contain any function from $\Gamma$ and thus achieves $\Gamma^{c}$ identification, where $\hat{z}$ is obtained by solving the reconstruction identity (equation~\ref{eqn: recons}) under support invariance constraint (Constraint~\ref{assm: supp_inv}) on $\hat{z}_1$. 
\end{restatable}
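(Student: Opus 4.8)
\textbf{Proof proposal for Theorem~\ref{thm5}.}

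The plan is to argue by contradiction: suppose the solved encoder yields $\hat{z}_1 = a_1(z_1,z_2)$ with $a_1 \in \Gamma$, and show that with high probability over the random draw of domains, the support invariance constraint (Constraint~\ref{assm: supp_inv}) is violated for this particular $a_1$, contradicting the premise that the autoencoder solves the reconstruction identity under the constraint. The core mechanism is the one flagged in the paragraph before Definition~\ref{def: lipschitz_a}: a function $\gamma_\theta \in \Gamma$ by property (iii) has an interval $[\alpha^\dagger,\beta^\dagger]$ of width $\ge \iota$ on which the constrained minimum differs from the global minimum over $[0,1]^2$ by at least $\eta$. So I first fix a single $\theta$ and observe that the minimum of $\hat{z}_1$ over domain $p$ is $\min_{z_1\in[0,1], z_2 \in \mathcal{Z}_2^{(p)}} \gamma_\theta(z_1,z_2)$ (using that $z_1$'s support is always $[0,1]$). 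If some domain $p$ has $\mathcal{Z}_2^{(p)} \subseteq [\alpha^\dagger,\beta^\dagger]$ then its min is $\ge \min_{[0,1]\times[\alpha^\dagger,\beta^\dagger]}\gamma_\theta$, while if some domain $q$ has $\mathcal{Z}_2^{(q)} \supseteq [\varepsilon,1-\varepsilon]$ then, because property (ii) places the global minimizer in the $\varepsilon$-interior, domain $q$'s min equals the true global min over $[0,1]^2$. Combining with \eqref{eqn:eta_min_diff}, the two domain-minima differ by $\ge \eta$, which falsifies Constraint~\ref{assm: supp_inv} on $\hat{z}_1$.

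Next I handle the fact that $a_1$ is unknown and ranges over the whole (uncountable) class $\Gamma$. This is where the Lipschitz-in-$\theta$ condition and the covering-number factor $N_c$ enter. I cover $\Theta$ (a bounded subset of $\mathbb{R}^s$) by an $\ell_2$-net of radius $\rho = \eta/(4L)$; the number of balls is at most $N_c = (2\max_{\theta}\|\theta\|\sqrt{s}/\rho)^s$, which matches the definition in the excerpt. For each net center $\theta_m$, I want the event $E_m$ that among the $k$ sampled domains there is at least one with $\mathcal{Z}_2^{(p)} \subseteq [\alpha^\dagger(\theta_m),\beta^\dagger(\theta_m)]$ \emph{and} at least one with $\mathcal{Z}_2^{(q)} \supseteq [\varepsilon, 1-\varepsilon]$. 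By Assumption~\ref{assm: diverse_int} each domain independently satisfies the first inclusion with probability $\ge c_1|\beta^\dagger-\alpha^\dagger|^l \ge c_1 \iota^l$ (using width $\ge \iota$) and the second with probability $\ge c_2\varepsilon^r$. Hence $\Probability(E_m^c) \le (1-c_1\iota^l)^{k'} + (1-c_2\varepsilon^r)^{k'}$ after splitting the $k$ domains appropriately; choosing $k \ge N(\delta,\varepsilon,\eta,\iota)$ makes each of these two terms $\le \delta/(2N_c)$, so $\Probability(E_m^c)\le \delta/N_c$. A union bound over the $N_c$ net centers gives that, with probability $\ge 1-\delta$, \emph{every} net center $\theta_m$ enjoys such a pair of domains.

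Finally I transfer from the net back to all of $\Gamma$. Given any $\theta \in \Theta$, pick the nearest net center $\theta_m$ with $\|\theta-\theta_m\|\le\rho$. The Lipschitz property gives $|\gamma_\theta(z) - \gamma_{\theta_m}(z)| \le L\rho = \eta/4$ uniformly in $z$, hence each of the relevant minima (global over $[0,1]^2$, and constrained to $[0,1]\times[\alpha^\dagger,\beta^\dagger]$, and over $[0,1]\times\mathcal{Z}_2^{(p)}$ for the two witnessing domains) moves by at most $\eta/4$ when passing from $\theta_m$ to $\theta$. Since $\theta_m$ already had a domain-pair with min-gap $\ge \eta$, the same domain-pair gives $\theta$ a min-gap $\ge \eta - 4\cdot(\eta/4) \ge $ something strictly positive; one should set the net radius a hair finer (e.g. $\rho = \eta/(5L)$, or absorb constants) so the surviving gap is bounded below, say $\ge \eta/5 > 0$. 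Either way, for every $\theta\in\Theta$ the function $\gamma_\theta$ violates Constraint~\ref{assm: supp_inv}, so no member of $\Gamma$ can be the realized $a_1$; this is exactly $\Gamma^c$ identification.

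\textbf{Main obstacle.} The delicate point is not the probabilistic counting but the deterministic geometric claim underlying it: I must be sure that for a domain with $\mathcal{Z}_2^{(q)}\supseteq[\varepsilon,1-\varepsilon]$ the \emph{unconstrained} global minimum of $\gamma_\theta$ over $[0,1]^2$ is actually attained within that domain's support — this is precisely what property (ii) of Definition~\ref{def: lipschitz_a} (minima in the $\varepsilon$-interior) buys us, and I need to make sure the $\varepsilon$ in Assumption~\ref{assm: diverse_int} (the $[\kappa,1-\kappa]$ with $\kappa=\varepsilon$) is the \emph{same} $\varepsilon$. A second subtlety is bookkeeping the Lipschitz perturbation across \emph{three} different minima simultaneously (global, $[\alpha^\dagger,\beta^\dagger]$-constrained, and domain-constrained) and making sure the $\eta/4$-slack in the definition of $\rho$ is enough after all three are perturbed; this forces the constant $4$ (vs.\ $2$ or $3$) in $\rho=\eta/(4L)$, and I would double-check whether it should really be a slightly larger constant. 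Everything else — the two-sided inclusion probabilities, the geometric-series tail bounds producing the two $\log$ terms in $N(\delta,\varepsilon,\eta,\iota)$, and the union bound — is routine.
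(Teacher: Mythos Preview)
Your proposal is essentially correct and tracks the paper's proof closely: cover $\Theta$ at scale $\rho=\eta/(4L)$, use Assumption~\ref{assm: diverse_int} to guarantee (with high probability, via union bound over the $N_c$ centers) a witnessing domain pair for each net center, then transfer to all of $\Gamma$ by Lipschitz continuity.

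The one place you overcomplicate matters is the Lipschitz transfer. You worry about perturbing three or four minima (global, $[\alpha^\dagger,\beta^\dagger]$-constrained, and the two domain minima) and conclude the surviving gap might be $\eta-4L\rho=0$, prompting your suggestion to tighten $\rho$. This is unnecessary. Once you have fixed, for the net center $\theta_m$, a concrete pair of domains $p,q$ with $\min_{[0,1]\times\mathcal{Z}_2^{(p)}}\gamma_{\theta_m}-\min_{[0,1]\times\mathcal{Z}_2^{(q)}}\gamma_{\theta_m}\ge\eta$, you reuse \emph{that same pair} for the nearby $\theta$. The interval $[\alpha^\dagger(\theta),\beta^\dagger(\theta)]$ and the global minimizer of $\gamma_\theta$ are irrelevant at this stage: you only need that $\min_S\gamma_\theta$ differs from $\min_S\gamma_{\theta_m}$ by at most $L\rho$ for any fixed set $S$, applied to the two domain supports. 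Hence the gap for $\theta$ is at least $\eta-2L\rho=\eta/2$, and $\rho=\eta/(4L)$ is exactly the right choice. This is how the paper handles it, and it removes the bookkeeping concern you flagged.

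A minor remark: your domain-allocation (reuse all $k$ domains for every net center, then union bound) is slightly more efficient than the paper's, which partitions the $k$ domains into $N_c$ disjoint blocks, one per center. Both are valid and both are covered by the stated threshold $k\ge N(\delta,\varepsilon,\eta,\iota)$.
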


The proof of Theorem~\ref{thm5} is in the Appendix. The results studied in this section relied on support variability assumptions. While we study some variations in the Appendix, we believe there is room for new results on multi-domain datasets that are beyond one DAG explaining the entire observational data assumption. In the previous two sections, we saw two types of mixing -- a) polynomial mixing (Theorem~\ref{thm3},\ref{thm4}), b) general diffeomorphisms (Theorem~\ref{thm5}).  The results in a) rely on affine identification guarantees afforded by the polynomial mixing. Under different  assumptions on $g$  that afford affine identification, the results in Theorem~\ref{thm3},\ref{thm4} can be extended. The seminal result in \citet{donoho2003hessian} established affine identification for locally isometric $g$. 

\section{Learning Invariance-Constrained Representations} 
In this section, we describe practical criteria to learn autoencoders described in equation~\eqref{eqn: recons} under invariance constraints from equation~\eqref{eqn: const}.   We will learn in two stages. In the first stage, we learn an autoencoder ($\tilde{f}, \tilde{h}$) that minimizes the reconstruction error -- $  \mathbb{E}\big[\|h\circ f (x) - x \|^2\big] $, where the expectation is taken over the distribution of the raw input data $x$. In Stage $2$, we use the output of the encoder from Stage $1$ denoted as $\tilde{x}$ as inputs. In many cases, this output may have an affine relationship or a more structured relationship with the true latents than the raw inputs. In Stage $2$, we learn an autoencoder $(f^{\star}, h^{\star})$ that is constrained to satisfy certain invariances described in the previous section. We enforce these constraints by adding a penalty to the standard reconstruction error in autoencoders, i.e., the learning objective takes the form 
\begin{equation}
    \mathbb{E}\big[\|h\circ f (\tilde{x}) - \tilde{x} \|^2\big] + \lambda \cdot \mathrm{penalty},
\end{equation}

where the expectation is taken over the distribution of the outputs of the encoder from Stage $1$, $\tilde{x}$. In Constraint~\ref{assm: supp_inv}, we require that the smallest and the largest values to satisfy invariance. The penalty corresponding to this constraint is stated as 
\begin{equation} 
\begin{split}
 \sum_{p\not= q}\sum_{i\in \hat{\mathcal{S}}}\Big(\big(\min_{z\in \tilde{\mathcal{Z}}_{i}^{(p)}} z-\min_{z\in \tilde{\mathcal{Z}}_{i}^{(q)}} z\big)^2 + \big(\max_{z\in \tilde{\mathcal{Z}}_{i}^{(p)}} z-\max_{z\in \tilde{\mathcal{Z}}_{i}^{(q)}} z\big)^2 \Big),
\end{split}
\label{eqn: pen_minmax}
\end{equation}
where $\tilde{\mathcal{Z}}_{i}^{(p)}$ corresponds to the support of the $i^{th}$ component of $f^{\star}(\tilde{x})$ in domain $p$. We now describe a stronger form of invariance.   We can enforce the joint distribution of all components in $\hat{\mathcal{S}}$ to be invariant, which if enforced perfectly would satisfy both Constraint~\ref{assm: dist_inv} and \ref{assm: supp_inv}.  The penalty described below measures the maximum mean discrepancy (MMD) distance between the joint distributions $\hat{z}_{\hat{\mathcal{S}}}$ across all the domains:
\begin{equation} 
\begin{split}
 \sum_{p\not= q} \mathrm{MMD}(p_{\hat{z}_{\hat{\mathcal{S}}}}^{(p)}, p_{\hat{z}_{\hat{\mathcal{S}}}}^{(q)}).
\end{split}
\label{eqn: pen_mmd}
\end{equation}

\section{Empirical Findings} 

We carry out experiments to evaluate the invariance-constrained autoencoders in a host of settings that capture varying complexity of $g$ and varying complexity of the distribution $p_z$.   We study four different types of mixing maps $g$  -- i) linear mixing, ii) polynomial mixing, iii) image rendering of balls, iv) unlabeled colored MNIST data. We follow \citet{ahuja2022interventional} in the creation of datasets for both polynomial mixing and image rendering of balls.  Unlabeled colored MNIST is inspired from labeled colored MNIST used in \cite{arjovsky2019invariant}; note that the challenge posed by this version is significant as we do not use labels of the digits or colors while training to achieve block identification. Our multi-domain datasets respect the following invariance --  the distribution of a subset $\mathcal{S}$ of latents does not change across domains. On the other hand, the distributions of latents in $\mathcal{U}$ undergo change across domains. We particularly induce change by changing the support of latents in $\mathcal{U}$.  For each domain $j$ with distribution $p_z^{(j)}$, we study two types of distributions -- i) independent latents, ii) dependent latents. In the dependent latents data, the latents in $\mathcal{U}$ and $\mathcal{S}$ depend on each other. Further, for the dependent latents, the SCM for the latents is not fixed and it varies across data points and thus we call this setup as dynamic SCM (D-SCM). (Further details about data generation are deferred to the Appendix). 
\begin{figure}
    \centering
    \includegraphics[width=5.3in, trim=0 0 0 10mm]{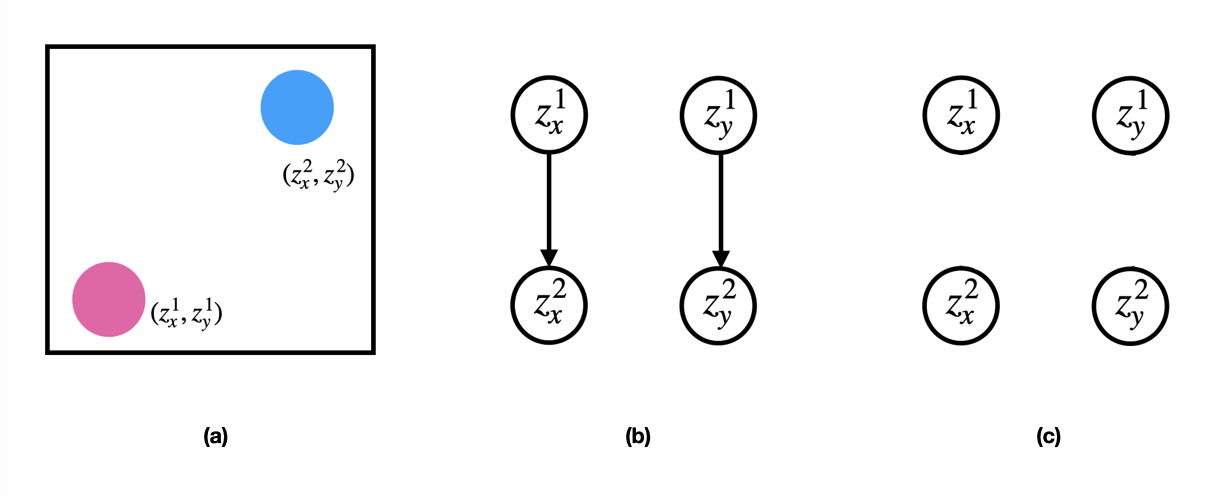}
    \caption{(a) Illustration of an image in balls image dataset. (b,c) The SCM dictating the relationship between the latents varies across data points. (b) For some samples, the coordinates of one ball depend on the other one. (c) For some samples, all causal variables are independent.}
    \label{fig2}
\end{figure}

Algorithmically, we employ the two-stage procedure described in the previous section. For the linear dataset,  we straight carry out Stage $2$ directly, because the raw inputs are already linearly related to the true latents.  However, for the polynomial and image datasets, we carry out the entire two-stage procedure.  For the polynomial dataset, we carry out Stage $1$ experiments with an MLP encoder and a polynomial decoder as prescribed in \citet{ahuja2022interventional}. For the image dataset, we carry out the Stage 1 experiment with a ResNet-based encoder and a simple ConvNet-based decoder. For both the polynomial and the image dataset, we use an MLP encoder-decoder for Stage $2$. We  train the Stage $2$ autoencoder under three different variations of the penalty described in the previous section -- i) support invariance penalty from \eqref{eqn: pen_minmax} (denoted Min-Max), ii) distribution invariance penalty using MMD distance from \eqref{eqn: pen_mmd} (denoted MMD), iii) combination of both support invariance and MMD based invariance (denoted MMD $+$ Min-Max).  Other experimental details can be found in the Appendix.  
\begin{table}[!h]
\small
    \centering
\begin{tabular}{ccc}
\toprule
      $p_Z$ &   Penalty                  &    $(R^2_{\mathcal{S}}, R^2_{\mathcal{U}})$\\
\midrule
         Independent &          Min-Max &   $(0.90\pm0.01, 0.10\pm0.01)$ \\
         Independent &           MMD              &  $(0.92\pm0.00, 0.16\pm0.01)$ \\
         Independent &          MMD $+$ Min-Max              &   $(0.94\pm0.01, 0.07\pm0.01)$ \\
         \midrule
        D-SCM &           Min-Max              & $(0.90\pm0.01, 0.10\pm0.01)$\\
          D-SCM &           MMD            & $(0.92\pm0.00, 0.16\pm0.01)$ \\
        D-SCM &          MMD $+$ Min-Max            &  $(0.97\pm0.00, 0.04\pm0.00)$ \\
         \midrule
\end{tabular}
\caption{Comparisons for linear mixing (latent dimension $d=32$, number of domains $k=16$) }
    \label{table1_results}
\end{table}

\begin{table}[!h]
\small
    \centering
\begin{tabular}{ccc}
\toprule
      $p_Z$ &   Penalty                  &    $(R^2_{\mathcal{S}}, R^2_{\mathcal{U}})$\\
\midrule
         Independent &          Min-Max &   $(0.91\pm0.01, 0.02\pm0.00)$ \\
         Independent &           MMD              &  $(0.93\pm0.01, 0.02\pm0.00)$ \\
         Independent &          MMD $+$ Min-Max              &   $(0.93\pm0.01, 0.02\pm0.00)$ \\
         \midrule
        D-SCM &           Min-Max              & $(0.93\pm0.00, 0.01\pm0.00)$\\
          D-SCM &           MMD            & $(0.95\pm0.00, 0.02\pm0.00)$ \\
        D-SCM &          MMD $+$ Min-Max            &  $(0.95\pm0.00, 0.01\pm0.00)$ \\
         \midrule
\end{tabular}
\caption{Comparisons for polynomial mixing (latent dimension $d=14$, polynomial degree $3$, number of domains $k=16$).}
    \label{table2_results}
\end{table}

\begin{table}[!h]
\small
    \centering
\begin{tabular}{ccc}
\toprule
      $p_Z$ &   Penalty                  &    $(R^2_{\mathcal{S}}, R^2_{\mathcal{U}})$\\
\midrule
        Independent &          Min-Max &   $(0.65\pm0.01, 0.19\pm0.01)$ \\
         Independent &           MMD              &  $(0.63\pm0.04, 0.27\pm0.05)$ \\
        Independent &          MMD $+$ Min-Max              &   $(0.81\pm0.04, 0.18\pm0.02)$ \\
         \midrule
        D-SCM &           Min-Max              & $(0.61\pm0.03, 0.22\pm0.01)$\\
          D-SCM &           MMD            & $(0.55\pm0.12, 0.15\pm0.04)$ \\
        D-SCM &          MMD $+$ Min-Max            &  $(0.82\pm0.02, 0.20\pm0.04)$ \\
         \midrule
\end{tabular}
\caption{Comparisons for  ball-images dataset (number of domains $k=16$).}
    \label{table3_results}
\end{table}

\begin{table}[!h]
\small
    \centering
\begin{tabular}{ccc}
\toprule
      $p_Z$ &   Penalty                  &    $(Acc_{\text{digits}}, R^2_{\text{color}})$\\
\midrule
        Independent &          Min-Max &   $(0.66\pm0.01, 0.49\pm0.02)$ \\
         Independent &           MMD              &  $(0.73\pm0.01, 0.63\pm0.02)$ \\
        Independent &          MMD $+$ Min-Max              &   $(0.74\pm0.01, 0.28 \pm 0.01)$ \\
         \midrule
        D-SCM &           Min-Max              & $(0.53\pm0.01, 0.43\pm0.02)$\\
          D-SCM &           MMD            & $(0.75\pm0.01, 0.65\pm0.02)$ \\
        D-SCM &          MMD $+$ Min-Max            &  $(0.72\pm0.02, 0.31\pm0.03)$ \\
         \midrule
\end{tabular}
\caption{Comparisons for  unlabeled colored MNIST dataset (number of domains $k=16$).}
    \label{table4_results}
\end{table}

\begin{table}[!h]
\small
    \centering
\begin{tabular}{ccc}
\toprule
      $g$ &   Domains                  &    $(R^2_{\mathcal{S}}, R^2_{\mathcal{U}})$\\
\midrule
         Linear &         $2$ &   $(0.33\pm0.01, 0.46\pm0.03)$ \\
         Linear &          $16$              &  $(0.97\pm0.00, 0.04\pm0.00)$ \\
         \midrule
        Polynomial &           $2$             & $(0.58\pm0.02, 0.07\pm0.01)$\\
         Polynomial &           $16$            & $(0.95\pm0.00, 0.01\pm0.00)$ \\ 
         \midrule
        Ball-images &          $2$            &  $(0.73\pm0.01, 0.35\pm0.02)$ \\
        Ball-images &          $16$            &  $(0.82\pm0.02, 0.20\pm0.04)$ \\
         \midrule
\end{tabular}
\caption{Results under varying number of domains.}
    \label{table5_results}
\end{table}

\begin{table}[!h]
\small
    \centering
\begin{tabular}{ccc}
\toprule
      $g$ &   Domains                  &    $(Acc_{\text{digits}}, R^2_{\text{color}})$\\
\midrule
         Unlabeled colored MNIST  & $2$  &$(0.73\pm0.02, 0.73\pm0.02)$ \\ 
         Unlabeled colored MNIST  & $16$ & $(0.74\pm0.01, 0.28\pm0.02)$ \\ 
         \midrule 
\end{tabular}
\caption{Results under varying number of domains.}
    \label{table6_results}
\end{table}

We evaluate the block affine identification of the models as follows. We predict $z_{\mathcal{S}}$ from $\hat{z}_{\hat{\mathcal{S}}}$ using a linear model and compute the $R^2$, which we denote as $R^2_{\mathcal{S}}$. We also predict $z_{\mathcal{U}}$ from  $\hat{z}_{\hat{\mathcal{S}}}$ using a linear model and compute the coefficient of determination $R^2$, which is denoted as $R^2_{\mathcal{U}}$. Here $\hat{\mathcal{S}}$ and $\hat{\mathcal{U}}$ are the set of latents on which invariance constraints are enforced and the set of latents on which no such constraints are enforced. High $R^2_{\mathcal{S}}$ and low $R^2_{\mathcal{U}}$ indicates block identification of the latents. For the unlabeled colored MNIST dataset, we do not have access to the $z$ corresponding to the digits. However, we have access to the labels of the digits for evaluation purposes. On this dataset, we predict the digit from $\hat{z}_{\hat{\mathcal{S}}}$ and predict the color from $\hat{z}_{\hat{\mathcal{S}}}$. We denote the accuracy of digit prediction as $Acc_{\text{digits}}$ and $R^2$ for predicting color as $R^2_{\text{color}}$. 

In \Cref{table1_results,table2_results,table3_results}, we show the results (averaged over five seeds) for independent latents and correlated latents (D-SCM) under linear mixing, polynomial mixing, and ball image rendering. For both linear and polynomial mixing, we find that all three types of penalties work well, i.e., the learned $\hat{z}_{\hat{\mathcal{S}}}$ achieves block affine disentanglement. For the ball-images dataset, we find that the combination of the MMD + Min-Max penalty works the best. In \Cref{table4_results}, we show the results for unlabeled colored MNIST dataset. Here we can see that the combination of the two penalties works much better as well. One important fact to underscore here is that unlabeled colored MNIST is more challenging than balls dataset and separation of color and digit attributes is even more non-trivial. Our approach achieves a noticeable degree of disentanglement in this setting without any supervision, which is quite remarkable given the challenge posed by this setting. In addition, Tables~\ref{table5_results} and \ref{table6_results} illustrate the role of the number of domains in identification. We find that increasing the number of domains helps achieve better identification; the number of required domains to achieve useful identification is less than the worst-case requirements in the theorems.

\section{Conclusions}

In this work, we advance the theory of multi-domain causal representation learning, making it applicable to multi-domain datasets from complex domain shifts (including multi-node imperfect interventions and beyond). We consider a simple invariance principle, namely certain distributional properties of the target latents remain invariant across domains. Following this invariance principle, we propose a class of autoencoders that enforce such weak distributional invariances. We establish identification guarantees of the stable latents for different invariances, ranging from weak invariance of the support to the stronger invariance on the marginal.  

\section*{Acknowledgements}
Yixin Wang was supported in part by the Office of Naval Research under grant
number N00014-23-1-2590 and the National Science Foundation under
Grant No. 2231174 and No. 2310831.

\bibliographystyle{apalike}
\bibliography{MD-CRL-ws.bib}

\clearpage
\newpage 
\appendix

\section*{Appendix}

\section{Theorems and Proofs}

\singlenode*

\begin{proof}
We begin by first checking that the solution to reconstruction identity under the above-said constraints exists. Set $f=g^{-1}$ and $h=g$ and $\hat{\mathcal{S}}=\mathcal{S}$. Firstly, the reconstruction identity is easily satisfied. Also, the  Constraint~\ref{assm: dist_inv} is satisfied as Assumption~\ref{assm: imp_int_structure} holds.  

We construct a proof based on the principle of induction. We sort the vertices in $\mathcal{U}$ in the reverse topological order based on the DAG to obtain a list $\mathcal{U}^{\star}$. We use the principle of induction on this sorted list.
Due to Assumption~\ref{assm: imp_int_structure}, it follows that the first node in the sorted list has to be a terminal node, say this node is $j$. Consider a component $\hat{z}_i$ of $\hat{z}_{\hat{\mathcal{S}}}$. From affine identification (follows from Theorem~\ref{thm1}), we already know that $\hat{z}_i = A_{i}^{\top}z+c_i$. Suppose $j$ undergoes an imperfect intervention in domain $p$. 
We write the invariance constraint condition equating the distribution of  $\hat{z}_i$ between domain $1$ and domain $p$ as

\begin{equation}
    \begin{split}
       & \hat{z}_{i}^{(1)} \stackrel{d}{=} \hat{z}_{i}^{(p)}, \\ 
       &  A_{i}^{\top}z^{(1)} \stackrel{d}{=} A_{i}^{\top}z^{(p)},  \\
       &  A_{i}^{\top}[z_j^{(1)}, z_{-j}^{(1)}] \stackrel{d}{=} A_{i}^{\top}[z_j^{(p)}, z_{-j}^{(p)}]. 
    \end{split}
\end{equation}

  Recall $z_{j}^{(q)} = q_j\big(z_{\mathrm{Pa}(j)}^{(q)}\big) + \varrho_j^{(q)}, \forall q \in [k]$. For all $q\in [k]$, define $w^{(q)} = A_{i,-j}^{\top}z_{-j}^{(q)} + A_{ij}q_j\big(z_{\mathrm{Pa}(j)}^{(q)}\big)$, where $A_{i,-j}$ is the vector of components in $A_{i}$ other than $A_{i,j}$ and $z_{-j}^{(q)}$ is the vector of all components of $z^{(q)}$ except $z_j^{(q)}$. Define $v^{(q)} = A_{ij}\varrho_{j}^{(q)}, \forall q \in [k]$. Substitute these in the above to obtain

  \begin{equation}
      w^{(1)} + v^{(1)} \stackrel{d}{=}  w^{(p)} + v^{(p)}. 
  \end{equation}

  We make some important observations now. Observe that $v^{(1)} \perp w^{(1)}$ and $v^{(p)} \perp w^{(p)}$. Also, since the intervention only changes the noise distribution of $j$ and leaves all rest nodes in the graph unaltered $w^{(1)} \stackrel{d}{=} w^{(p)} $.  We now write the moment generating function (MGF) of $ w^{(1)} + v^{(1)}$ and equate it to MGF of  $w^{(p)} +  v^{(p)}$ as follows. 
    \begin{equation}
        M_{w^{(1)}}(t) M_{v^{(1)}}(t) =  M_{w^{(p)}}(t) M_{v^{(p)}}(t) \label{eqn: proof_single_node1}
    \end{equation}
    Since $w^{(1)} \stackrel{d}{=} w^{(p)} $, the MGFs are equal. As a result, the MGFs of $v^{(1)}$ and $v^{(p)}$ are equal as well. If the MGFs are equal, then $v^{(1)}  \stackrel{d}{=} v^{(p)} $. If $A_{ij} \not =0$, then this implies $\varrho^{(1)} \stackrel{d}{=} \varrho^{(p)}$, which is a contradiction. Therefore, $A_{ij}=0$. This establishes the base case for the induction.

     \begin{equation}
    \begin{split}
       & \hat{z}_{i}^{(1)} \stackrel{d}{=} \hat{z}_{i}^{(s)}, \\ 
       &  A_{i}^{\top}z^{(1)} \stackrel{d}{=} A_{i}^{\top}z^{(s)},  \\
       &  A_{i}^{\top}[z_j^{(1)}, z_{-j}^{(1)}] \stackrel{d}{=} A_{i}^{\top}[z_j^{(s)}, z_{-j}^{(s)}]. 
    \end{split}
\end{equation}

In domain $s$, where node $j$ above is intervened, the only nodes that are impacted are $j$ and its descendants. In $w^{(q)} = A_{i,-j}^{\top}z_{-j}^{(q)} + A_{ij}q_j\big(z_{\mathrm{Pa}(j)}^{(q)}\big)$, the distribution of second term $A_{ij}q_j\big(z_{\mathrm{Pa}(j)}^{(q)}\big)$ is determinded by distribution of parents of $j$, which are not impacted. The first term $A_{i,-j}^{\top}z_{-j}^{(q)} $ comprises of both the descendants of $j$ and other non-descendants. Observe that all the descendants of $j$ precede it in the list $\mathcal{U}^{\star}$. As a result, all the coefficients in $A_{i,-j}$ corresponding to the descendants of $j$ are zero. Therefore, the distribution of the first term $A_{i,-j}^{\top}z_{-j}^{(s)} $ is same as distribution of $A_{i,-j}^{\top}z_{-j}^{(1)} $. On the whole, the distribution of $w^{(s)}$ is same as distribution of $w^{(1)}$. Also, since the contribution of descendants of $j$ in $w^{(q)}$ is zero, we can conclude that $v^{(q)} \perp w^{(q)}$. 
We now repeat the same argument as before.
 We now write the moment generating function (MGF) of $ w^{(1)} + v^{(1)}$ and equate it to MGF of  $w^{(s)} +  v^{(s)}$ as follows. 
    \begin{equation}
        M_{w^{(1)}}(t) M_{v^{(1)}}(t) =  M_{w^{(s)}}(t) M_{v^{(s)}}(t)
    \end{equation}
    Since $w^{(1)} \stackrel{d}{=} w^{(s)} $, the MGFs are equal. As a result, the MGFs of $v^{(1)}$ and $v^{(s)}$ are equal as well. If the MGFs are equal, then $v^{(1)}  \stackrel{d}{=} v^{(s)} $. If $A_{ij} \not =0$, then this implies $\varrho^{(1)} \stackrel{d}{=} \varrho^{(s)}$, which is a contradiction. Therefore, $A_{ij}=0$. This completes the proof.

\end{proof}

\paragraph{Extension of Theorem~\ref{thm2}} The DGP considered above has the form $z_{j}^{(q)} = q_j\big(z_{\mathrm{Pa}(j)}^{(q)}\big) + \varrho_j^{(q)}$. Alternatively, if we consider a new DGP that involves confounder $z_{j}^{(q)} = q_j\big(z_{\mathrm{Pa}(j)}^{(q)}, u_{\mathrm{Pa}(j)}^{(q)}\big) + \varrho_j^{(q)}$, where $ u_{\mathrm{Pa}(j)}^{(q)}$ are confounders that impact at least two latents but are not input to the mixing map $g$, i.e., $x\leftarrow g(z)$. The exact proof steps can be repeated for this more general data generation process provided the additive noise variable is independent of the parent variables, i.e., $\varrho_j^{(q)} \perp  \big(z_{\mathrm{Pa}(j)}^{(q)}, u_{\mathrm{Pa}(j)}^{(q)}\big)$. Observe that we have the following the crucial steps: i) affine identification, ii) $v^{(1)} \perp w^{(1)}$, $v^{(p)} \perp w^{(p)}$ and $w^{(1)} \stackrel{d}{=} w^{(p)} $, iii) product of MGFs based separation in equation~\eqref{eqn: proof_single_node1}, are not impacted by this change and as a result the proof of this extension goes through. 

Define $u(\delta) = \frac{\log\big(d/\delta\big)}{\log\big(1/(1-1/2d)\big)} $. We characterize \emph{good interventions} next. If a node $s$ is paired with terminal node $w$ and if the variance of both the intervened nodes increases or decreases in comparison to observational data, then $s$ undergoes a good intervention.

\begin{lemma}
\label{lemma: int_thresh}
Consider the random intervention mechanism described in Assumption~\ref{assm: multi_int_str}. If $t\geq u(\delta)$,  then with probability $1-\delta$ each node in $\mathcal{U}$ is involved in a good intervention with one of the terminal nodes.
\end{lemma}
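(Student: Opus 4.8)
The plan is to obtain the statement from a union bound over the (at most $d$) nodes in $\mathcal{U}$, so the real work is a single per‑node tail bound: I will show that for any fixed $s\in\mathcal{U}$, the probability that $s$ is never part of a good intervention over its $t$ rounds is at most $(1-\tfrac1{2d})^{t}$. Granting this, a union bound gives overall failure probability at most $d\,(1-\tfrac1{2d})^{t}$, and by the definition of $u(\delta)$ the hypothesis $t\ge u(\delta)$ is exactly the statement that $t\log(1-\tfrac1{2d})\le\log(\delta/d)$, i.e.\ $d\,(1-\tfrac1{2d})^{t}\le\delta$. So it remains only to prove the per‑node bound.

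For the per‑node bound, first I would fix $s\in\mathcal{U}$ and identify a terminal node to pair it with: Assumption~\ref{assm: multi_int_str} supplies at least two terminal nodes, and because children of $\mathcal{U}$‑nodes stay in $\mathcal{U}$ the relevant terminal nodes lie in $\mathcal{U}$; since $s$ is a single node, there is a terminal node $w\in\mathcal{U}\setminus\{s\}$. Next I would lower‑bound, for one of $s$'s rounds, the probability of the event ``$s$ is paired with $w$ and the intervention on $(s,w)$ is good.'' The pairing happens with probability $\tfrac1{|\mathcal{U}|-1}\ge\tfrac1{d-1}$, and the partner is drawn independently and afresh in each round. Conditioned on the pairing, the interventional variance of $s$ is a fresh draw from the non‑atomic density $p_{\sigma_\varrho}$ and its observational variance is an independent draw from the same density, so $\Pr(\sigma^{\mathrm{int}}_s>\sigma^{\mathrm{obs}}_s)=\tfrac12$; the same holds for $w$ and independently of $s$, hence the two noise variances move in the same direction with probability $\tfrac12$. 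Multiplying, each round independently yields a good intervention involving $s$ with probability at least $\tfrac1{2(d-1)}\ge\tfrac1{2d}$, and therefore the probability that none of the $t$ rounds does is at most $(1-\tfrac1{2d})^{t}$. Feeding this into the union bound above completes the argument.

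The delicate point — the one I would have to be most careful about — is treating the $t$ rounds as genuinely independent Bernoulli trials for the ``good intervention'' event: the observational variances are sampled once and shared by all rounds, so conditioning on them introduces positive correlation among the per‑round sign‑agreement events. The way to handle this cleanly is to argue marginally, as above (not conditioning on the observational draws), and to use that within the subset of rounds in which $s$ is paired with a fixed terminal node the interventional draws are i.i.d. A secondary, minor point is pinning down the reading of ``at least two terminal nodes'' so that a terminal node distinct from $s$ is always available inside $\mathcal{U}$; everything else (the elementary inequalities defining $u(\delta)$, the bound $|\mathcal{U}|\le d$) is routine.
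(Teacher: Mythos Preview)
Your approach is the same as the paper's: a per-node tail bound (per-round probability $\tfrac{1}{2(|\mathcal{U}|-1)}$ of a good intervention with a fixed terminal node, multiplied to $(1-p)^t$ over $t$ rounds) followed by a union bound over $\mathcal{U}$. The cosmetic difference---that you choose a terminal witness $w$ for each $s$ while the paper fixes one $w$ and unions over $\mathcal{U}\setminus\{w\}$---is immaterial.

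You correctly single out the delicate step, but your proposed handling does not close it. ``Arguing marginally'' gives the right per-round success probability $p$, but to pass from a marginal $p$ to a joint failure bound of $(1-p)^t$ you need independence (or negative association) across rounds; you do not have it, because the observational variances $\sigma^{\mathrm{obs}}_s,\sigma^{\mathrm{obs}}_w$ are drawn once and shared by all $t$ rounds. Conditionally on those draws the per-round success probability is $q(a,b)=\tfrac{1}{|\mathcal{U}|-1}\bigl((1-F(a))(1-F(b))+F(a)F(b)\bigr)$ (with $F$ the cdf of $p_{\sigma_\varrho}$), and the true failure probability is $\mathbb{E}_{a,b}\big[(1-q(a,b))^t\big]$. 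Since $x\mapsto(1-x)^t$ is convex, Jensen gives $\mathbb{E}[(1-q)^t]\ge(1-\mathbb{E}[q])^t=(1-p)^t$, so the inequality runs the wrong way for what you need. Observing that ``the interventional draws are i.i.d.'' does not help either, as the sign-agreement events still depend on the shared observational draws. To be fair, the paper makes exactly the same unjustified leap (it writes $P(E_s)=(1-p)^t$ without comment), so you match the paper's argument; but you should be aware that the point you flagged is genuine and is not dispatched by your suggested patch.
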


\begin{proof}
Select one of the terminal nodes $w$.  Consider all other nodes in $\mathcal{U}\setminus \{w\}$. The mechanism of interventions described in Assumption~\ref{assm: multi_int_str} goes over the nodes in $\mathcal{U}$ iteratively.    In iteration corresponding to interventions for node $s$,  each node in $\mathcal{U}\setminus \{s\}$ is equally likely to be selected for concurrent intervention. Define an event $O$, which is true if under the intervention the variance of both intervened nodes is increased in comparison to observational data (Domain $1$) or if under the intervention variance of intervened is decreased in comparison to observational data. Due to symmetry and non-atomic density $p_{\sigma_{\varrho}}$, the probability of this event is $\frac{1}{2}$.  Therefore, the probability $p$ that in iteration for node $s$ it undergoes a good intervention  is $p=\frac{1}{2(|\mathcal{U}|-1)}$. 

Define an event $S$ such that $S$ occurs if in all of $(|\mathcal{U}|-1)t$ interventions each node in $\mathcal{U}\setminus \{w\}$ undergoes a good intervention, i.e., it is paired with the terminal node $w$ at least once and for each of these interventions event $O$ occurs for the paired nodes. Consider a node $s\in \mathcal{U} \setminus \{w\}$. Define event  $E_s$, where $E_s$ occurs if none of the $t$ interventions conducted in the iteration concerning $s$ are good interventions.  This probability evaluates to $P(E_s) = (1-p)^{t}$. The probability that at least one of $E_s$ is true is bounded above using union bound as follows: $P(\cup_{s\in \mathcal{U}\setminus\{w\}} E_s) \leq (|\mathcal{U}|-1)(1-p)^{t}$. The probability $P(S) = 1-P(\cup_{s\in \mathcal{U}\setminus \{w\}} E_s) \geq 1 - (|\mathcal{U}|-1)(1-p)^{t}$.
Observe that if $t \geq u(\delta) $, then  $P(S) \geq 1-\delta$.

\end{proof}

\Multinode*

\begin{proof}
We begin by first checking that the solution to reconstruction identity under the above-said constraints exists. Set $f=g^{-1}$ and $h=g$ and $\hat{\mathcal{S}}=\mathcal{S}$. Firstly, the reconstruction identity is easily satisfied. Also, the  Constraint~\ref{assm: dist_inv} is satisfied as Assumption~\ref{assm: multi_int_str} holds.  

We construct a proof based on the principle of induction.

Consider a component $\hat{z}_i$ of $\hat{z}_{\hat{\mathcal{S}}}$. From affine identification (follows from Theorem~\ref{thm1}), we already know that $\hat{z}_i = A_{i}^{\top}z+c_i$. We sort the vertices in $\mathcal{U}$ in the reverse topological order to obtain a list $\mathcal{U}^{\star}$. We use the principle of induction on this sorted list. Due to Assumption~\ref{assm: multi_int_str}, it follows that the first two nodes in the sorted list have to be a terminal node, which we denote as $\{j,l\}$. Suppose these nodes are intervened in domain $p$. Observe that since $t\geq u(\delta)$ both of these nodes are intervened with probability $1-\delta$.  From the invariance constraint on distribution $\hat{z}_i$ in domain $1$ and $p$ it follows

\begin{equation}
    \begin{split}
       & \hat{z}_{i}^{(1)} \stackrel{d}{=} \hat{z}_{i}^{(p)}, \\ 
       &  A_{i}^{\top}z^{(1)} \stackrel{d}{=} A_{i}^{\top}z^{(p)},  \\
       &  A_{i}^{\top}[z_j^{(1)}, z_{l}^{(1)}, z_{-jl}^{(1)}] \stackrel{d}{=} A_{i}^{\top}[z_j^{(p)}, z_{l}^{(p)}, z_{-jl}^{(p)}]. 
    \end{split}
\end{equation}

Recall $z_{j}^{(q)} = q_j\big(z_{\mathrm{Pa}(j)}^{(q)}\big) + \varrho_j^{(q)}, \forall q \in [k]$.  For all $q\in [k]$, define $w^{(q)} = A_{i,-jl}^{\top}z_{-jl}^{(q)} + A_{ij}q_j\big(z_{\mathrm{Pa}(j)}^{(q)}\big) + A_{il}q_l\big(z_{\mathrm{Pa}(l)}^{(q)}\big), \forall q \in [k]$, where $A_{i,-jl}$ is the vector of components in $A_{i}$ other than $A_{i,j}$ and $A_{i,l}$, and $z_{-jl}^{(q)}$ is the vector of all components of $z^{(q)}$ except $z_j^{(q)}$ and $z_{l}^{(q)}$. Define $v^{(q)} = A_{ij}\varrho_{j}^{(q)} + A_{il}\varrho_{l}^{(q)}, \; \forall q \in [k]$. Substitute these in the above to obtain 

  \begin{equation}
      w^{(1)} + v^{(1)} \stackrel{d}{=}  w^{(p)} + v^{(p)}. 
  \end{equation}

  We make some important observations now. Observe that $v^{(1)} \perp w^{(1)}$ and $v^{(p)} \perp w^{(p)}$. This is true since $v^{(q)}$ is determined by the noise variables at the terminal nodes. Also, since the intervention only changes the noise distribution of $j$ and $l$, which are terminal nodes, leaving the rest of the nodes unaltered. Therefore,  $w^{(1)} \stackrel{d}{=} w^{(p)} $.  We now write the moment generating function (MGF) of $ w^{(1)} + v^{(1)}$ and equate it to MGF of  $w^{(p)} +  v^{(p)}$ as follows
    \begin{equation}
        M_{w^{(1)}}(t) M_{v^{(1)}}(t) =  M_{w^{(p)}}(t) M_{v^{(p)}}(t).
    \end{equation}
    Since $w^{(1)} \stackrel{d}{=} w^{(p)} $, the MGFs are equal. As a result, the MGFs of $v^{(1)}$ and $v^{(p)}$ are equal as well. If the MGFs are equal, then $v^{(1)}  \stackrel{d}{=} v^{(p)} $. If $A_{ij} \not =0$ and $A_{il}=0$, then this implies $\varrho_{j}^{(1)} \stackrel{d}{=} \varrho_{j}^{(s)}$, which is a contradiction.  Similarly, $A_{il}\not=0$ and $A_{ij}=0$ is not possible either.   The last case is $A_{ij}\not=0$ and $A_{il}\not=0$.   From $v^{(1)}  \stackrel{d}{=} v^{(p)}  \implies A_{ij}\varrho_{j}^{(1)} + A_{il}\varrho_{l}^{(1)} \stackrel{d}{=} A_{ij}\varrho_{j}^{(p)} + A_{il}\varrho_{l}^{(p)}  $. This can only be true if $A_{ij}^2\sigma_{\varrho_{j}^{(1)}}^2 + A_{il}^2\sigma_{\varrho_{l}^{(1)}}^2 =  A_{ij}^2\sigma_{\varrho_{j}^{(p)}}^2 + A_{il}^2\sigma_{\varrho_{l}^{(p)}}^2  $. Due to Lemma~\ref{lemma: int_thresh}, the selected domain $p$ is such that the two terminal nodes undergo a good intervention and as a result, the variance in LHS is strictly less or strictly greater than the RHS, which makes the equality impossible. Therefore, $A_{ij}=0$ and $A_{il}=0$. 
    
    This establishes the base case for the induction.

     Consider an arbitrary vertex say $s \in \mathcal{U}^{\star}$. Suppose $A_{ir}=0$ for all that preceded $s$ in $\mathcal{U}^{\star}$. Further, suppose that this node $s$ undergoes an imperfect intervention along with terminal node $l$ in domain $u$. Note here again since $t\geq u(\delta)$, such a domain exists with probability $1-\delta$. 
     From the invariance condition between domain $1$ and domain $u$, it follows

     \begin{equation}
    \begin{split}
       & \hat{z}_{i}^{(1)} \stackrel{d}{=} \hat{z}_{i}^{(u)}, \\ 
       &  A_{i}^{\top}z^{(1)} \stackrel{d}{=} A_{i}^{\top}z^{(u)},  \\
       &   A_{i}^{\top}[z_s^{(1)}, z_{l}^{(1)}, z_{-sl}^{(1)}] \stackrel{d}{=} A_{i}^{\top}[z_s^{(u)}, z_{l}^{(u)}, z_{-sl}^{(u)}]. 
    \end{split}
    \label{eqn: multinode_imp_int}
\end{equation}

Consider domain $u$, where node $s$ and $l$ above are intervened simultaneously.  Recall $w^{(q)} = A_{i,-sl}^{\top}z_{-sl}^{(q)} + A_{is}q_s\big(z_{\mathrm{Pa}(s)}^{(q)}\big) + A_{il}q_l\big(z_{\mathrm{Pa}(l)}^{(q)}\big), \forall q \in [k]$.  We already showed that $A_{il}=0$ so the third term is zero. Further, in $A_{i,-sl}$ the terms corresponding to the descendants of $s$ are zero due to supposition in induction principle that  $A_{ir}=0$ for all that preceded $s$ in $\mathcal{U}^{\star}$.  Hence, no descendant of $s$ contributes to the expression $w^{(q)}$. The term $v^{(q)} = A_{is}\varrho_{s}^{(q)} + A_{il}\varrho_{l}^{(q)}$, which again simplifies to $v^{(q)} = A_{is}\varrho_{s}^{(q)} $. Since $w^{(q)}$ does not involve $s$ or its descendants, we can conclude that $w^{(q)} \perp v^{(q)}, \forall q \in [k]$ and $w^{(u)} \stackrel{d}{=} w^{(1)}$. 

The above expressions in equation~\eqref{eqn: multinode_imp_int} can be stated as 
\begin{equation}
    w^{(u)} + v^{(u)} \stackrel{d}{=} w^{(1)} + v^{(1)}
\end{equation}

Since $w^{(q)} \perp v^{(q)}$ and $w^{(u)} \stackrel{d}{=} w^{(1)}$, it follows that $v^{(u)}\stackrel{d}{=} v^{(1)}$. If $A_{is}\not=0$, then this implies $\varrho_{s}^{(u)} \stackrel{d}{=} \varrho_{s}^{(1)}$, which leads to a contradiction. Hence, $A_{is}=0$. This completes the proof.

\end{proof}

\paragraph{Extension of Theorem~\ref{thm3}} In Theorem~\ref{thm3}, we considered two-node interventions. Let us ask what happens if $m$-interventions occur at the same time. If we extend the Assumption~\ref{assm: imp_int_structure} to require $m$ terminal nodes, the rest of the argument extends to this case too. Firstly, in Lemma~\ref{lemma: int_thresh} we showed that if the minimum number of interventions $t$ that each node is involved is sufficiently large, then all the nodes end up being paired with one of the terminal nodes. The extension of Lemma~\ref{lemma: int_thresh} reads: if the minimum number of interventions $t$ that each node is involved is sufficiently large, then all the nodes end up being paired with $m-1$ terminal nodes under a good intervention. In the proof of Theorem~\ref{thm3}, in the base case, we showed that the $A_{ij}$ and $A_{il}$ are zero where $\{j,l\}$ are two terminal nodes involved in the intervention. In the extension, we consider the domain in which $m$ terminal nodes are involved in the intervention and the coefficient $A_{ir}$ is zero for all $r$ corresponding to indices of the terminal nodes intervened in that domain. The rest of the argument from the principle of induction is identical.

\gmultipoly*

\begin{proof}

We begin by first checking that the solution to reconstruction identity under the above-said constraints exists. Set $f=g^{-1}$ and $h=g$ and $\hat{\mathcal{S}}=\mathcal{S}$. Firstly, the reconstruction identity is easily satisfied. Also, the  Constraint~\ref{assm: supp_inv} is satisfied as Assumption~\ref{assm: supp_invar} holds.

From the Assumptions~\ref{assm1: dgp1} and Constraint~\ref{assm3: h_poly_new} we know that $\hat{z} = Az +c$ (follows from Theorem~\ref{thm1}). Let us consider $i \in \hat{\mathcal{S}}$. We know that $\hat{z}_i = A_i^{\top}z + c_i$.  Suppose $A_i\succcurlyeq 0$, where each component of $A_i$ is non-negative.

Let us consider the domains $p,q,$ from Assumption~\ref{assm: sup_var}. We compute the maximum value of $\hat{z}_i$ in domain $p$ and $q$ below. 

\begin{equation}
     z^{\max, p} = \arg\max_{z \in \mathcal{Z}^{(p)}}  A_i^{\top}z + c_i
\end{equation}

\begin{equation}
     z^{\max, q} = \arg\max_{z \in \mathcal{Z}^{(q)}}  A_i^{\top}z + c_i
\end{equation}

From Constraint~\ref{assm: supp_inv}, $  A_i^{\top}z^{\max, p} =  A_i^{\top}z^{\max, q}$. Suppose $A_{ik} > 0$ for some $k \in \mathcal{U}$. From Assumption~\ref{assm: sup_var}, it follows that there exists a $z \in  \mathcal{Z}^{(q)}$ such that $ z \succcurlyeq z^{\max, p} $ and $z_{j} > z^{\max, q}_{j}$ for all $j \in \mathcal{U}$. Therefore, $A_{i}^{\top}z > A_{i}^{\top}z^{\max, p}$.  This contradicts $  A_i^{\top}z^{\max, p} =  A_i^{\top}z^{\max, q}$. Therefore, $A_{ik}=0$. 
\end{proof}

\textbf{Remark on Definition \ref{def: lipschitz_a}}
We illustrate the type of functions captured by Definition \ref{def: lipschitz_a} in Figure~\ref{fig:supp_2var}. In Figure~\ref{fig:supp_2var}, we show that a function $\gamma_{\theta}$ has three minima (shown as stars) over $[0,1]\times[0,1]$. We illustrate two domains in panels a) and b). For Domain 1 in panel a), the minima over Domain 1 coincides with minima over $[0,1]\times [0,1]$ but for Domain 2 that is not the case. The figure lays down the examples idea behind the proof we are about to present next. Under sufficiently many diverse interventions, it can be guaranteed that we obtain one domain that is similar to Domain 1 (capturing the minima over $[0,1]\times[0,1]$) in Figure~\ref{fig:supp_2var} and another domain that is similar to Domain 2  (not capturing the minima over $[0,1]\times[0,1]$) in Figure~\ref{fig:supp_2var}.

\begin{figure}
	\includegraphics[trim =0  2.5in 0 0 , width = 5in]{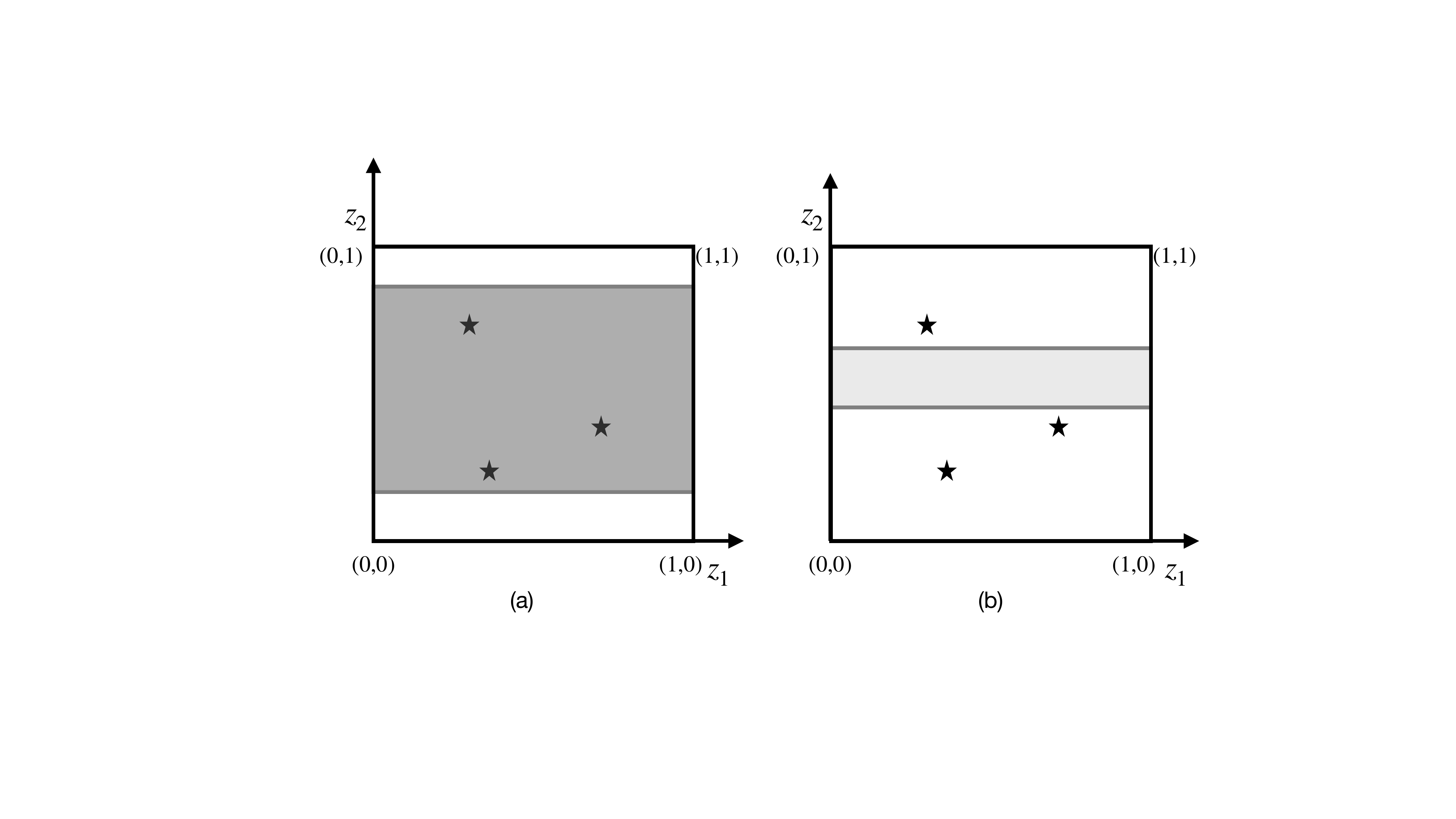}
	\caption{ The minima of a candidate function $\gamma_{\theta}$ over $[0,1]\times [0,1]$ is attained at points shown in stars. Support of Domain 1 and Domain 2 are shown in light and dark grey. The minimum value of $\gamma_{\theta}$ over Domain 1 is not the same as the minimum over Domain 2. Therefore, $a_1(\cdot)$ relating the first component of the autoencoder, which satisfies support invariance constraint, to the true latent cannot be equal to the candidate function $\gamma_{\theta}$.}
 \label{fig:supp_2var}
\end{figure}

\gmultidiff*


\begin{proof}

Consider the set $\Theta$ of parameters, which characterize all the functions in $\Gamma$. Let us construct a $\rho$-cover for the set $\Theta$ with $\rho = \frac{\eta}{4L}$, where $\eta$ and $L$ are constants from Definition \ref{def: lipschitz_a}. We define the set of functions in the cover as $\Gamma_c = 
\{\gamma_1, \cdots, \gamma_{N_{c}}\}$, where $N_c$ is the size of the cover and  $N_c = \bigg( \frac{2 \max_{\theta \in \Theta}\|\theta\| \sqrt{s}}{\rho}\bigg)^{s}$ (follows from \citep{shalev2014understanding}).

Consider a $\gamma_j \in \Gamma_c$ with parameters $\theta_j$.  From Definition \ref{def: lipschitz_a}, there exists an interval $[\alpha^{\dagger},\beta^{\dagger}]$ with width at least $\iota$ such that the minimum value in $[0,1]\times [\alpha^{\dagger},\beta^{\dagger}]$ is at least $\eta$ more than the minimum value over the entire set $[0,1]\times [0,1]$.  Since the support to $z_2$ is sampled randomly, we compute the probability that one of the sampled domain's support is contained in $[\alpha^{\dagger},\beta^{\dagger}]$. The probability of first success (where success is the event that support of $z_2$  is a subset of $[\alpha^{\dagger},\beta^{\dagger}]$) in one of the $t$ trials is $1-(1-p_{s})^{t}$. We want $$1-(1-p_{s})^{t} \geq 1-\frac{\delta}{2} \implies \frac{\delta}{2} \geq (1-p_{s})^{t} \implies \log\bigg(\frac{2}{\delta}\bigg)/\log(1/(1-p_s)) \leq t $$

We plug $p_s = c_1\iota^{l}$ following Assumption~\ref{assm: diverse_int}. If we set $t\geq t_{\min}^{1} = \log(\frac{2}{\delta})/\log(1/(1-c_1\iota^{l}))$, then with probability $1-\delta/2$ at least for one of the domains indexed from $1$ to $t_{\min}^{1}$ the minimum value of $\gamma_j$  in $[0,1]\times [\alpha^{\dagger},\beta^{\dagger}]$ is $\eta$ larger than the minimum value in $[0,1]\times [0,1]$. 

Next, we show that if the number of domains is sufficiently large, then the probability that one of the domains support contains $[\varepsilon, 1-\varepsilon]$ is sufficiently high.   
The probability of first success (where success is the event that the intervention support contains $[\varepsilon, 1-\varepsilon]$). In this case, we follow the same calculations as above. It follows that if $t\geq t_{\min}^{2} = \log(\frac{2}{\delta})/\log(1/(1-c_2\varepsilon^{r}))$, then with probability $1-\delta/2$ the support of $z_2$ in at least one of the domains indexed from $t_{\min}^{1} + 1$ to $t_{\min}^{1} + t_{\min}^{2}$ contains $[\varepsilon, 1-\varepsilon]$ the global minimum of $\gamma_j$ with probability at least $1-\delta/2$. Hence, we can conclude that with probability $1-\delta$ both the success events described above happen. In the case of this event, the function $\gamma_j$ cannot satisfy the support invariance constraint.  

Let us consider all the elements in $\Gamma_c$ together now. We now derive a bound on the number of domains such that none of the elements in $\Gamma_c$ satisfy the support invariance constraint.  We divide the total $k$ domains into blocks of equal length. The first block is chosen to be sufficiently large to ensure that with probability $1-\frac{\delta}{N_c}$, the first element of $\Gamma_c$, i.e., $\gamma_1$ does not satisfy support invariance constraints. Similarly, the second block is chosen to be sufficiently large such that $\gamma_2$ cannot satisfy support invariance constraints and so on. The minimum size of each block is computed by substituting $\delta$ with $\delta/N_c$ in the expression for $t_{\min}^{1} + t_{\min}^2$ derived above. The final expression for $N(\delta, \varepsilon, \eta, \iota)$ is given as 

$$N_c\Bigg(\log\bigg(\frac{2N_c}{\delta}\bigg)/\log\bigg(1/(1-c_1\iota^{l})\bigg) + \log\bigg(\frac{2N_c}{\delta}\bigg)/\log\bigg(1/(1-c_2\varepsilon^{r})\bigg)\Bigg)$$

where $N_c = \bigg( \frac{2 \max_{\theta \in \Theta}\|\theta\| \sqrt{s}}{\rho}\bigg)^{s}$ and $\rho = \frac{\eta}{4L}$.

Observe that since the probability of success is bounded below by $1-\frac{\delta}{N_c}$, the overall probability is bounded by at least $1-\delta$. So far, we have shown that none of the elements in the cover of $\Theta$, i.e., $\Gamma_c$ satisfy support invariance constraints. 

Let us now consider a $\gamma_{\theta} \in \Gamma$. The nearest neighbor of this $\gamma_{\theta}$ in the cover is say $\gamma_{j}$.  Suppose the parameter associated with $\gamma_{j}$ is $\theta_j$. Therefore, $\gamma_j = \gamma_{\theta_j}$. Since $\theta_j$ is an element of $\rho-$cover, the separation between their corresponding parameters is $\|\theta_j-\theta\|\leq \rho$.  Since the number of domains is larger than $N(\delta, \varepsilon, \eta, \iota)$ we can state the following. With probability $1-\delta/N_c$, there exists a pair of domains whose supports say $\mathcal{Z}$ and $\tilde{\mathcal{Z}}$, where $\gamma_{\theta_j}$'s minimum value on the former is at least $\eta$ higher than the minimum value on $\tilde{\mathcal{Z}}$. Let us now compute a lower bound on the minimum value of $\gamma$ on $\mathcal{Z}$. For all $z \in \mathcal{Z}$
$$|\gamma_{\theta}( z) - \gamma_{\theta_j}( z)| \leq L \|\theta-\theta_j\|\leq L \rho \implies \gamma_{\theta}(z) \geq  \gamma_{\theta_j}(z)  - L\rho$$

In the first inequality, we rely on Lipschitz continuity of $\gamma_{\theta}$ in $\theta$ (from Assumption~\ref{assm: diverse_int}).  From the above, it follows that 
\begin{equation}
\min_{z \in \mathcal{Z}}\gamma_{\theta}(z) \geq  \min_{z \in \mathcal{Z}}\gamma_{\theta_{j}}(z)  - L\rho
\label{eqn:bd1}
\end{equation}

Next, we compute an upper bound on the minimum value of $\gamma_{\theta}$ on $\tilde{\mathcal{Z}}$

$$|\gamma_{\theta}(z) - \gamma_{\theta_j}(z)| \leq L \|\theta-\theta_j\|\leq L \rho \implies \gamma_{\theta}( z) \leq  \gamma_{\theta_j}(z)  + L\rho$$

From the above, it follows that 
\begin{equation}    
\min_{z \in \tilde{\mathcal{Z}}}\gamma_{\theta}(z) \leq  \min_{z \in \tilde{\mathcal{Z}}}\gamma_{\theta_j}(z)  + L\rho
\label{eqn:bd2}
\end{equation}

We now take the difference of the bounds in equation~\eqref{eqn:bd1}  and \eqref{eqn:bd2} above to arrive at the following. 

$$\min_{z \in \tilde{\mathcal{Z}}}\gamma_{\theta}(z) -  \min_{z \in \mathcal{Z}}\gamma_{\theta}(z) \geq \min_{z \in \mathcal{Z}}\gamma_{\theta_j}( z) - \min_{z \in \tilde{\mathcal{Z}}}\gamma_{\theta_j}(z)- 2L\rho \geq \eta - 2L\rho  = \frac{\eta}{2}$$

where we set $\rho = \eta/4L$ in the last inequality. Therefore, $\gamma_{\theta}$ does not satisfy support invariance.  We require the above argument to hold for all $\gamma_{\theta} \in \Gamma$. Here we exploit the fact that with probability $1-\delta$ all elements in the cover $\Gamma_c$ do not satisfy the support invariance constraint. Therefore, we can pick any $\gamma_{\theta} \in \Gamma$, select the corresponding nearest neighbor in the cover, and apply the argument stated above. This completes the proof. 

\end{proof}

\subsection{Beyond the Two Variable Case}
 In this section, we aim to generalize the results presented in the previous section to more than two variables. We first adapt the Definition \ref{def: lipschitz_a}. 

\begin{definition}
	\label{def: lipschitz_a1}
	Fix some constants $\eta>0$, $\varepsilon>0$ and $\iota>0$. Given these constants, we define a set of functions $\Gamma$ as follows. Each function $\gamma_{\theta}: [0,1]^{d} \rightarrow \mathbb{R}$ in $\Gamma$  i)  is parameterized by $\theta \in \Theta$, where $\Theta$ is a bounded subset of $\mathbb{R}^s$,ii) the minima of $\gamma_{\theta}$ over the entire set $[0,1]^{d}$ lie in the $\varepsilon$ interior of the set, i.e., in $[\varepsilon, 1-\varepsilon]^{d}$, and iii) there exists a hypercube $\mathcal{L}$  of volume at least $\iota$ such that 
	 $$\left|\min_{z \in [0,1]^{d}} \gamma(z) - \min_{z\in [0,1]\times \mathcal{L}} \gamma(z)\right| \geq \eta. $$
	For each $z \in [0,1]^{d} $, $\gamma_{\theta}$ is Lipschitz continuous in the parameter $\theta \in \Theta$ with Lipschitz constant $L$.
\end{definition}

Next, we adapt Assumption~\ref{assm: diverse_int}.  

\begin{assumption}
\label{assm: diverse_int1}
We assume that the domains are drawn at random and the support of latents in $\mathcal{U}$ satisfy $\mathbb{P}\bigg(\mathcal{Z}_{\mathcal{U}}^{(p)} \subseteq [\alpha_1, \beta_1] \times \cdots [\alpha_{|\mathcal{U}|}, \beta_{|\mathcal{U}|}]\bigg) \geq c_1\mathrm{vol}^{l}[[\alpha_1, \beta_1] \times \cdots [\alpha_{|\mathcal{U}|}, \beta_{|\mathcal{U}|}]]$  and $\mathbb{P}\bigg(\mathcal{Z}_{\mathcal{U}}^{(p)} \supseteq [\kappa, 1-\kappa]^{q}\bigg) \geq c_2\kappa^{qr}$, where $l$ and $r$ are some integers and $c_1$, $c_2$ some constants.  
\end{assumption}

Define $$\tilde{N}(\delta, \varepsilon, \eta, \iota) = N_c\Bigg(\log\bigg(\frac{2N_c}{\delta}\bigg)/\log\bigg(1/(1-c_1\iota^{l})\bigg) + \log\bigg(\frac{2N_c}{\delta}\bigg)/\log\bigg(1/(1-c_2\varepsilon^{dr})\bigg)\Bigg)$$

where $N_c = \bigg( \frac{2 \max_{\theta \in \Theta}\|\theta\| \sqrt{s}}{\rho}\bigg)^{s}$ and $\rho = \frac{\eta}{4L}$

\begin{theorem}  

 If we gather data generated from equation~\eqref{eqn: dgp}, where the support of $z_2$ for each domain is sampled i.i.d. from Assumption~\ref{assm: diverse_int1}. Further, if the number of domains $k\geq \tilde{N}(\delta, \varepsilon, \eta, \iota)$, then the maps $a_1(\cdot)$, which are obtained from autoencoders that solve the reconstruction identity in equation~\eqref{eqn: recons} under support invariance constraint (Constraint~\ref{assm: supp_inv}) on $\hat{z}_1$, do not contain function from $\Gamma$.
\end{theorem}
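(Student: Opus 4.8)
The argument is the $d$-variable version of the proof of Theorem~\ref{thm5}, with the interval $[\alpha^{\dagger},\beta^{\dagger}]$ replaced by the hypercube $\mathcal{L}$ of Definition~\ref{def: lipschitz_a1} and the interval $[\varepsilon,1-\varepsilon]$ replaced by the box $[\varepsilon,1-\varepsilon]^{|\mathcal{U}|}$. First I would check feasibility: taking $f=g^{-1}$, $h=g$, $\hat{\mathcal{S}}=\mathcal{S}$ satisfies the reconstruction identity~\eqref{eqn: recons}, and Constraint~\ref{assm: supp_inv} holds since the support of $z_1$ is fixed to $[0,1]$ across domains. Writing $a=f\circ g$ and $\hat z_1=a_1(z)$ (the reconstruction identity forces $a$ to be injective, but nothing more), the goal is to exclude $a_1\in\Gamma$. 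As in Theorem~\ref{thm5}, I construct a $\rho$-cover $\Gamma_c=\{\gamma_1,\dots,\gamma_{N_c}\}$ of the parameter set $\Theta$ with $\rho=\tfrac{\eta}{4L}$ and $N_c=\big(\tfrac{2\max_{\theta\in\Theta}\|\theta\|\sqrt{s}}{\rho}\big)^{s}$.

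\textbf{Cover elements violate support invariance.} Fix $\gamma_j\in\Gamma_c$ with parameter $\theta_j$. By Definition~\ref{def: lipschitz_a1}(iii) there is a hypercube $\mathcal{L}$ of volume $\ge\iota$ with $\min_{z\in[0,1]\times\mathcal{L}}\gamma_j(z)\ge\min_{z\in[0,1]^{d}}\gamma_j(z)+\eta$, and by Definition~\ref{def: lipschitz_a1}(ii) the global minimizer lies in $[\varepsilon,1-\varepsilon]^{d}$. Over the random draw of domain supports I would track two success events: (a) some domain has $\mathcal{Z}_{\mathcal{U}}^{(p)}\subseteq\mathcal{L}$, which by the first bound of Assumption~\ref{assm: diverse_int1} has per-domain probability at least $c_1\,\mathrm{vol}^{l}(\mathcal{L})\ge c_1\iota^{l}$; and (b) some domain has $\mathcal{Z}_{\mathcal{U}}^{(q)}\supseteq[\varepsilon,1-\varepsilon]^{|\mathcal{U}|}$, hence captures the global minimizer, which by the second bound of Assumption~\ref{assm: diverse_int1} has per-domain probability at least $c_2\varepsilon^{dr}$. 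By the geometric-tail computation used in Theorem~\ref{thm5}, a block of $\log(\tfrac{2N_c}{\delta})/\log(\tfrac{1}{1-c_1\iota^{l}})$ domains secures (a) with probability $\ge1-\tfrac{\delta}{2N_c}$, and a further block of $\log(\tfrac{2N_c}{\delta})/\log(\tfrac{1}{1-c_2\varepsilon^{dr}})$ domains secures (b) with probability $\ge1-\tfrac{\delta}{2N_c}$. On the intersection, the minimum of $\hat z_1$ built from $\gamma_j$ over the domain from (b) is within $L\cdot 0$ of the global minimum while over the domain from (a) it exceeds it by at least $\eta$, so $\gamma_j$ violates Constraint~\ref{assm: supp_inv}. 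Assigning disjoint blocks of this total length to each of the $N_c$ cover elements and applying a union bound shows that $k\ge\tilde N(\delta,\varepsilon,\eta,\iota)$ domains suffice for every element of $\Gamma_c$ to violate support invariance simultaneously, with probability at least $1-\delta$.

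\textbf{Passing from the cover to all of $\Gamma$.} For an arbitrary $\gamma_{\theta}\in\Gamma$ I pick its nearest neighbor $\gamma_{\theta_j}\in\Gamma_c$, so $\|\theta-\theta_j\|\le\rho$. Lipschitz continuity in $\theta$ (Definition~\ref{def: lipschitz_a1}) gives $|\gamma_{\theta}(z)-\gamma_{\theta_j}(z)|\le L\rho$ for every $z\in[0,1]^{d}$, hence $|\min_{z\in\mathcal{A}}\gamma_{\theta}(z)-\min_{z\in\mathcal{A}}\gamma_{\theta_j}(z)|\le L\rho$ for any set $\mathcal{A}$. Applying this on the two domains that separate $\gamma_{\theta_j}$ and using $\rho=\tfrac{\eta}{4L}$, the minima of $\gamma_{\theta}$ across those same two domains differ by at least $\eta-2L\rho=\tfrac{\eta}{2}>0$, so $\gamma_{\theta}$ also violates Constraint~\ref{assm: supp_inv}. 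Since this holds for every $\gamma_{\theta}\in\Gamma$ on the same high-probability event, $a_1$ cannot equal any function in $\Gamma$, which is the claimed $\Gamma^{c}$ identification.

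\textbf{Main obstacle.} Exactly as in Theorem~\ref{thm5}, the crux is making the discretization, the union bound, and the random-support probabilities interlock so that the separation of minima holds \emph{uniformly over the continuum} $\Gamma$ rather than over a finite family; the Lipschitz-in-$\theta$ smoothing step is what bridges this gap. The only genuinely new bookkeeping relative to the two-variable case is that ``capturing the global minimizer'' now requires a randomly drawn domain to contain an entire $|\mathcal{U}|$-dimensional box $[\varepsilon,1-\varepsilon]^{|\mathcal{U}|}$, so the relevant success probability---and hence the exponent on $\varepsilon$ appearing in $\tilde N(\delta,\varepsilon,\eta,\iota)$---becomes dimension-dependent; everything else is the planar argument carried out coordinate-wise.
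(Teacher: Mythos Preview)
Your proposal is correct and follows essentially the same approach as the paper's proof: build a $\rho$-cover of $\Theta$ with $\rho=\eta/(4L)$, allocate disjoint blocks of domains to each cover element so that (via Assumption~\ref{assm: diverse_int1} and a geometric-tail bound) each $\gamma_j$ witnesses both a domain contained in its hypercube $\mathcal{L}$ and a domain containing $[\varepsilon,1-\varepsilon]^{|\mathcal{U}|}$, then use Lipschitz continuity in $\theta$ to propagate the $\eta$-separation from the cover to all of $\Gamma$. The only cosmetic addition is your explicit feasibility check ($f=g^{-1}$, $h=g$), which the paper omits for this particular theorem but includes for the others.
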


\begin{proof}

We will follow the same line of reasoning as in the proof of the two-variable case. 
Consider the set $\Theta$ characterizing the functions $\gamma$. Let us construct a $\rho$-cover for the set $\Theta$. The cover consists of functions in the set $\Gamma_c = 
\{\gamma_1, \cdots, \gamma_{N_{c}}\}$, where $N_c$ is the size of the cover.  Consider $\gamma_j \in \Gamma_c$ with parameters $\theta_j$.  From Assumption~\ref{assm: diverse_int1}, there exists a hypercube $\mathcal{L}$ with volume at least $\iota$ such that the minimum value in that hypercube is $\eta$ more than the global minimum on the set $[0,1]^{d}$. The probability that one of the domain's support is contained in the hypercube $\mathcal{L}$ is calculated as follows. The probability of first success (where success is the event that intervention support is a subset of $\mathcal{L}$) in one of the $t$ trials is $1-(1-p_{s})^{t}$. We want $$1-(1-p_{s})^{t} \geq 1-\frac{\delta}{2} \implies \frac{\delta}{2} \geq (1-p_{s})^{t} \implies \log\bigg(\frac{2}{\delta}\bigg)/\log(1/(1-p_s)) \leq t $$

Finally we have $t\geq t_{\min}^{1} = \log\big(\frac{2}{\delta}\big)/\log(1/(1-c_1\iota^{l}))$. Therefore, with probability $1-\delta/2$ at least one of the domains s indexed from $1$ to $t_{\min}^{1}$ achieves a minima $\eta$ larger than the global minimum of $\gamma_j$. 

Next, we derive the probability that one of the domain's support contains $[\varepsilon, 1-\varepsilon]^{d}$.   
The probability of first success (where success is the event that the domain contains $[\varepsilon, 1-\varepsilon]^{d}$). In this case, we have $t\geq t_{\min}^{2} = \log(\frac{2}{\delta})/\log(1/(1-c_2\varepsilon^{rd}))$. Therefore, with probability $1-\delta/2$ at least one of the domains indexed from $t_{\min}^{1} + 1$ to $t_{\min}^{1} + t_{\min}^{2}$ achieves the global minimum of $\gamma_j$ with probability at least $1-\delta/2$. Hence, we can conclude that with probability $1-\delta$ both the success events described above happen. In the case of this event, the function $\gamma_j$ cannot satisfy the invariance constraint.

Let us consider all the elements in $\Gamma_c$ together now. We would require the total interventions to be divided into blocks of equal length. The first block is chosen to be sufficiently large to ensure that with probability $1-\frac{\delta}{N_c}$, $\gamma_1$ cannot satisfy support invariance constraints. Similarly, the second block is chosen to be sufficiently large such that $\gamma_2$ cannot satisfy support invariance constraints and so on. Due to symmetry, the minimum size of each block is computed by substituting $\delta$ with $\delta/N_c$ in the expression for $t_{\min}^{1} + t_{\min}^2$ derived above. The final expression for $\tilde{N}(\delta, \varepsilon, \eta, \iota)$ is given as 

$$N_c\Bigg(\log\bigg(\frac{2N_c}{\delta}\bigg)/\log\bigg(1/(1-c_1\iota^{l})\bigg) + \log\bigg(\frac{2N_c}{\delta}\bigg)/\log\bigg(1/(1-c_2\varepsilon^{dr})\bigg)\Bigg)$$

where $N_c = \bigg( \frac{2 \max_{\theta \in \Theta}\|\theta\| \sqrt{s}}{\rho}\bigg)^{s}$ and $\rho = \frac{\eta}{4L}$.
Observe that since the probability of success is bounded below by $1-\frac{\delta}{N_c}$, the overall probability is bounded by at least $1-\delta$. So far we have shown that none of the elements in the cover of $\Theta$, i.e., $\Gamma_c$ satisfy support invariance constraints. 

Let us now consider a $\gamma \in \Theta$. The nearest neighbor of this $\gamma$ in the cover is say $\gamma_j$. Suppose the parameter of $\gamma_j$ is $\theta_j$. Therefore, $\gamma_j = \gamma_{\theta_j}$. 
The separation between their corresponding parameters is $\|\theta_j-\theta\|\leq \rho$. We know that with probability $1-\delta$, $\gamma_i$ does not satisfy the support invariance constraint. There exist interventional
distributions whose supports say $\mathcal{Z}$ and $\tilde{\mathcal{Z}}$, where $\gamma_{j}$'s minimum value on the former is at least $\eta$ higher than the minimum value on $\tilde{\mathcal{Z}}$. Let us now compute a lower bound on the minimum value of $\gamma$ on $\mathcal{Z}$. 
$$|\gamma_{\theta}(z) - \gamma_{\theta_j}(z)| \leq L \|\theta-\theta_j\|\leq L \rho \implies \gamma_{\theta}(z) \geq  \gamma_{\theta_j}( z)  - L\rho$$

From the above, it follows that 
$$\min_{z \in \mathcal{Z}}\gamma_{\theta}( z) \geq  \min_{z \in \mathcal{Z}}\gamma_{\theta_j}( z)  - L\rho$$ 

Next, we compute an upper bound on the minimum value of $\gamma$ on $\tilde{\mathcal{Z}}$

$$|\gamma_{\theta}( z) - \gamma_{\theta_j}(z)| \leq L \|\theta-\theta_j\|\leq L \rho \implies \gamma_{\theta}( z) \leq  \gamma_{\theta_j}( z)  + L\rho$$

From the above, it follows that 
$$\min_{z \in \tilde{\mathcal{Z}}}\gamma_{\theta}(z) \leq  \min_{z \in \tilde{\mathcal{Z}}}\gamma_{\theta_j}(z)  + L\rho$$ 

We now take the difference of the bounds above to arrive at the following. 

$$\min_{z \in \tilde{\mathcal{Z}}}\gamma_{\theta}(z) -  \min_{z \in \mathcal{Z}}\gamma_{\theta}( z) \geq \min_{z \in \mathcal{Z}}\gamma_{\theta_j}(z) - \min_{z \in \tilde{\mathcal{Z}}}\gamma_{\theta_j}( z)- 2L\rho \geq \eta - 2L\rho  = \frac{\eta}{2}$$

where we set $\rho = \eta/4L$ in the last inequality. Therefore, $\gamma$ does not satisfy support invariance. Note that the above argument is general and applies to every $\gamma \in \Theta$ since we can pick the corresponding nearest neighbor in the cover.

This completes the proof. 
\end{proof}

\subsection{Polytope Support}
\label{sec:poly}
In this section, we assume that the support of latents in each domain is characterized by bounded polytopes -- the convex hull of a finite number of vertices, where each vertex has a bounded norm. Under the assumptions and the constraint (Assumption~\ref{assm1: dgp1}  and Constraint~\ref{assm3: h_poly_new}) we know that $\hat{z}$ is an affine function of $z$. If the support of $z$ is a bounded polytope, then evaluating the maximum and minimum value that each component of $\hat{z}$ depends only on the vertices of the polytope following the fundamental theorem of linear programming. This allows us to provide identification guarantees by placing assumptions on the diversity of these polytopes, i.e., on these vertices, observed across domains.
.

Following Constraint~\ref{assm: supp_inv}, we equate the maximum value of components in $\hat{\mathcal{S}}$ across domains. Suppose we equate the maximum of $\hat{z}_i$ across domain $p$ and $q$. We obtain $A_i^{\top}(z^{\max,p} - z^{\max,q}) = 0$, where $z^{\max,p}$, $z^{\max,q}$ correspond to the vertex of the support polytope in domain $p$, $q$ respectively.  Observe how the expression depends on the difference of vertices from different polytopes. We define a set of matrices $\mathcal{M}$ formed by taking the difference of vertices from the different polytopes as follows. Firstly, we fix the first domain as the reference domain and we define difference vectors with respect to the vertices in this domain. We also fix some arbitrary ordering of vertices in the polytope; say they are in the increasing order of the first coordinate.  We start with the first vertex in the first domain.  Next, pick the second domain and pick its first vertex. Take the difference of the two selected vectors, this difference vector forms the first row of one of the matrices. Pick the third domain, take its first vertex, and again take the difference to get the second row of the matrix. Repeat this process for all the domains. As a result, we get a matrix with $k-1$ rows and $d$ columns.  To summarize, the set of matrices $\mathcal{M}$ consists of $k-1\times d$ matrices that satisfy the following condition. For each matrix $M\in \mathcal{M}$, the $r^{th}$ row of the matrix is defined as the difference between some vertex from $(r+1)^{th}$ domain and another vertex from the first domain.

\begin{assumption}
\label{assm4: div_support}
\begin{itemize}
\item The support of latents in each domain $p\in [k]$, i.e., $\mathcal{Z}^{(p)}$ is a bounded polytope; the number of domains $k\geq d+1$. Each matrix $M \in \mathcal{M}$ has a rank equal to the number of non-zero columns. 

\item For each component  $j\in \mathcal{U}$, there exists a domain $p \in [k]$ such that the following condition holds. We denote the value assumed by $z_j$ in $\mathcal{Z}^{(1)}$ on vertex $r$ as $v^{r}$. We assume that there exists another domain $p$ with support $\mathcal{Z}^{(p)}$ such that $z_j$ does not take the same value as $v^{r}$ at any vertex of $\mathcal{Z}^{(p)}$.

\end{itemize}    
\end{assumption}

 The first part of the above assumption states a simple regularity condition on matrix $M$.  The second part of the above assumption is also a simple regularity condition on components in $\mathcal{U}$. The condition only requires that the value attained at some vertex is not attained at any other vertex for some other domain. 

\paragraph{Further remarks on Assumption~\ref{assm4: div_support}.} Next,  we illustrate that the Assumption~\ref{assm4: div_support} holds rather easily in many settings. Consider a setting where $z=[z_1,z_2]$ and both $z_1$, $z_2$ take values between $0$ and $1$. We consider the setting where the support of $z$ forms a polytope. For each domain $p$, the polytope is sampled as follows. Each polytope consists of $M$ vertices and we sample $M$ values for $z_2$ uniformly at random $[0,1]$. For $z_1$, we sample $M-2$ vertices uniformly at random from the interval $[0,1]$. For the remaining $2$ vertices, we fix $z_1$ to take value $0$ on one of them and $1$ on the other. We generate $k$ polytopes following the above process and check if the rank constraint in the first part of Assumption  \ref{assm4: div_support} is satisfied. We repeat this process over ten thousand trials and find that the assumption always holds for different values of $M$ and $k$. The second part of the assumption holds trivially in the above case as two uniform random variables sampled independently from $[0,1]$ are not equal to probability one.

In what follows, we use the notation $a_{\mathcal{B}}$ to denote a vector formed by components of $a$ whose indices in $a$ are from the set $\mathcal{B}$.

\begin{theorem}
    Suppose the data is generated from different domains following
equation  \eqref{eqn: dgp} such that Assumptions  \ref{assm1: dgp1},  \ref{assm: supp_invar}, \ref{assm4: div_support} are satisfied. The autoencoder that solves the reconstruction
identity in equation~\eqref{eqn: recons} under Constraint~\ref{assm3: h_poly_new}, \ref{assm: supp_inv} satisfies 

$$\hat{z}_{\hat{\mathcal{S}}} = Dz_{\mathcal{S}} + e$$

where $D \in \mathbb{R}^{|\hat{\mathcal{S}}| \times |\mathcal{S}|}$, $e \in \mathbb{R}^{|\hat{\mathcal{S}}|} $.
\end{theorem}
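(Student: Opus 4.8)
The plan is to mirror the structure of the earlier support-invariance proofs (Theorem~\ref{thm4}), but to replace the orthant-by-orthant argument with a single linear-algebraic argument that exploits the polytope structure. First I would check that a feasible solution exists: as in the earlier proofs, taking $f = g^{-1}$, $h = g$ and $\hat{\mathcal{S}} = \mathcal{S}$ satisfies the reconstruction identity~\eqref{eqn: recons} trivially and satisfies Constraint~\ref{assm: supp_inv} because Assumption~\ref{assm: supp_invar} holds. Then, since Assumption~\ref{assm1: dgp1} and Constraint~\ref{assm3: h_poly_new} are in force, Theorem~\ref{thm1} yields affine identification $\hat{z} = A z + c$ with $A$ invertible. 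Fixing a row $i \in \hat{\mathcal{S}}$, so $\hat{z}_i = A_i^{\top} z + c_i$, the whole theorem reduces to showing that $A_{ir} = 0$ for every $r \in \mathcal{U}$; the block-affine conclusion then follows by collecting the rows $i\in\hat{\mathcal{S}}$ and columns $\mathcal{S}$ of $A$ into $D$ and the matching entries of $c$ into $e$.

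Next I would turn the support-invariance constraint into a homogeneous linear system in $A_i$. Because $\mathcal{Z}^{(p)}$ is a bounded polytope, the fundamental theorem of linear programming tells us that $\max_{z\in\mathcal{Z}^{(p)}} A_i^{\top} z$ is attained at a vertex $z^{\max,p}$ of $\mathcal{Z}^{(p)}$ (choosing one arbitrarily if the maximizer is not unique), so $\hat{z}_i$ attains its domain-$p$ maximum $A_i^{\top} z^{\max,p} + c_i$ there. Constraint~\ref{assm: supp_inv} equates these maxima across domains, hence $A_i^{\top}\big(z^{\max,p} - z^{\max,1}\big) = 0$ for all $p$, taking domain $1$ as the reference. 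Stacking these $k-1$ difference vectors as rows produces a matrix $M$ that, by construction, lies in the family $\mathcal{M}$ (this is precisely why $\mathcal{M}$ is defined to range over \emph{all} choices of one vertex per domain), and we obtain $M A_i = 0$.

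Finally I would read off the zero pattern of $A_i$. By the first bullet of Assumption~\ref{assm4: div_support}, $\mathrm{rank}(M)$ equals the number of non-zero columns of $M$, so those columns are linearly independent and $\ker M$ is spanned precisely by the standard basis vectors indexed by the zero columns of $M$; since $A_i\in\ker M$, this forces $A_{ir}=0$ whenever column $r$ of $M$ is non-zero. To finish, fix $j\in\mathcal{U}$: the $j$-th column of $M$ has entries $z^{\max,p}_j - z^{\max,1}_j$, and writing $v$ for the value of $z_j$ at the vertex $z^{\max,1}$ of $\mathcal{Z}^{(1)}$, the second bullet of Assumption~\ref{assm4: div_support} supplies a domain $p\in\{2,\dots,k\}$ in which no vertex of $\mathcal{Z}^{(p)}$---in particular not $z^{\max,p}$---realizes $z_j = v$, so that entry, and hence the $j$-th column of $M$, is non-zero; therefore $A_{ij}=0$. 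I expect the main obstacle to be exactly this coupling: the maximizing vertex $z^{\max,p}$ depends on the unknown $A_i$, so no single difference matrix can be fixed in advance, and the argument must hold uniformly over every $M\in\mathcal{M}$ — which is what the rank hypothesis buys us. Minor care is also needed for ties in the maximizer (any choice works), and one can run the identical argument with $\min$ in place of $\max$ for extra robustness.
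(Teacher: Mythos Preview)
Your proposal is correct and follows essentially the same route as the paper's proof: feasibility check, affine identification via Theorem~\ref{thm1}, equating polytope-vertex maxima across domains to obtain $M A_i = 0$ with $M\in\mathcal{M}$, then using the rank condition to kill the components of $A_i$ on the non-zero columns and the second bullet of Assumption~\ref{assm4: div_support} to certify that every $\mathcal{U}$-column is non-zero. If anything, you are slightly more explicit than the paper about the dependence of the maximizing vertex (and hence $M$) on $A_i$, which is precisely why the rank hypothesis is imposed uniformly over all of $\mathcal{M}$.
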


\begin{proof}

We begin by first checking that the solution to reconstruction identity under the above-said constraints exists. Set $f=g^{-1}$ and $h=g$ and $\hat{\mathcal{S}}=\mathcal{S}$. The reconstruction identity and  Constraint~\ref{assm: supp_inv} is satisfied as Assumption~\ref{assm: supp_invar} holds.

Consider a component $m \in \hat{\mathcal{S}}$. From Constraint~\ref{assm: supp_inv}, we know that the support of $\hat{z}_m$ does not change. From Theorem~\ref{thm1}, we also know that there is an affine relationship between $\hat{z}$ and $z$. Therefore, we can write

\begin{equation}
    \hat{z}_{m} = A^{\top}_m z + c_m
\end{equation}

The support of $\hat{z}_m $ is determined by the maximum and minimum of $A^{\top}_m z + c_m$ computed on the respective domains. Let us compute the maximum and minimum of $\hat{z}_m$ in domain $p$ as follows. 
\begin{equation}
\begin{split}
z^{\max}(A_{m}, p) = \arg\max_{z \in \mathcal{Z}^{(p)}} A^{\top}_m z + c_m 
\end{split}
\end{equation}

\begin{equation}
\begin{split}
z^{\min}(A_m, p) =  \arg\min_{z \in \mathcal{Z}^{(p)}} A^{\top}_m z + c_m 
\end{split}
\end{equation}


We define a vector $A_{m}^{\mathcal{U}}$ that contains components of $A_m$ whose indices in $A_m$ form the set $\mathcal{U}$. We now show that support invariance constraints in Constraint~\ref{assm: supp_inv} implies that $A_{m}^{\mathcal{U}}=0$.  Suppose $A_{m}^{\mathcal{U}}\not=0$ (at least one element of this vector is non-zero). In this case, we write the maximum value of the objective as
$$  \sum_{l}A_{ml} z^{\max}(A_m,p) $$
Due to the support invariance constraint we get 

$$  \sum_{l}A_{ml} z^{\max}_{l}(A_m,p)  = \sum_{l}A_{ml} z^{\max}_{l}(A_m,1) $$
$$\sum_{l}A_{ml} \big(z^{\max}_{l}(A_m,p)  - z^{\max}_{l}(A_m,1)\big)  = 0$$
$$ z_{\mathrm{diff}}^{\top}(A_m,p) A_m = 0$$

$ z_{\mathrm{diff}}^{\top}(A_m,p)$ is the difference vector formed by taking the difference $z^{\max}_{l}(A_m,p)  - z^{\max}_{l}(A_m,1)$. Construct a matrix $Z_{\mathrm{diff}}(A_m) \in \mathbb{R}^{k-1\times l}$ by stacking the difference vectors $z_{\mathrm{diff}}^{\top}(A_m,p)$ for all $p$ in $\{2,\cdots, k\}$.

 Let us consider the largest submatrix of $Z_{\mathrm{diff}}(A_m)$ with no zero columns and denote it as $Z_{\mathrm{diff}}^{s}(A_m)$. Following Assumption~\ref{assm4: div_support}, $Z_{\mathrm{diff}}^{s}(A_m)$ has a full column rank. Therefore, a non-trivial solution to $Z_{\mathrm{diff}}^{s}(A_m)v=0$ does not exist and thus $v=0$. 
Consider an element $j \in \mathcal{U}$. Due to Assumption~\ref{assm4: div_support}, the column in $Z_{\mathrm{diff}}(A_m)$ corresponding to $j$ is non-zero. Therefore, for each element $j \in \mathcal{U}$ the corresponding columns in $Z_{\mathrm{diff}}(A_m)$ are non-zero. The columns of $Z_{\mathrm{diff}}^{s}(A_m)$ contain all coefficients in $\mathcal{U}$, which implies $A_{m}^{\mathcal{U}}=0$.    This completes the proof.

\end{proof}
\section{Experiments}
In this section, we provide additional experimental results and other additional details for the experiments. 

\subsection{Linear Mixing}
\label{appendix:linear_mixing}

\subsubsection{Data Generation and Model Architecture}
For both linear and polynomial $g(\cdot)$ we sample $z=(z_{\mathcal{S}},z_{\mathcal{U}})^\top$ as follows ($z_{\mathcal{S}}, z_{\mathcal{U}} \in R^{d/2}$). We sample $z_{\mathcal{S}} \sim \text{Uniform}[0,1]$ across all domains. For domain $i\in [k]$ we sample $z_{\mathcal{U}}\sim \text{Uniform}[l^i, h^i]$, where $l^i, h^i\sim \text{Uniform}[-5,5]$. Then for the Independent SCM setting, we obtain the observational data via $x=Az$, where $A\in R^{n\times d}$ is a full-rank random matrix whose entries are drawn from $\text{Uniform}[0,1]$. We obtain a Dynamic SCM by altering the above $z$ as follows. For each sample, $z_{\mathcal{U}}^j$ will be offset by the $z_{\mathcal{S}}^j$ with probability $p$ and remain unchanged otherwise. In our experiments we set $p$ to 0.5. We generate 10000 samples for the training split and 2000 for the validation split. The test and validation sets are the same, since we do not search over hyperparameters (see Section \ref{appendix: hp_search}).\\
For Linear mixing $g(\cdot)$, stages 1, 2 are carried out simultaneously by a linear autoencoder that is jointly optimized with reconstruction objective and invariance penalty.

\subsubsection{Results}
In Table~\ref{table: linear_indep_supp}, \ref{table: linear_dynamic_supp}  we provide additional results when $z$'s follow an independent and dynamic SCM as described in the main body respectively. We conducted experiments for different values of the dimension of the underlying latents $d$ and for a varying number of domains $k$. We divide the table into three sections with top block corresponding to Min-Max penalty, the middle block corresponding to the MMD penalty and the bottom block corresponding to the combination of the two denoted Min-Max + MMD. Across the different settings, we observe that as the number of domains increase we achieve high $R^2_{\mathcal{S}}$ and low  $R^2_{\mathcal{U}}$. 

\begin{table}[h]
  \centering{
    {\caption{Linear Mixing Dataset $R^2$ scores, \textbf{Independent SCM} DGP. The results are averaged over 5 seeds. $\hat{z},z\in R^{d}$ and $z_\mathcal{S}, z_{\mathcal{U}}\in R^{d/2}$ and $x=g(z)\in R^{2d}$. The three sections from top to bottom correspond to Min-Max, MMD, and the combination.}
         \label{table: linear_indep_supp}
        }%
      {
        \begin{tabular}{c cccc cccc}
          \hline
          \multicolumn{1}{c}{}
        & \multicolumn{4}{c}{$R^2_{\mathcal{S}}$}
& \multicolumn{4}{c}{$R^2_{\mathcal{U}}$}  \\ \cmidrule(lr){2-5} \cmidrule(lr){6-9}
          \multicolumn{1}{c}{$d$} & \multicolumn{1}{c}{$k=2$}  & \multicolumn{1}{c}{$k=4$}   & \multicolumn{1}{c}{$k=8$} & $k=16$ & \multicolumn{1}{c}{$k=2$}  & \multicolumn{1}{c}{$k=4$} & \multicolumn{1}{c}{$k=8$} & $k=16$ \\ \hline
          \multirow{1}{*}{$8$}  & \multicolumn{1}{c}{0.45$\pm0.02$} & \multicolumn{1}{c}{0.99$\pm0.00$} & 0.93$\pm0.04$ & \multicolumn{1}{c}{0.98$\pm0.01$} & \multicolumn{1}{c}{0.30$\pm0.03$} & 0.01$\pm0.00$ & 0.08$\pm0.04$ & 0.03$\pm0.01$ \\
  
          \multirow{1}{*}{$16$} & \multicolumn{1}{c}{0.40$\pm0.01$} & \multicolumn{1}{c}{0.81$\pm0.02$} & 0.95$\pm0.02$ & \multicolumn{1}{c}{0.91$\pm0.03$} & \multicolumn{1}{c}{0.30$\pm0.03$} & 0.10$\pm0.02$ & 0.03$\pm0.01$ & 0.11$\pm0.03$ \\
          \multirow{1}{*}{$32$} & \multicolumn{1}{c}{0.35$\pm0.00$} & \multicolumn{1}{c}{0.51$\pm0.03$}     & 0.80$\pm0.01$ & \multicolumn{1}{c}{0.88$\pm0.01$} & \multicolumn{1}{c}{0.45$\pm0.05$} & 0.24$\pm0.02$ & 0.13$\pm0.02$ & 0.12$\pm0.02$ \\
          \multirow{1}{*}{$64$} & \multicolumn{1}{c}{0.32$\pm0.00$} & \multicolumn{1}{c}{0.35$\pm0.00$}     & 0.63$\pm0.01$ & \multicolumn{1}{c}{0.80$\pm0.00$} & \multicolumn{1}{c}{0.52$\pm0.04$} & 0.35$\pm0.01$ & 0.19$\pm0.00$ & 0.15$\pm0.00$ \\\hline\hline
          \multirow{1}{*}{$8$}  & \multicolumn{1}{c}{0.64$\pm0.05$} & \multicolumn{1}{c}{0.94$\pm0.03$} & 1.00$\pm0.00$ & \multicolumn{1}{c}{0.99$\pm0.00$} & \multicolumn{1}{c}{0.19$\pm0.01$} & 0.03$\pm0.01$ & 0.01$\pm0.00$ & 0.01$\pm0.00$ \\
          \multirow{1}{*}{$16$} & \multicolumn{1}{c}{0.51$\pm0.04$} & \multicolumn{1}{c}{0.77$\pm0.01$} & 0.92$\pm0.01$ & \multicolumn{1}{c}{0.97$\pm0.01$} & \multicolumn{1}{c}{0.09$\pm0.04$} & 0.10$\pm0.02$ & 0.09$\pm0.04$ & 0.06$\pm0.01$ \\
          \multirow{1}{*}{$32$} & \multicolumn{1}{c}{0.38$\pm0.02$} & \multicolumn{1}{c}{0.60$\pm0.01$}     & 0.75$\pm0.03$ & \multicolumn{1}{c}{0.90$\pm0.01$} & \multicolumn{1}{c}{0.34$\pm0.04$} & 0.17$\pm0.00$ & 0.12$\pm0.02$ & 0.07$\pm0.01$ \\
          \multirow{1}{*}{$64$} & \multicolumn{1}{c}{0.30$\pm0.01$} & \multicolumn{1}{c}{0.34$\pm0.01$}     & 0.57$\pm0.02$ & \multicolumn{1}{c}{0.75$\pm0.00$} & \multicolumn{1}{c}{0.44$\pm0.02$} & 0.27$\pm0.01$ & 0.19$\pm0.01$ & 0.12$\pm0.00$ \\\hline\hline
          \multirow{1}{*}{$8$}  & \multicolumn{1}{c}{0.63$\pm0.06$} & \multicolumn{1}{c}{0.99$\pm0.00$} & 0.99$\pm0.00$ & \multicolumn{1}{c}{0.99$\pm0.00$} & \multicolumn{1}{c}{0.19$\pm0.01$} & 0.01$\pm0.00$ & 0.02$\pm0.00$ & 0.02$\pm0.00$ \\
          \multirow{1}{*}{$16$} & \multicolumn{1}{c}{0.53$\pm0.02$} & \multicolumn{1}{c}{0.82$\pm0.01$} & 0.93$\pm0.02$ & \multicolumn{1}{c}{0.97$\pm0.00$} & \multicolumn{1}{c}{0.24$\pm0.04$} & 0.10$\pm0.02$ & 0.06$\pm0.02$ & 0.04$\pm0.00$ \\
          \multirow{1}{*}{$32$} & \multicolumn{1}{c}{0.39$\pm0.02$} & \multicolumn{1}{c}{0.64$\pm0.04$}     & 0.81$\pm0.01$ & \multicolumn{1}{c}{0.91$\pm0.01$} & \multicolumn{1}{c}{0.38$\pm0.06$} & 0.17$\pm0.03$ & 0.11$\pm0.00$ & 0.08$\pm0.01$ \\
          \multirow{1}{*}{$64$} & \multicolumn{1}{c}{0.33$\pm0.00$} & \multicolumn{1}{c}{0.39$\pm0.00$}     & 0.64$\pm0.01$ & \multicolumn{1}{c}{0.80$\pm0.00$} & \multicolumn{1}{c}{0.44$\pm0.03$} & 0.30$\pm0.02$ & 0.16$\pm0.01$ & 0.12$\pm0.01$ \\\hline
        \end{tabular}
      }
  }
\end{table}

\begin{table}[h]
  \centering{
    {\caption{Linear Mixing Dataset $R^2$ scores, \textbf{Dynamic SCM} DGP. The results are averaged over 5 seeds. $\hat{z},z\in R^{d}$ and $z_\mathcal{S}, z_{\mathcal{U}}\in R^{d/2}$ and $x=g(z)\in R^{2d}$. The three sections from top to bottom correspond to Min-Max, MMD, and the combination.}
       \label{table: linear_dynamic_supp}
        }%
      {
        \begin{tabular}{c cccc cccc}
          \hline
          \multicolumn{1}{c}{}
        & \multicolumn{4}{c}{$R^2_{\mathcal{S}}$}
& \multicolumn{4}{c}{$R^2_{\mathcal{U}}$}  \\ \cmidrule(lr){2-5} \cmidrule(lr){6-9}
          \multicolumn{1}{c}{$d$} & \multicolumn{1}{c}{$k=2$}  & \multicolumn{1}{c}{$k=4$}   & \multicolumn{1}{c}{$k=8$} & $k=16$ & \multicolumn{1}{c}{$k=2$}  & \multicolumn{1}{c}{$k=4$} & \multicolumn{1}{c}{$k=8$} & $k=16$ \\ \hline
          \multirow{1}{*}{$8$}  & \multicolumn{1}{c}{0.48$\pm0.01$} & \multicolumn{1}{c}{0.98$\pm0.00$} & 0.97$\pm0.01$ & \multicolumn{1}{c}{0.95$\pm0.01$} & \multicolumn{1}{c}{0.30$\pm0.02$} & 0.02$\pm0.00$ & 0.03$\pm0.01$ & 0.03$\pm0.01$ \\
          \multirow{1}{*}{$16$} & \multicolumn{1}{c}{0.43$\pm0.04$} & \multicolumn{1}{c}{0.73$\pm0.02$} & 0.93$\pm0.02$ & \multicolumn{1}{c}{0.98$\pm0.00$} & \multicolumn{1}{c}{0.33$\pm0.05$} & 0.14$\pm0.01$ & 0.06$\pm0.00$ & 0.03$\pm0.00$ \\
          \multirow{1}{*}{$32$} & \multicolumn{1}{c}{0.35$\pm0.00$} & \multicolumn{1}{c}{0.51$\pm0.02$}     & 0.81$\pm0.00$ & \multicolumn{1}{c}{0.89$\pm0.00$} & \multicolumn{1}{c}{0.38$\pm0.05$} & 0.24$\pm0.01$ & 0.14$\pm0.01$ & 0.11$\pm0.00$ \\
          \multirow{1}{*}{$64$} & \multicolumn{1}{c}{0.27$\pm0.01$} & \multicolumn{1}{c}{0.34$\pm0.00$}     & 0.63$\pm0.01$ & \multicolumn{1}{c}{0.80$\pm0.00$} & \multicolumn{1}{c}{0.55$\pm0.02$} & 0.33$\pm0.01$ & 0.19$\pm0.00$ & 0.14$\pm0.01$ \\\hline\hline
          \multirow{1}{*}{$8$}  & \multicolumn{1}{c}{0.60$\pm0.06$} & \multicolumn{1}{c}{0.92$\pm0.03$} & 0.99$\pm0.00$ & \multicolumn{1}{c}{0.99$\pm0.00$} & \multicolumn{1}{c}{0.25$\pm0.04$} & 0.05$\pm0.01$ & 0.02$\pm0.00$ & 0.02$\pm0.00$ \\
          \multirow{1}{*}{$16$} & \multicolumn{1}{c}{0.46$\pm0.03$} & \multicolumn{1}{c}{0.74$\pm0.01$} & 0.92$\pm0.02$ & \multicolumn{1}{c}{0.98$\pm0.01$} & \multicolumn{1}{c}{0.29$\pm0.04$} & 0.13$\pm0.02$ & 0.04$\pm0.00$ & 0.04$\pm0.01$ \\
          \multirow{1}{*}{$32$} & \multicolumn{1}{c}{0.35$\pm0.02$} & \multicolumn{1}{c}{0.56$\pm0.01$}     & 0.74$\pm0.03$ & \multicolumn{1}{c}{0.91$\pm0.01$} & \multicolumn{1}{c}{0.37$\pm0.04$} & 0.20$\pm0.01$ & 0.11$\pm0.01$ & 0.08$\pm0.02$ \\
          \multirow{1}{*}{$64$} & \multicolumn{1}{c}{0.28$\pm0.00$} & \multicolumn{1}{c}{0.31$\pm0.01$}     & 0.53$\pm0.01$ & \multicolumn{1}{c}{0.72$\pm0.00$} & \multicolumn{1}{c}{0.47$\pm0.02$} & 0.30$\pm0.01$ & 0.20$\pm0.00$ & 0.16$\pm0.00$ \\\hline\hline
          \multirow{1}{*}{$8$}  & \multicolumn{1}{c}{0.61$\pm0.05$} & \multicolumn{1}{c}{0.98$\pm0.00$} & 0.99$\pm0.00$ & \multicolumn{1}{c}{0.99$\pm0.00$} & \multicolumn{1}{c}{0.21$\pm0.01$} & 0.02$\pm0.00$ & 0.02$\pm0.00$ & 0.02$\pm0.00$ \\
          \multirow{1}{*}{$16$} & \multicolumn{1}{c}{0.51$\pm0.01$} & \multicolumn{1}{c}{0.80$\pm0.02$} & 0.96$\pm0.01$ & \multicolumn{1}{c}{0.98$\pm0.01$} & \multicolumn{1}{c}{0.28$\pm0.05$} & 0.10$\pm0.02$ & 0.04$\pm0.00$ & 0.04$\pm0.00$ \\
          \multirow{1}{*}{$32$} & \multicolumn{1}{c}{0.36$\pm0.02$} & \multicolumn{1}{c}{0.65$\pm0.03$}     & 0.82$\pm0.01$ & \multicolumn{1}{c}{0.91$\pm0.00$} & \multicolumn{1}{c}{0.35$\pm0.05$} & 0.18$\pm0.02$ & 0.09$\pm0.01$ & 0.09$\pm0.00$ \\
          \multirow{1}{*}{$64$} & \multicolumn{1}{c}{0.31$\pm0.00$} & \multicolumn{1}{c}{0.38$\pm0.00$}     & 0.64$\pm0.01$ & \multicolumn{1}{c}{0.81$\pm0.00$} & \multicolumn{1}{c}{0.48$\pm0.03$} & 0.31$\pm0.01$ & 0.18$\pm0.00$ & 0.11$\pm0.00$ \\\hline
        \end{tabular}
      }
  }
\end{table}

\begin{table}[hpbt]
  \centering{
    {\caption{$\beta$-VAE - Linear Mixing Dataset $R^2$ scores, \textbf{Independent SCM} DGP. The results are averaged over 5 seeds. Each set of 4 rows correspond to a specific $d$ and from top to bottom denote the $R^2$ scores before training $\beta$-VAE, and after training with $\beta\in [0.1,1.0,10.0]$. $\hat{z},z\in R^{d}$ and $z_\mathcal{S}, z_{\mathcal{U}}\in R^{d/2}$ and $x=g(z)\in R^{2d}$.}
        \label{table:beta_linear_indep_scm}
        }%
      {
        \begin{tabular}{c cccc cccc}
          \hline
          \multicolumn{1}{c}{}
        & \multicolumn{4}{c}{$R^2_{\mathcal{S}}$}
& \multicolumn{4}{c}{$R^2_{\mathcal{U}}$}  \\ \cmidrule(lr){2-5} \cmidrule(lr){6-9}
          \multicolumn{1}{c}{$d$} & \multicolumn{1}{c}{$k=2$}  & \multicolumn{1}{c}{$k=4$}   & \multicolumn{1}{c}{$k=8$} & $k=16$ & \multicolumn{1}{c}{$k=2$}  & \multicolumn{1}{c}{$k=4$} & \multicolumn{1}{c}{$k=8$} & $k=16$ \\ \hline
          \multirow{4}{*}{$8$}  & \multicolumn{1}{c}{0.11$\pm0.08$} & \multicolumn{1}{c}{$-0.21\pm0.25$} & $-0.59\pm0.31$ & \multicolumn{1}{c}{$-0.57\pm0.30$} & \multicolumn{1}{c}{0.69$\pm0.16$} & 0.81$\pm0.05$ & 0.78$\pm0.04$ & 0.81$\pm0.03$ \\
          \multirow{4}{*}{}  & \multicolumn{1}{c}{0.15$\pm0.09$} & \multicolumn{1}{c}{$-0.50\pm0.40$} & $-0.53\pm0.41$ & \multicolumn{1}{c}{$-0.31\pm0.32$} & \multicolumn{1}{c}{0.83$\pm0.08$} & 0.85$\pm0.04$ & 0.77$\pm0.05$ & 0.80$\pm0.06$ \\
          \multirow{4}{*}{}  & \multicolumn{1}{c}{0.17$\pm0.06$} & \multicolumn{1}{c}{$-0.86\pm0.40$} & $-0.84\pm0.51$ & \multicolumn{1}{c}{$-0.92\pm0.47$} & \multicolumn{1}{c}{0.87$\pm0.04$} & 0.86$\pm0.03$ & 0.85$\pm0.03$ & 0.84$\pm0.03$ \\
          \multirow{4}{*}{}  & \multicolumn{1}{c}{0.05$\pm0.08$} & \multicolumn{1}{c}{$-0.56\pm0.30$} & $-0.95\pm0.35$ & \multicolumn{1}{c}{$-1.24\pm0.39$} & \multicolumn{1}{c}{0.81$\pm0.10$} & 0.92$\pm0.01$ & 0.85$\pm0.04$ & 0.90$\pm0.02$ \\ 
          \midrule 
          \multirow{4}{*}{$16$} & \multicolumn{1}{c}{0.42$\pm0.29$} & \multicolumn{1}{c}{$-0.61\pm0.23$} & $-1.46\pm0.26$ & \multicolumn{1}{c}{$-1.57\pm0.43$} & \multicolumn{1}{c}{0.31$\pm0.27$} & 0.63$\pm0.04$ & 0.49$\pm0.03$ & 0.51$\pm0.02$ \\
          \multirow{4}{*}{}  & \multicolumn{1}{c}{0.19$\pm0.03$} & \multicolumn{1}{c}{0.04$\pm0.04$} & $-0.24\pm0.26$ & \multicolumn{1}{c}{$-0.49\pm0.42$} & \multicolumn{1}{c}{0.83$\pm0.07$} & 0.92$\pm0.02$ & 0.90$\pm0.01$ & 0.90$\pm0.00$ \\
          \multirow{4}{*}{}  & \multicolumn{1}{c}{0.19$\pm0.03$} & \multicolumn{1}{c}{$-0.03\pm0.22$} & $-0.42\pm0.29$ & \multicolumn{1}{c}{$-0.60\pm0.25$} & \multicolumn{1}{c}{0.86$\pm0.05$} & 0.94$\pm0.02$ & 0.92$\pm0.01$ & 0.90$\pm0.00$ \\
          \multirow{4}{*}{}  & \multicolumn{1}{c}{0.15$\pm0.06$} & \multicolumn{1}{c}{$-0.19\pm0.30$} & $-0.09\pm0.07$ & \multicolumn{1}{c}{$-0.52\pm0.29$} & \multicolumn{1}{c}{0.94$\pm0.01$} & 0.96$\pm0.01$ & 0.94$\pm0.00$ & 0.94$\pm0.01$ \\
          \midrule
          \multirow{4}{*}{$32$} & \multicolumn{1}{c}{0.35$\pm0.17$} & \multicolumn{1}{c}{$-0.49\pm0.23$}     & $-1.10\pm0.48$ & \multicolumn{1}{c}{$-1.24\pm0.32$} & \multicolumn{1}{c}{0.25$\pm0.17$} & 0.41$\pm0.04$ & 0.35$\pm0.02$ & 0.28$\pm0.01$ \\
          \multirow{4}{*}{}  & \multicolumn{1}{c}{0.23$\pm0.01$} & \multicolumn{1}{c}{0.10$\pm0.03$} & 0.06$\pm0.03$ & \multicolumn{1}{c}{0.01$\pm0.04$} & \multicolumn{1}{c}{0.91$\pm0.02$} & 0.92$\pm0.00$ & 0.92$\pm0.00$ & 0.92$\pm0.00$ \\
          \multirow{4}{*}{}  & \multicolumn{1}{c}{0.20$\pm0.01$} & \multicolumn{1}{c}{0.12$\pm0.02$} & 0.06$\pm0.01$ & \multicolumn{1}{c}{0.02$\pm0.02$} & \multicolumn{1}{c}{0.94$\pm0.01$} & 0.95$\pm0.00$ & 0.95$\pm0.00$ & 0.93$\pm0.00$ \\
          \multirow{4}{*}{}  & \multicolumn{1}{c}{0.22$\pm0.01$} & \multicolumn{1}{c}{0.13$\pm0.02$} & 0.06$\pm0.03$ & \multicolumn{1}{c}{$-0.11\pm0.17$} & \multicolumn{1}{c}{0.93$\pm0.01$} & 0.94$\pm0.00$ & 0.95$\pm0.00$ & 0.94$\pm0.00$ \\
          \midrule
          \multirow{4}{*}{$64$} & \multicolumn{1}{c}{0.30$\pm0.14$} & \multicolumn{1}{c}{$-0.52\pm0.41$}     & $-0.72\pm0.31$ & \multicolumn{1}{c}{$-0.85\pm0.33$} & \multicolumn{1}{c}{0.26$\pm0.09$} & 0.33$\pm0.05$ & 0.27$\pm0.02$ & 0.19$\pm0.01$ \\
          \multirow{4}{*}{}  & \multicolumn{1}{c}{0.25$\pm0.01$} & \multicolumn{1}{c}{0.18$\pm0.00$} & 0.14$\pm0.01$ & \multicolumn{1}{c}{0.10$\pm0.01$} & \multicolumn{1}{c}{0.91$\pm0.01$} & 0.95$\pm0.00$ & 0.95$\pm0.00$ & 0.95$\pm0.00$ \\
          \multirow{4}{*}{}  & \multicolumn{1}{c}{0.24$\pm0.02$} & \multicolumn{1}{c}{0.18$\pm0.01$} & 0.13$\pm0.02$ & \multicolumn{1}{c}{0.07$\pm0.03$} & \multicolumn{1}{c}{0.91$\pm0.01$} & 0.94$\pm0.00$ & 0.95$\pm0.00$ & 0.94$\pm0.00$ \\
          \multirow{4}{*}{}  & \multicolumn{1}{c}{0.27$\pm0.02$} & \multicolumn{1}{c}{0.23$\pm0.01$} & 0.16$\pm0.01$ & \multicolumn{1}{c}{0.11$\pm0.03$} & \multicolumn{1}{c}{0.89$\pm0.01$} & 0.92$\pm0.00$ & 0.93$\pm0.00$ & 0.92$\pm0.00$ \\\hline
        \end{tabular}
      }
  }
\end{table}

\begin{table}[hpbt]
  \centering{
    {\caption{$\beta$-VAE - Linear Mixing Dataset $R^2$ scores, \textbf{Dynamic SCM} DGP. The results are averaged over 5 seeds. Each set of 4 rows correspond to a specific $d$ and from top to bottom denote the $R^2$ scores before training $\beta$-VAE, and after training with $\beta\in [0.1,1.0,10.0]$. $\hat{z},z\in R^{d}$ and $z_\mathcal{S}, z_{\mathcal{U}}\in R^{d/2}$ and $x=g(z)\in R^{2d}$.}
        \label{table:beta_linear_dynamic_scm}
        }%
      {
        \begin{tabular}{c cccc cccc}
          \hline
          \multicolumn{1}{c}{}
        & \multicolumn{4}{c}{$R^2_{\mathcal{S}}$}
& \multicolumn{4}{c}{$R^2_{\mathcal{U}}$}  \\ \cmidrule(lr){2-5} \cmidrule(lr){6-9}
          \multicolumn{1}{c}{$d$} & \multicolumn{1}{c}{$k=2$}  & \multicolumn{1}{c}{$k=4$}   & \multicolumn{1}{c}{$k=8$} & $k=16$ & \multicolumn{1}{c}{$k=2$}  & \multicolumn{1}{c}{$k=4$} & \multicolumn{1}{c}{$k=8$} & $k=16$ \\ \hline
          \multirow{4}{*}{$8$}  & \multicolumn{1}{c}{0.08$\pm0.10$} & \multicolumn{1}{c}{$-0.23\pm0.32$} & $-0.54\pm0.30$ & \multicolumn{1}{c}{$-0.51\pm0.27$} & \multicolumn{1}{c}{0.73$\pm0.13$} & 0.81$\pm0.05$ & 0.79$\pm0.04$ & 0.81$\pm0.04$ \\
          \multirow{4}{*}{}  & \multicolumn{1}{c}{0.26$\pm0.02$} & \multicolumn{1}{c}{$-0.44\pm0.39$} & $-0.35\pm0.27$ & \multicolumn{1}{c}{$-0.22\pm0.57$} & \multicolumn{1}{c}{0.85$\pm0.05$} & 0.85$\pm0.04$ & 0.77$\pm0.05$ & 0.82$\pm0.06$ \\
          \multirow{4}{*}{}  & \multicolumn{1}{c}{0.17$\pm0.09$} & \multicolumn{1}{c}{$-0.61\pm0.34$} & $-0.44\pm0.32$ & \multicolumn{1}{c}{$-0.74\pm0.39$} & \multicolumn{1}{c}{0.88$\pm0.03$} & 0.87$\pm0.03$ & 0.84$\pm0.02$ & 0.83$\pm0.04$ \\
          \multirow{4}{*}{}  & \multicolumn{1}{c}{$-0.17\pm0.33$} & \multicolumn{1}{c}{$-0.52\pm0.51$} & $-0.71\pm1.07$ & \multicolumn{1}{c}{$-1.04\pm0.33$} & \multicolumn{1}{c}{0.85$\pm0.04$} & 0.73$\pm0.19$ & 0.89$\pm0.04$ & 0.90$\pm0.03$ \\ 
          \midrule
          \multirow{4}{*}{$16$} & \multicolumn{1}{c}{$-0.53\pm0.43$} & \multicolumn{1}{c}{$-0.56\pm0.54$} & $-1.18\pm0.44$ & \multicolumn{1}{c}{$-1.23\pm0.32$} & \multicolumn{1}{c}{0.33$\pm0.28$} & 0.62$\pm0.04$ & 0.50$\pm0.03$ & 0.51$\pm0.02$ \\
          \multirow{4}{*}{}  & \multicolumn{1}{c}{0.19$\pm0.03$} & \multicolumn{1}{c}{0.04$\pm0.03$} & $-0.15\pm0.16$ & \multicolumn{1}{c}{$-0.32\pm0.31$} & \multicolumn{1}{c}{0.85$\pm0.05$} & 0.91$\pm0.03$ & 0.90$\pm0.02$ & 0.91$\pm0.01$ \\
          \multirow{4}{*}{}  & \multicolumn{1}{c}{0.17$\pm0.09$} & \multicolumn{1}{c}{0.00$\pm0.08$} & $-0.37\pm0.31$ & \multicolumn{1}{c}{$-0.51\pm0.49$} & \multicolumn{1}{c}{0.88$\pm0.03$} & 0.94$\pm0.01$ & 0.93$\pm0.01$ & 0.92$\pm0.01$ \\
          \multirow{4}{*}{}  & \multicolumn{1}{c}{0.13$\pm0.08$} & \multicolumn{1}{c}{$-0.13\pm0.32$} & $-0.15\pm0.16$ & \multicolumn{1}{c}{$-0.56\pm0.28$} & \multicolumn{1}{c}{0.94$\pm0.01$} & 0.95$\pm0.01$ & 0.93$\pm0.00$ & 0.93$\pm0.00$ \\ 
          \midrule
          \multirow{4}{*}{$32$} & \multicolumn{1}{c}{$-0.33\pm0.40$} & \multicolumn{1}{c}{$-0.35\pm0.11$}     & $-0.79\pm0.36$ & \multicolumn{1}{c}{$-0.86\pm0.21$} & \multicolumn{1}{c}{0.29$\pm0.15$} & 0.41$\pm0.03$ & 0.35$\pm0.02$ & 0.29$\pm0.01$ \\
          \multirow{4}{*}{}  & \multicolumn{1}{c}{0.22$\pm0.01$} & \multicolumn{1}{c}{0.12$\pm0.02$} & 0.05$\pm0.02$ & \multicolumn{1}{c}{$-0.01\pm0.12$} & \multicolumn{1}{c}{0.91$\pm0.02$} & 0.93$\pm0.00$ & 0.93$\pm0.00$ & 0.92$\pm0.00$ \\
          \multirow{4}{*}{}  & \multicolumn{1}{c}{0.20$\pm0.01$} & \multicolumn{1}{c}{0.12$\pm0.02$} & 0.07$\pm0.00$ & \multicolumn{1}{c}{$-0.04\pm0.10$} & \multicolumn{1}{c}{0.93$\pm0.01$} & 0.94$\pm0.00$ & 0.95$\pm0.00$ & 0.93$\pm0.01$ \\
          \multirow{4}{*}{}  & \multicolumn{1}{c}{0.23$\pm0.01$} & \multicolumn{1}{c}{0.15$\pm0.02$} & 0.10$\pm0.02$ & \multicolumn{1}{c}{0.03$\pm0.04$} & \multicolumn{1}{c}{0.93$\pm0.01$} & 0.93$\pm0.01$ & 0.94$\pm0.00$ & 0.90$\pm0.01$ \\
          \midrule
          \multirow{4}{*}{$64$} & \multicolumn{1}{c}{$-0.16\pm0.11$} & \multicolumn{1}{c}{$-0.31\pm0.22$}     & $-0.47\pm0.20$ & \multicolumn{1}{c}{$-0.53\pm0.16$} & \multicolumn{1}{c}{0.34$\pm0.05$} & 0.35$\pm0.04$ & 0.27$\pm0.02$ & 0.19$\pm0.01$ \\
          \multirow{4}{*}{}  & \multicolumn{1}{c}{0.24$\pm0.01$} & \multicolumn{1}{c}{0.19$\pm0.00$} & 0.15$\pm0.01$ & \multicolumn{1}{c}{0.11$\pm0.01$} & \multicolumn{1}{c}{0.90$\pm0.01$} & 0.95$\pm0.00$ & 0.95$\pm0.00$ & 0.94$\pm0.00$ \\
          \multirow{4}{*}{}  & \multicolumn{1}{c}{0.20$\pm0.01$} & \multicolumn{1}{c}{0.20$\pm0.01$} & 0.15$\pm0.01$ & \multicolumn{1}{c}{0.10$\pm0.03$} & \multicolumn{1}{c}{0.93$\pm0.01$} & 0.94$\pm0.00$ & 0.94$\pm0.00$ & 0.93$\pm0.00$ \\
          \multirow{4}{*}{}  & \multicolumn{1}{c}{0.29$\pm0.01$} & \multicolumn{1}{c}{0.23$\pm0.01$} & 0.18$\pm0.01$ & \multicolumn{1}{c}{0.16$\pm0.03$} & \multicolumn{1}{c}{0.88$\pm0.01$} & 0.92$\pm0.00$ & 0.92$\pm0.00$ & 0.89$\pm0.01$ \\\hline
        \end{tabular}
      }
  }
\end{table}

\subsection{Polynomial Mixing}
\label{appendix:polynomial_mixing}

\subsubsection{Data Generation and Model Architecture}
The latents $z$ are sampled identical to the procedure for linear mixing dataset and the details of the polynomial mixing function $g(\cdot)$ are found in Assumption \ref{assm1: dgp1}. For stage 1, we use a polynomial autoencoder as follows. The encoder architecture is given in Table \ref{table: poly_ae_architecture_supp} where $n,d$ denote the dimensions of $x,z$, respectively. For decoding the outputs of the above encoder $\hat{z}$, we use a polynomial decoder which takes $\hat{z}$ and follows the procedure explained in Assumption \ref{assm1: dgp1}, where the coefficient matrix $G$ is to be learned and is parameterized by a linear layer.

\begin{table}[hpbt]
  \centering{
    {\caption{Polynomial Encoder.}
        \label{table: poly_ae_architecture_supp}}
      {
        \begin{tabular}{ccccc}
          \hline
          Layer      & Input Size & Output Size & Bias  & Activation \\ \hline
          Linear (1) & $n$         & $n/2$          & False  & LeakyReLU(0.5)       \\
          Linear (2) & $n/2$         & $n/2$          & False  & LeakyReLU(0.5)       \\
          Linear (3) & $n/2$         & $d$          & False & -       \\\hline
        \end{tabular}
      }}
\end{table}

\subsubsection{Results}
In Tables~\ref{table: poly_indep_supp_minmax}, \ref{table: poly_indep_supp_mmd}, \ref{table: poly_indep_supp_minmax_mmd} and \ref{table: poly_dynamic_supp_minmax}, \ref{table: poly_dynamic_supp_mmd}, \ref{table: poly_dynamic_supp_minmax_mmd}  we provide additional results when $z$'s follow an independent and dynamic SCM via a polynomial mixing $g(\cdot)$. We conducted experiments for different values of the dimension of the underlying latents $d$, different polynomial degrees, and for a varying number of domains $k$. We divide each table into two sections with top block corresponding to degree two polynomials with varying $d$, and the bottom block corresponding to the degree three polynomials. For each dimension we present two rows, the top row corresponding to the $R^2$ scores after training an autoencoder with reconstruction objective only, and right before enforcing any distributional invariances. Since we only need an autoencoder that can fully reconstruct the input, there is no need for training multiple perfect autoencoders, hence there is no standard error reported for such entries. We then take the perfectly trained autoencoder and enforce the distributional invariance penalty with 5 seeds, and present the results in the bottom row per each dimension. Across the different settings, we observe that as the number of domains increase we achieve high $R^2_{\mathcal{S}}$ and low  $R^2_{\mathcal{U}}$.

\begin{table}[h]
  \centering{
    {\caption{Polynomial Mixing Dataset $R^2$ scores, \textbf{Independent} DGP. The results are averaged over 5 seeds. $\hat{z},z\in R^{d}$ and $z_\mathcal{S}, z_{\mathcal{U}}\in R^{d/2}$ and $x=g(z)\in R^{200}$. Penalty used here is Min-Max. Top section and bottom section correspond to polynomial degrees of 2 and 3. For each dimension $d$, the top row corresponds to the scores after training the autoencoder with reconstruction objective only, and the bottom row denotes the scores after enforcing distributional invariances in 5 different runs.}
        \label{table: poly_indep_supp_minmax}
        }%
      {
        \begin{tabular}{c cccc cccc}
          \hline
          \multicolumn{1}{c}{}
        & \multicolumn{4}{c}{$R^2_{\mathcal{S}}$}
& \multicolumn{4}{c}{$R^2_{\mathcal{U}}$}  \\ \cmidrule(lr){2-5} \cmidrule(lr){6-9}
          \multicolumn{1}{c}{$d$} & \multicolumn{1}{c}{$k=2$}  & \multicolumn{1}{c}{$k=4$}   & \multicolumn{1}{c}{$k=8$} & $k=16$ & \multicolumn{1}{c}{$k=2$}  & \multicolumn{1}{c}{$k=4$} & \multicolumn{1}{c}{$k=8$} & $k=16$ \\ \hline
          \multirow{2}{*}{$6$}  & \multicolumn{1}{c}{0.03} & \multicolumn{1}{c}{0.08} & 0.09 & \multicolumn{1}{c}{0.02} & \multicolumn{1}{c}{0.96} & 0.83 & 0.89 & 0.98 \\
          \multirow{2}{*}{}  & \multicolumn{1}{c}{0.42$\pm0.04$} & \multicolumn{1}{c}{0.62$\pm0.01$} & 0.99$\pm0.00$ & \multicolumn{1}{c}{0.99$\pm0.00$} & \multicolumn{1}{c}{0.01$\pm0.00$} & 0.01$\pm0.00$ & 0.00$\pm0.00$ & 0.00$\pm0.00$ \\
          \midrule
          \multirow{2}{*}{$8$}  & \multicolumn{1}{c}{0.26} & \multicolumn{1}{c}{0.15} & 0.08 & \multicolumn{1}{c}{0.12} & \multicolumn{1}{c}{0.80} & 0.92 & 0.91 & 0.95 \\
          \multirow{2}{*}{}  & \multicolumn{1}{c}{0.34$\pm0.02$} & \multicolumn{1}{c}{0.99$\pm0.00$} & 0.98$\pm0.01$ & \multicolumn{1}{c}{0.97$\pm0.01$} & \multicolumn{1}{c}{0.01$\pm0.00$} & 0.01$\pm0.00$ & 0.00$\pm0.00$ & 0.01$\pm0.00$ \\
          \midrule
          \multirow{2}{*}{$10$}  & \multicolumn{1}{c}{0.22} & \multicolumn{1}{c}{0.04} & 0.03 & \multicolumn{1}{c}{0.05} & \multicolumn{1}{c}{0.79} & 0.97 & 0.98 & 0.96 \\
          \multirow{2}{*}{}  & \multicolumn{1}{c}{0.32$\pm0.03$} & \multicolumn{1}{c}{0.90$\pm0.04$} & 0.94$\pm0.04$ & \multicolumn{1}{c}{0.92$\pm0.04$} & \multicolumn{1}{c}{0.04$\pm0.00$} & 0.01$\pm0.00$ & 0.01$\pm0.00$ & 0.01$\pm0.00$ \\
          \midrule
          \multirow{2}{*}{$12$}  & \multicolumn{1}{c}{0.07} & \multicolumn{1}{c}{0.19} & 0.04 & \multicolumn{1}{c}{0.17} & \multicolumn{1}{c}{0.95} & 0.90 & 0.98 & 0.88 \\
          \multirow{2}{*}{}  & \multicolumn{1}{c}{0.38$\pm0.03$} & \multicolumn{1}{c}{0.83$\pm0.01$} & 0.89$\pm0.00$ & \multicolumn{1}{c}{0.95$\pm0.02$} & \multicolumn{1}{c}{0.06$\pm0.02$} & 0.01$\pm0.00$ & 0.01$\pm0.00$ & 0.01$\pm0.00$ \\
          \midrule
          \multirow{2}{*}{$14$}  & \multicolumn{1}{c}{0.12} & \multicolumn{1}{c}{0.17} & 0.17 & \multicolumn{1}{c}{0.11} & \multicolumn{1}{c}{0.94} & 0.93 & 0.86 & 0.93 \\
          \multirow{2}{*}{}  & \multicolumn{1}{c}{0.34$\pm0.03$} & \multicolumn{1}{c}{0.67$\pm0.01$} & 0.95$\pm0.02$ & \multicolumn{1}{c}{0.96$\pm0.02$} & \multicolumn{1}{c}{0.05$\pm0.01$} & 0.04$\pm0.01$ & 0.01$\pm0.00$ & 0.01$\pm0.00$ \\ \hline\hline
   
          \multirow{2}{*}{$6$}  & \multicolumn{1}{c}{0.16} & \multicolumn{1}{c}{0.04} & 0.05 & \multicolumn{1}{c}{0.09} & \multicolumn{1}{c}{0.83} & 0.96 & 0.95 & 0.92 \\
          \multirow{2}{*}{}  & \multicolumn{1}{c}{0.33$\pm0.01$} & \multicolumn{1}{c}{0.62$\pm0.00$} & 0.80$\pm0.00$ & \multicolumn{1}{c}{0.97$\pm0.01$} & \multicolumn{1}{c}{0.05$\pm0.02$} & 0.00$\pm0.00$ & 0.00$\pm0.00$ & 0.00$\pm0.00$ \\
          \midrule
          \multirow{2}{*}{$8$}  & \multicolumn{1}{c}{0.06} & \multicolumn{1}{c}{0.25} & 0.23 & \multicolumn{1}{c}{0.20} & \multicolumn{1}{c}{0.95} & 0.87 & 0.83 & 0.81 \\
          \multirow{2}{*}{}  & \multicolumn{1}{c}{0.45$\pm0.06$} & \multicolumn{1}{c}{0.84$\pm0.03$} & 0.93$\pm0.01$ & \multicolumn{1}{c}{0.92$\pm0.01$} & \multicolumn{1}{c}{0.06$\pm0.04$} & 0.01$\pm0.00$ & 0.01$\pm0.00$ & 0.00$\pm0.00$ \\
          \midrule
          \multirow{2}{*}{$10$}  & \multicolumn{1}{c}{0.13} & \multicolumn{1}{c}{0.15} & 0.11 & \multicolumn{1}{c}{0.04} & \multicolumn{1}{c}{0.89} & 0.80 & 0.93 & 0.96 \\
          \multirow{2}{*}{}  & \multicolumn{1}{c}{0.48$\pm0.01$} & \multicolumn{1}{c}{0.83$\pm0.00$} & 0.87$\pm0.00$ & \multicolumn{1}{c}{0.93$\pm0.01$} & \multicolumn{1}{c}{0.02$\pm0.00$} & 0.01$\pm0.00$ & 0.01$\pm0.00$ & 0.01$\pm0.00$ \\
          \midrule
          \multirow{2}{*}{$12$}  & \multicolumn{1}{c}{0.20} & \multicolumn{1}{c}{0.21} & 0.18 & \multicolumn{1}{c}{0.19} & \multicolumn{1}{c}{0.85} & 0.84 & 0.85 & 0.82 \\
          \multirow{2}{*}{}  & \multicolumn{1}{c}{0.52$\pm0.02$} & \multicolumn{1}{c}{0.80$\pm0.03$} & 0.88$\pm0.01$ & \multicolumn{1}{c}{0.95$\pm0.01$} & \multicolumn{1}{c}{0.07$\pm0.02$} & 0.02$\pm0.00$ & 0.01$\pm0.00$ & 0.02$\pm0.00$ \\
          \midrule
          \multirow{2}{*}{$14$}  & \multicolumn{1}{c}{0.18} & \multicolumn{1}{c}{0.06} & 0.29 & \multicolumn{1}{c}{0.27} & \multicolumn{1}{c}{0.74} & 0.80 & 0.76 & 0.71 \\
          \multirow{2}{*}{}  & \multicolumn{1}{c}{0.26$\pm0.02$} & \multicolumn{1}{c}{0.32$\pm0.01$} & 0.91$\pm0.01$ & \multicolumn{1}{c}{0.93$\pm0.00$} & \multicolumn{1}{c}{0.10$\pm0.01$} & 0.03$\pm0.00$ & 0.01$\pm0.00$ & 0.01$\pm0.00$ \\ \hline
        \end{tabular}
      }
  }
\end{table}

\begin{table}[h]
  \centering{
    {\caption{Polynomial Mixing Dataset $R^2$ scores, \textbf{Independent} DGP. The results are averaged over 5 seeds. $\hat{z},z\in R^{d}$ and $z_\mathcal{S}, z_{\mathcal{U}}\in R^{d/2}$ and $x=g(z)\in R^{200}$. Penalty used here is MMD. Top section and bottom section correspond to polynomial degrees of 2 and 3. For each dimension $d$, the top and bottom rows correspond to the scores after Stages 1, 2.}
        \label{table: poly_indep_supp_mmd}
        }%
      {
        \begin{tabular}{c cccc cccc}
          \hline
          \multicolumn{1}{c}{}
        & \multicolumn{4}{c}{$R^2_{\mathcal{S}}$}
& \multicolumn{4}{c}{$R^2_{\mathcal{U}}$}  \\ \cmidrule(lr){2-5} \cmidrule(lr){6-9}
          \multicolumn{1}{c}{$d$} & \multicolumn{1}{c}{$k=2$}  & \multicolumn{1}{c}{$k=4$}   & \multicolumn{1}{c}{$k=8$} & $k=16$ & \multicolumn{1}{c}{$k=2$}  & \multicolumn{1}{c}{$k=4$} & \multicolumn{1}{c}{$k=8$} & $k=16$ \\ \hline
          \multirow{2}{*}{$6$}  & \multicolumn{1}{c}{0.03} & \multicolumn{1}{c}{0.08} & 0.09 & \multicolumn{1}{c}{0.02} & \multicolumn{1}{c}{0.96} & 0.83 & 0.86 & 0.98 \\
          \multirow{2}{*}{}  & \multicolumn{1}{c}{0.54$\pm0.04$} & \multicolumn{1}{c}{0.55$\pm0.04$} & 0.99$\pm0.00$ & \multicolumn{1}{c}{0.99$\pm0.00$} & \multicolumn{1}{c}{0.01$\pm0.00$} & 0.01$\pm0.00$ & 0.01$\pm0.00$ & 0.00$\pm0.00$ \\
          \midrule
          \multirow{2}{*}{$8$}  & \multicolumn{1}{c}{0.26} & \multicolumn{1}{c}{0.15} & 0.08 & \multicolumn{1}{c}{0.12} & \multicolumn{1}{c}{0.80} & 0.92 & 0.91 & 0.92 \\
          \multirow{2}{*}{}  & \multicolumn{1}{c}{0.42$\pm0.05$} & \multicolumn{1}{c}{0.76$\pm0.00$} & 0.98$\pm0.01$ & \multicolumn{1}{c}{0.97$\pm0.02$} & \multicolumn{1}{c}{0.08$\pm0.05$} & 0.01$\pm0.00$ & 0.00$\pm0.00$ & 0.01$\pm0.00$ \\ \midrule
          \multirow{2}{*}{$10$}  & \multicolumn{1}{c}{0.22} & \multicolumn{1}{c}{0.04} & 0.03 & \multicolumn{1}{c}{0.05} & \multicolumn{1}{c}{0.79} & 0.97 & 0.98 & 0.96 \\
          \multirow{2}{*}{}  & \multicolumn{1}{c}{0.52$\pm0.04$} & \multicolumn{1}{c}{0.81$\pm0.00$} & 0.86$\pm0.00$ & \multicolumn{1}{c}{0.91$\pm0.04$} & \multicolumn{1}{c}{0.05$\pm0.01$} & 0.01$\pm0.00$ & 0.01$\pm0.00$ & 0.01$\pm0.00$ \\ \midrule
          \multirow{2}{*}{$12$}  & \multicolumn{1}{c}{0.07} & \multicolumn{1}{c}{0.19} & 0.04 & \multicolumn{1}{c}{0.17} & \multicolumn{1}{c}{0.95} & 0.90 & 0.98 & 0.88 \\
          \multirow{2}{*}{}  & \multicolumn{1}{c}{0.48$\pm0.01$} & \multicolumn{1}{c}{0.65$\pm0.02$} & 0.90$\pm0.00$ & \multicolumn{1}{c}{0.98$\pm0.00$} & \multicolumn{1}{c}{0.12$\pm0.03$} & 0.02$\pm0.00$ & 0.01$\pm0.00$ & 0.01$\pm0.00$ \\ \midrule
          \multirow{2}{*}{$14$}  & \multicolumn{1}{c}{0.12} & \multicolumn{1}{c}{0.17} & 0.17 & \multicolumn{1}{c}{0.11} & \multicolumn{1}{c}{0.94} & 0.93 & 0.86 & 0.93 \\
          \multirow{2}{*}{}  & \multicolumn{1}{c}{0.52$\pm0.05$} & \multicolumn{1}{c}{0.55$\pm0.01$} & 0.99$\pm0.00$ & \multicolumn{1}{c}{0.98$\pm0.01$} & \multicolumn{1}{c}{0.05$\pm0.01$} & 0.06$\pm0.02$ & 0.01$\pm0.00$ & 0.01$\pm0.00$ \\ \hline\hline
          \multirow{2}{*}{$6$}  & \multicolumn{1}{c}{0.16} & \multicolumn{1}{c}{0.04} & 0.05 & \multicolumn{1}{c}{0.09} & \multicolumn{1}{c}{0.83} & 0.96 & 0.95 & 0.92 \\ 
          \multirow{2}{*}{}  & \multicolumn{1}{c}{0.46$\pm0.05$} & \multicolumn{1}{c}{0.63$\pm0.00$} & 0.80$\pm0.00$ & \multicolumn{1}{c}{0.98$\pm0.01$} & \multicolumn{1}{c}{0.05$\pm0.03$} & 0.01$\pm0.00$ & 0.00$\pm0.00$ & 0.00$\pm0.00$ \\ \midrule
          \multirow{2}{*}{$8$}  & \multicolumn{1}{c}{0.06} & \multicolumn{1}{c}{0.25} & 0.23 & \multicolumn{1}{c}{0.20} & \multicolumn{1}{c}{0.95} & 0.87 & 0.83 & 0.81 \\
          \multirow{2}{*}{}  & \multicolumn{1}{c}{0.54$\pm0.02$} & \multicolumn{1}{c}{0.73$\pm0.01$} & 0.92$\pm0.04$ & \multicolumn{1}{c}{0.98$\pm0.00$} & \multicolumn{1}{c}{0.07$\pm0.04$} & 0.02$\pm0.00$ & 0.01$\pm0.00$ & 0.00$\pm0.00$ \\ \midrule
          \multirow{2}{*}{$10$}  & \multicolumn{1}{c}{0.13} & \multicolumn{1}{c}{0.15} & 0.11 & \multicolumn{1}{c}{0.04} & \multicolumn{1}{c}{0.89} & 0.80 & 0.93 & 0.96 \\
          \multirow{2}{*}{}  & \multicolumn{1}{c}{0.49$\pm0.04$} & \multicolumn{1}{c}{0.72$\pm0.01$} & 0.87$\pm0.00$ & \multicolumn{1}{c}{0.98$\pm0.00$} & \multicolumn{1}{c}{0.05$\pm0.01$} & 0.01$\pm0.00$ & 0.01$\pm0.00$ & 0.01$\pm0.00$ \\ \midrule
          \multirow{2}{*}{$12$}  & \multicolumn{1}{c}{0.20} & \multicolumn{1}{c}{0.21} & 0.18 & \multicolumn{1}{c}{0.19} & \multicolumn{1}{c}{0.85} & 0.84 & 0.85 & 0.82 \\
          \multirow{2}{*}{}  & \multicolumn{1}{c}{0.51$\pm0.02$} & \multicolumn{1}{c}{0.74$\pm0.02$} & 0.89$\pm0.00$ & \multicolumn{1}{c}{0.97$\pm0.00$} & \multicolumn{1}{c}{0.12$\pm0.02$} & 0.04$\pm0.00$ & 0.01$\pm0.00$ & 0.01$\pm0.00$ \\ \midrule
          \multirow{2}{*}{$14$}  & \multicolumn{1}{c}{0.18} & \multicolumn{1}{c}{0.06} & 0.29 & \multicolumn{1}{c}{0.27} & \multicolumn{1}{c}{0.74} & 0.80 & 0.76 & 0.71 \\
          \multirow{2}{*}{}  & \multicolumn{1}{c}{0.38$\pm0.02$} & \multicolumn{1}{c}{0.40$\pm0.00$} & 0.94$\pm0.00$ & \multicolumn{1}{c}{0.95$\pm0.00$} & \multicolumn{1}{c}{0.08$\pm0.00$} & 0.03$\pm0.00$ & 0.02$\pm0.00$ & 0.01$\pm0.00$ \\ \hline
        \end{tabular}
      }
  }
\end{table}

\begin{table}[h]
  \centering{
    {\caption{Polynomial Mixing Dataset $R^2$ scores, \textbf{Independent} DGP. The results are averaged over 5 seeds. $\hat{z},z\in R^{d}$ and $z_\mathcal{S}, z_{\mathcal{U}}\in R^{d/2}$ and $x=g(z)\in R^{200}$. Penalty used here is MMD$+$Min-Max. Top section and bottom section correspond to polynomial degrees of 2 and 3. For each dimension $d$, the top and bottom rows correspond to the scores after Stages 1, 2.}
        \label{table: poly_indep_supp_minmax_mmd}
        }%
      {
        \begin{tabular}{c cccc cccc}
          \hline
          \multicolumn{1}{c}{}
        & \multicolumn{4}{c}{$R^2_{\mathcal{S}}$}
& \multicolumn{4}{c}{$R^2_{\mathcal{U}}$}  \\ \cmidrule(lr){2-5} \cmidrule(lr){6-9}
          \multicolumn{1}{c}{$d$} & \multicolumn{1}{c}{$k=2$}  & \multicolumn{1}{c}{$k=4$}   & \multicolumn{1}{c}{$k=8$} & $k=16$ & \multicolumn{1}{c}{$k=2$}  & \multicolumn{1}{c}{$k=4$} & \multicolumn{1}{c}{$k=8$} & $k=16$ \\ \hline
          \multirow{2}{*}{$6$}  & \multicolumn{1}{c}{0.03} & \multicolumn{1}{c}{0.08} & 0.09 & \multicolumn{1}{c}{0.02} & \multicolumn{1}{c}{0.96} & 0.83 & 0.89 & 0.98 \\
          \multirow{2}{*}{}  & \multicolumn{1}{c}{0.60$\pm0.06$} & \multicolumn{1}{c}{0.60$\pm0.00$} & 0.99$\pm0.00$ & \multicolumn{1}{c}{0.99$\pm0.01$} & \multicolumn{1}{c}{0.02$\pm0.01$} & 0.00$\pm0.00$ & 0.01$\pm0.00$ & 0.00$\pm0.00$ \\ \midrule
          \multirow{2}{*}{$8$}  & \multicolumn{1}{c}{0.26} & \multicolumn{1}{c}{0.15} & 0.08 & \multicolumn{1}{c}{0.12} & \multicolumn{1}{c}{0.80} & 0.92 & 0.91 & 0.92 \\
          \multirow{2}{*}{}  & \multicolumn{1}{c}{0.52$\pm0.04$} & \multicolumn{1}{c}{0.98$\pm0.00$} & 0.97$\pm0.01$ & \multicolumn{1}{c}{0.99$\pm0.00$} & \multicolumn{1}{c}{0.03$\pm0.02$} & 0.00$\pm0.00$ & 0.00$\pm0.00$ & 0.01$\pm0.00$ \\ \midrule
          \multirow{2}{*}{$10$}  & \multicolumn{1}{c}{0.22} & \multicolumn{1}{c}{0.04} & 0.03 & \multicolumn{1}{c}{0.05} & \multicolumn{1}{c}{0.79} & 0.97 & 0.98 & 0.96 \\
          \multirow{2}{*}{}  & \multicolumn{1}{c}{0.68$\pm0.03$} & \multicolumn{1}{c}{0.96$\pm0.01$} & 0.95$\pm0.03$ & \multicolumn{1}{c}{0.94$\pm0.01$} & \multicolumn{1}{c}{0.02$\pm0.00$} & 0.01$\pm0.00$ & 0.01$\pm0.00$ & 0.01$\pm0.00$ \\ \midrule
          \multirow{2}{*}{$12$}  & \multicolumn{1}{c}{0.07} & \multicolumn{1}{c}{0.19} & 0.04 & \multicolumn{1}{c}{0.17} & \multicolumn{1}{c}{0.95} & 0.90 & 0.98 & 0.88 \\
          \multirow{2}{*}{}  & \multicolumn{1}{c}{0.63$\pm0.04$} & \multicolumn{1}{c}{0.92$\pm0.00$} & 0.90$\pm0.00$ & \multicolumn{1}{c}{0.97$\pm0.01$} & \multicolumn{1}{c}{0.02$\pm0.00$} & 0.01$\pm0.00$ & 0.01$\pm0.00$ & 0.01$\pm0.00$ \\ \midrule
          \multirow{2}{*}{$14$}  & \multicolumn{1}{c}{0.12} & \multicolumn{1}{c}{0.17} & 0.17 & \multicolumn{1}{c}{0.11} & \multicolumn{1}{c}{0.94} & 0.93 & 0.86 & 0.93 \\
          \multirow{2}{*}{}  & \multicolumn{1}{c}{0.63$\pm0.02$} & \multicolumn{1}{c}{0.65$\pm0.00$} & 0.96$\pm0.02$ & \multicolumn{1}{c}{0.98$\pm0.01$} & \multicolumn{1}{c}{0.04$\pm0.02$} & 0.04$\pm0.01$ & 0.01$\pm0.00$ & 0.01$\pm0.00$ \\ \hline\hline
          \multirow{2}{*}{$6$}  & \multicolumn{1}{c}{0.16} & \multicolumn{1}{c}{0.04} & 0.05 & \multicolumn{1}{c}{0.10} & \multicolumn{1}{c}{0.83} & 0.96 & 0.95 & 0.92 \\
          \multirow{2}{*}{}  & \multicolumn{1}{c}{0.44$\pm0.03$} & \multicolumn{1}{c}{0.63$\pm0.00$} & 0.80$\pm0.00$ & \multicolumn{1}{c}{0.96$\pm0.02$} & \multicolumn{1}{c}{0.03$\pm0.01$} & 0.00$\pm0.00$ & 0.00$\pm0.00$ & 0.01$\pm0.00$ \\ \midrule
          \multirow{2}{*}{$8$}  & \multicolumn{1}{c}{0.06} & \multicolumn{1}{c}{0.25} & 0.23 & \multicolumn{1}{c}{0.20} & \multicolumn{1}{c}{0.95} & 0.87 & 0.83 & 0.81 \\
          \multirow{2}{*}{}  & \multicolumn{1}{c}{0.63$\pm0.04$} & \multicolumn{1}{c}{0.91$\pm0.02$} & 0.93$\pm0.02$ & \multicolumn{1}{c}{0.97$\pm0.01$} & \multicolumn{1}{c}{0.03$\pm0.02$} & 0.01$\pm0.00$ & 0.01$\pm0.00$ & 0.00$\pm0.00$ \\ \midrule
          \multirow{2}{*}{$10$}  & \multicolumn{1}{c}{0.13} & \multicolumn{1}{c}{0.15} & 0.11 & \multicolumn{1}{c}{0.04} & \multicolumn{1}{c}{0.89} & 0.80 & 0.93 & 0.96 \\
          \multirow{2}{*}{}  & \multicolumn{1}{c}{0.62$\pm0.04$} & \multicolumn{1}{c}{0.79$\pm0.04$} & 0.85$\pm0.01$ & \multicolumn{1}{c}{0.97$\pm0.00$} & \multicolumn{1}{c}{0.02$\pm0.00$} & 0.01$\pm0.00$ & 0.01$\pm0.00$ & 0.01$\pm0.00$ \\ \midrule
          \multirow{2}{*}{$12$}  & \multicolumn{1}{c}{0.20} & \multicolumn{1}{c}{0.21} & 0.18 & \multicolumn{1}{c}{0.20} & \multicolumn{1}{c}{0.85} & 0.84 & 0.85 & 0.82 \\
          \multirow{2}{*}{}  & \multicolumn{1}{c}{0.62$\pm0.02$} & \multicolumn{1}{c}{0.81$\pm0.01$} & 0.89$\pm0.00$ & \multicolumn{1}{c}{0.97$\pm0.00$} & \multicolumn{1}{c}{0.08$\pm0.02$} & 0.02$\pm0.00$ & 0.01$\pm0.00$ & 0.01$\pm0.00$ \\ \midrule
          \multirow{2}{*}{$14$}  & \multicolumn{1}{c}{0.18} & \multicolumn{1}{c}{0.06} & 0.29 & \multicolumn{1}{c}{0.27} & \multicolumn{1}{c}{0.74} & 0.80 & 0.76 & 0.71 \\
          \multirow{2}{*}{}  & \multicolumn{1}{c}{0.36$\pm0.01$} & \multicolumn{1}{c}{0.39$\pm0.00$} & 0.93$\pm0.02$ & \multicolumn{1}{c}{0.95$\pm0.00$} & \multicolumn{1}{c}{0.10$\pm0.01$} & 0.03$\pm0.00$ & 0.02$\pm0.00$ & 0.01$\pm0.00$ \\ \hline
        \end{tabular}
      }
  }
\end{table}

\begin{table}[h]
  \centering{
    {\caption{Polynomial Mixing Dataset $R^2$ scores, \textbf{Dynamic SCM} DGP. The results are averaged over 5 seeds. $\hat{z},z\in R^{d}$ and $z_\mathcal{S}, z_{\mathcal{U}}\in R^{d/2}$ and $x=g(z)\in R^{200}$. Penalty used here is Min-Max. Top section and bottom section correspond to polynomial degrees of 2 and 3. For each dimension $d$, the top and bottom rows correspond to the scores after Stages 1, 2.}
        \label{table: poly_dynamic_supp_minmax}
        }%
      {
        \begin{tabular}{c cccc cccc}
          \hline
          \multicolumn{1}{c}{}
        & \multicolumn{4}{c}{$R^2_{\mathcal{S}}$}
& \multicolumn{4}{c}{$R^2_{\mathcal{U}}$}  \\ \cmidrule(lr){2-5} \cmidrule(lr){6-9}
          \multicolumn{1}{c}{$d$} & \multicolumn{1}{c}{$k=2$}  & \multicolumn{1}{c}{$k=4$}   & \multicolumn{1}{c}{$k=8$} & $k=16$ & \multicolumn{1}{c}{$k=2$}  & \multicolumn{1}{c}{$k=4$} & \multicolumn{1}{c}{$k=8$} & $k=16$ \\ \hline
          \multirow{2}{*}{$6$}  & \multicolumn{1}{c}{0.05} & \multicolumn{1}{c}{0.02} & 0.01 & \multicolumn{1}{c}{0.05} & \multicolumn{1}{c}{0.95} & 0.97 & 0.99 & 0.95 \\
          \multirow{2}{*}{}  & \multicolumn{1}{c}{0.28$\pm0.04$} & \multicolumn{1}{c}{0.96$\pm0.01$} & 0.71$\pm0.00$ & \multicolumn{1}{c}{0.99$\pm0.00$} & \multicolumn{1}{c}{0.03$\pm0.02$} & 0.01$\pm0.00$ & 0.01$\pm0.00$ & 0.01$\pm0.00$ \\ \midrule
          \multirow{2}{*}{$8$}  & \multicolumn{1}{c}{0.20} & \multicolumn{1}{c}{0.18} & 0.14 & \multicolumn{1}{c}{0.01} & \multicolumn{1}{c}{0.88} & 0.87 & 0.85 & 0.99 \\
          \multirow{2}{*}{}  & \multicolumn{1}{c}{0.39$\pm0.03$} & \multicolumn{1}{c}{0.71$\pm0.01$} & 0.78$\pm0.01$ & \multicolumn{1}{c}{0.93$\pm0.04$} & \multicolumn{1}{c}{0.05$\pm0.01$} & 0.01$\pm0.00$ & 0.01$\pm0.00$ & 0.01$\pm0.00$ \\ \midrule 
          \multirow{2}{*}{$10$}  & \multicolumn{1}{c}{0.19} & \multicolumn{1}{c}{0.04} & 0.02 & \multicolumn{1}{c}{0.05} & \multicolumn{1}{c}{0.86} & 0.98 & 0.99 & 0.97 \\
          \multirow{2}{*}{}  & \multicolumn{1}{c}{0.35$\pm0.04$} & \multicolumn{1}{c}{0.76$\pm0.02$} & 0.98$\pm0.00$ & \multicolumn{1}{c}{0.99$\pm0.00$} & \multicolumn{1}{c}{0.07$\pm0.03$} & 0.01$\pm0.00$ & 0.01$\pm0.00$ & 0.01$\pm0.00$ \\ \midrule
          \multirow{2}{*}{$12$}  & \multicolumn{1}{c}{0.05} & \multicolumn{1}{c}{0.18} & 0.13 & \multicolumn{1}{c}{0.11} & \multicolumn{1}{c}{0.97} & 0.90 & 0.92 & 0.93 \\
          \multirow{2}{*}{}  & \multicolumn{1}{c}{0.41$\pm0.03$} & \multicolumn{1}{c}{0.80$\pm0.01$} & 0.97$\pm0.01$ & \multicolumn{1}{c}{0.97$\pm0.01$} & \multicolumn{1}{c}{0.09$\pm0.03$} & 0.01$\pm0.00$ & 0.01$\pm0.00$ & 0.01$\pm0.00$ \\ \midrule
          \multirow{2}{*}{$14$}  & \multicolumn{1}{c}{0.11} & \multicolumn{1}{c}{0.16} & 0.17 & \multicolumn{1}{c}{0.04} & \multicolumn{1}{c}{0.95} & 0.94 & 0.83 & 0.98 \\
          \multirow{2}{*}{}  & \multicolumn{1}{c}{0.39$\pm0.01$} & \multicolumn{1}{c}{0.69$\pm0.01$} & 0.95$\pm0.02$ & \multicolumn{1}{c}{0.97$\pm0.01$} & \multicolumn{1}{c}{0.06$\pm0.01$} & 0.04$\pm0.01$ & 0.02$\pm0.00$ & 0.01$\pm0.00$ \\ \hline\hline
          \multirow{2}{*}{$6$}  & \multicolumn{1}{c}{0.30} & \multicolumn{1}{c}{0.16} & 0.30 & \multicolumn{1}{c}{0.07} & \multicolumn{1}{c}{0.71} & 0.88 & 0.74 & 0.94 \\
          \multirow{2}{*}{}  & \multicolumn{1}{c}{0.34$\pm0.01$} & \multicolumn{1}{c}{0.92$\pm0.01$} & 0.98$\pm0.00$ & \multicolumn{1}{c}{0.99$\pm0.00$} & \multicolumn{1}{c}{0.03$\pm0.02$} & 0.01$\pm0.00$ & 0.01$\pm0.00$ & 0.01$\pm0.00$ \\ \midrule
          \multirow{2}{*}{$8$}  & \multicolumn{1}{c}{0.18} & \multicolumn{1}{c}{0.06} & 0.13 & \multicolumn{1}{c}{0.21} & \multicolumn{1}{c}{0.86} & 0.98 & 0.89 & 0.80 \\
          \multirow{2}{*}{}  & \multicolumn{1}{c}{0.45$\pm0.05$} & \multicolumn{1}{c}{0.90$\pm0.04$} & 0.95$\pm0.01$ & \multicolumn{1}{c}{0.95$\pm0.02$} & \multicolumn{1}{c}{0.07$\pm0.04$} & 0.01$\pm0.00$ & 0.01$\pm0.00$ & 0.01$\pm0.00$ \\ \midrule
          \multirow{2}{*}{$10$}  & \multicolumn{1}{c}{0.21} & \multicolumn{1}{c}{0.12} & 0.22 & \multicolumn{1}{c}{0.07} & \multicolumn{1}{c}{0.87} & 0.84 & 0.87 & 0.94 \\
          \multirow{2}{*}{}  & \multicolumn{1}{c}{0.42$\pm0.02$} & \multicolumn{1}{c}{0.70$\pm0.00$} & 0.95$\pm0.01$ & \multicolumn{1}{c}{0.95$\pm0.01$} & \multicolumn{1}{c}{0.08$\pm0.03$} & 0.01$\pm0.00$ & 0.01$\pm0.00$ & 0.01$\pm0.00$ \\ \midrule
          \multirow{2}{*}{$12$}  & \multicolumn{1}{c}{0.15} & \multicolumn{1}{c}{0.23} & 0.14 & \multicolumn{1}{c}{0.17} & \multicolumn{1}{c}{0.88} & 0.81 & 0.87 & 0.85 \\
          \multirow{2}{*}{}  & \multicolumn{1}{c}{0.44$\pm0.02$} & \multicolumn{1}{c}{0.81$\pm0.02$} & 0.87$\pm0.03$ & \multicolumn{1}{c}{0.94$\pm0.01$} & \multicolumn{1}{c}{0.11$\pm0.01$} & 0.02$\pm0.00$ & 0.01$\pm0.00$ & 0.01$\pm0.00$ \\ \midrule
          \multirow{2}{*}{$14$}  & \multicolumn{1}{c}{0.22} & \multicolumn{1}{c}{0.16} & 0.25 & \multicolumn{1}{c}{0.22} & \multicolumn{1}{c}{0.69} & 0.79 & 0.80 & 0.73 \\
          \multirow{2}{*}{}  & \multicolumn{1}{c}{0.34$\pm0.02$} & \multicolumn{1}{c}{0.63$\pm0.01$} & 0.88$\pm0.02$ & \multicolumn{1}{c}{0.92$\pm0.00$} & \multicolumn{1}{c}{0.08$\pm0.01$} & 0.03$\pm0.00$ & 0.02$\pm0.00$ & 0.02$\pm0.00$ \\ \hline
        \end{tabular}
      }
  }
\end{table}

\begin{table}[h]
  \centering{
    {\caption{Polynomial Mixing Dataset $R^2$ scores, \textbf{Dynamic SCM} DGP. The results are averaged over 5 seeds. $\hat{z},z\in R^{d}$ and $z_\mathcal{S}, z_{\mathcal{U}}\in R^{d/2}$ and $x=g(z)\in R^{200}$. Penalty used here is MMD. Top section and bottom section correspond to polynomial degrees of 2 and 3. 
    For each dimension $d$, the top and bottom rows correspond to the scores after Stages 1, 2.}
        \label{table: poly_dynamic_supp_mmd}
        }%
      {
        \begin{tabular}{c cccc cccc}
          \hline
          \multicolumn{1}{c}{}
        & \multicolumn{4}{c}{$R^2_{\mathcal{S}}$}
& \multicolumn{4}{c}{$R^2_{\mathcal{U}}$}  \\ \cmidrule(lr){2-5} \cmidrule(lr){6-9}
          \multicolumn{1}{c}{$d$} & \multicolumn{1}{c}{$k=2$}  & \multicolumn{1}{c}{$k=4$}   & \multicolumn{1}{c}{$k=8$} & $k=16$ & \multicolumn{1}{c}{$k=2$}  & \multicolumn{1}{c}{$k=4$} & \multicolumn{1}{c}{$k=8$} & $k=16$ \\ \hline
          \multirow{2}{*}{$6$}  & \multicolumn{1}{c}{0.05} & \multicolumn{1}{c}{0.02} & 0.01 & \multicolumn{1}{c}{0.05} & \multicolumn{1}{c}{0.95} & 0.97 & 0.99 & 0.95 \\
          \multirow{2}{*}{}  & \multicolumn{1}{c}{0.25$\pm0.06$} & \multicolumn{1}{c}{0.85$\pm0.03$} & 0.64$\pm0.02$ & \multicolumn{1}{c}{0.99$\pm0.00$} & \multicolumn{1}{c}{0.08$\pm0.05$} & 0.05$\pm0.01$ & 0.01$\pm0.00$ & 0.01$\pm0.00$ \\ \midrule
          \multirow{2}{*}{$8$}  & \multicolumn{1}{c}{0.20} & \multicolumn{1}{c}{0.18} & 0.14 & \multicolumn{1}{c}{0.01} & \multicolumn{1}{c}{0.88} & 0.87 & 0.85 & 0.99 \\
          \multirow{2}{*}{}  & \multicolumn{1}{c}{0.41$\pm0.03$} & \multicolumn{1}{c}{0.58$\pm0.03$} & 0.73$\pm0.02$ & \multicolumn{1}{c}{0.95$\pm0.03$} & \multicolumn{1}{c}{0.08$\pm0.04$} & 0.08$\pm0.03$ & 0.02$\pm0.00$ & 0.01$\pm0.00$ \\ \midrule
          \multirow{2}{*}{$10$}  & \multicolumn{1}{c}{0.19} & \multicolumn{1}{c}{0.04} & 0.02 & \multicolumn{1}{c}{0.04} & \multicolumn{1}{c}{0.86} & 0.98 & 0.99 & 0.97 \\
          \multirow{2}{*}{}  & \multicolumn{1}{c}{0.41$\pm0.02$} & \multicolumn{1}{c}{0.53$\pm0.02$} & 0.91$\pm0.01$ & \multicolumn{1}{c}{0.96$\pm0.02$} & \multicolumn{1}{c}{0.10$\pm0.03$} & 0.04$\pm0.01$ & 0.02$\pm0.00$ & 0.01$\pm0.00$ \\ \midrule
          \multirow{2}{*}{$12$}  & \multicolumn{1}{c}{0.05} & \multicolumn{1}{c}{0.18} & 0.13 & \multicolumn{1}{c}{0.11} & \multicolumn{1}{c}{0.97} & 0.90 & 0.92 & 0.93 \\
          \multirow{2}{*}{}  & \multicolumn{1}{c}{0.39$\pm0.03$} & \multicolumn{1}{c}{0.54$\pm0.02$} & 0.89$\pm0.01$ & \multicolumn{1}{c}{0.99$\pm0.00$} & \multicolumn{1}{c}{0.10$\pm0.03$} & 0.04$\pm0.01$ & 0.03$\pm0.01$ & 0.01$\pm0.00$ \\ \midrule
          \multirow{2}{*}{$14$}  & \multicolumn{1}{c}{0.11} & \multicolumn{1}{c}{0.16} & 0.17 & \multicolumn{1}{c}{0.04} & \multicolumn{1}{c}{0.95} & 0.94 & 0.83 & 0.98 \\
          \multirow{2}{*}{}  & \multicolumn{1}{c}{0.38$\pm0.02$} & \multicolumn{1}{c}{0.48$\pm0.02$} & 0.89$\pm0.01$ & \multicolumn{1}{c}{0.99$\pm0.00$} & \multicolumn{1}{c}{0.09$\pm0.02$} & 0.10$\pm0.02$ & 0.03$\pm0.00$ & 0.01$\pm0.00$ \\ \hline\hline
          \multirow{2}{*}{$6$}  & \multicolumn{1}{c}{0.30} & \multicolumn{1}{c}{0.16} & 0.30 & \multicolumn{1}{c}{0.07} & \multicolumn{1}{c}{0.71} & 0.88 & 0.74 & 0.94 \\
          \multirow{2}{*}{}  & \multicolumn{1}{c}{0.41$\pm0.03$} & \multicolumn{1}{c}{0.77$\pm0.03$} & 0.86$\pm0.03$ & \multicolumn{1}{c}{0.98$\pm0.00$} & \multicolumn{1}{c}{0.04$\pm0.01$} & 0.05$\pm0.02$ & 0.02$\pm0.01$ & 0.01$\pm0.00$ \\ \midrule
          \multirow{2}{*}{$8$}  & \multicolumn{1}{c}{0.18} & \multicolumn{1}{c}{0.06} & 0.13 & \multicolumn{1}{c}{0.21} & \multicolumn{1}{c}{0.86} & 0.98 & 0.89 & 0.80 \\
          \multirow{2}{*}{}  & \multicolumn{1}{c}{0.46$\pm0.02$} & \multicolumn{1}{c}{0.66$\pm0.02$} & 0.84$\pm0.03$ & \multicolumn{1}{c}{0.98$\pm0.00$} & \multicolumn{1}{c}{0.05$\pm0.02$} & 0.02$\pm0.01$ & 0.02$\pm0.00$ & 0.01$\pm0.00$ \\ \midrule
          \multirow{2}{*}{$10$}  & \multicolumn{1}{c}{0.21} & \multicolumn{1}{c}{0.12} & 0.22 & \multicolumn{1}{c}{0.07} & \multicolumn{1}{c}{0.87} & 0.84 & 0.87 & 0.94 \\
          \multirow{2}{*}{}  & \multicolumn{1}{c}{0.42$\pm0.03$} & \multicolumn{1}{c}{0.55$\pm0.02$} & 0.90$\pm0.01$ & \multicolumn{1}{c}{0.96$\pm0.00$} & \multicolumn{1}{c}{0.08$\pm0.01$} & 0.03$\pm0.00$ & 0.03$\pm0.00$ & 0.01$\pm0.00$ \\ \midrule
          \multirow{2}{*}{$12$}  & \multicolumn{1}{c}{0.15} & \multicolumn{1}{c}{0.23} & 0.14 & \multicolumn{1}{c}{0.17} & \multicolumn{1}{c}{0.88} & 0.81 & 0.87 & 0.85 \\
          \multirow{2}{*}{}  & \multicolumn{1}{c}{0.45$\pm0.01$} & \multicolumn{1}{c}{0.64$\pm0.02$} & 0.85$\pm0.01$ & \multicolumn{1}{c}{0.96$\pm0.00$} & \multicolumn{1}{c}{0.11$\pm0.01$} & 0.07$\pm0.01$ & 0.04$\pm0.00$ & 0.01$\pm0.00$ \\ \midrule
          \multirow{2}{*}{$14$}  & \multicolumn{1}{c}{0.22} & \multicolumn{1}{c}{0.16} & 0.25 & \multicolumn{1}{c}{0.22} & \multicolumn{1}{c}{0.69} & 0.79 & 0.80 & 0.73 \\
          \multirow{2}{*}{}  & \multicolumn{1}{c}{0.34$\pm0.01$} & \multicolumn{1}{c}{0.52$\pm0.01$} & 0.85$\pm0.01$ & \multicolumn{1}{c}{0.93$\pm0.01$} & \multicolumn{1}{c}{0.09$\pm0.01$} & 0.09$\pm0.02$ & 0.05$\pm0.00$ & 0.02$\pm0.00$ \\ \hline
        \end{tabular}
      }
  }
\end{table}

\begin{table}[h]
  \centering{
    {\caption{Polynomial Mixing Dataset $R^2$ scores, \textbf{Dynamic SCM} DGP. The results are averaged over 5 seeds. $\hat{z},z\in R^{d}$ and $z_\mathcal{S}, z_{\mathcal{U}}\in R^{d/2}$ and $x=g(z)\in R^{200}$. Penalty used here is MMD$+$Min-Max. Top section and bottom section correspond to polynomial degrees of 2 and 3. For each dimension $d$, the top and bottom rows correspond to the scores after Stages 1, 2.}
        \label{table: poly_dynamic_supp_minmax_mmd}
        }%
      {
        \begin{tabular}{c cccc cccc}
          \hline
          \multicolumn{1}{c}{}
        & \multicolumn{4}{c}{$R^2_{\mathcal{S}}$}
& \multicolumn{4}{c}{$R^2_{\mathcal{U}}$}  \\ \cmidrule(lr){2-5} \cmidrule(lr){6-9}
          \multicolumn{1}{c}{$d$} & \multicolumn{1}{c}{$k=2$}  & \multicolumn{1}{c}{$k=4$}   & \multicolumn{1}{c}{$k=8$} & $k=16$ & \multicolumn{1}{c}{$k=2$}  & \multicolumn{1}{c}{$k=4$} & \multicolumn{1}{c}{$k=8$} & $k=16$ \\ \hline
          \multirow{2}{*}{$6$}  & \multicolumn{1}{c}{0.05} & \multicolumn{1}{c}{0.16} & 0.01 & \multicolumn{1}{c}{0.05} & \multicolumn{1}{c}{0.95} & 0.88 & 0.99 & 0.95 \\
          \multirow{2}{*}{}  & \multicolumn{1}{c}{0.32$\pm0.04$} & \multicolumn{1}{c}{0.81$\pm0.03$} & 0.69$\pm0.01$ & \multicolumn{1}{c}{0.99$\pm0.00$} & \multicolumn{1}{c}{0.02$\pm0.00$} & 0.05$\pm0.02$ & 0.01$\pm0.00$ & 0.01$\pm0.00$ \\  \midrule
          \multirow{2}{*}{$8$}  & \multicolumn{1}{c}{0.20} & \multicolumn{1}{c}{0.06} & 0.14 & \multicolumn{1}{c}{0.01} & \multicolumn{1}{c}{0.88} & 0.98 & 0.85 & 0.99 \\
          \multirow{2}{*}{}  & \multicolumn{1}{c}{0.43$\pm0.02$} & \multicolumn{1}{c}{0.82$\pm0.02$} & 0.80$\pm0.00$ & \multicolumn{1}{c}{0.95$\pm0.03$} & \multicolumn{1}{c}{0.05$\pm0.01$} & 0.02$\pm0.01$ & 0.01$\pm0.00$ & 0.01$\pm0.00$ \\ \midrule
          \multirow{2}{*}{$10$}  & \multicolumn{1}{c}{0.19} & \multicolumn{1}{c}{0.04} & 0.02 & \multicolumn{1}{c}{0.04} & \multicolumn{1}{c}{0.86} & 0.98 & 0.99 & 0.97 \\
          \multirow{2}{*}{}  & \multicolumn{1}{c}{0.53$\pm0.02$} & \multicolumn{1}{c}{0.65$\pm0.02$} & 0.90$\pm0.01$ & \multicolumn{1}{c}{0.99$\pm0.00$} & \multicolumn{1}{c}{0.07$\pm0.01$} & 0.02$\pm0.00$ & 0.02$\pm0.00$ & 0.01$\pm0.00$ \\ \midrule
          \multirow{2}{*}{$12$}  & \multicolumn{1}{c}{0.05} & \multicolumn{1}{c}{0.18} & 0.13 & \multicolumn{1}{c}{0.11} & \multicolumn{1}{c}{0.97} & 0.90 & 0.92 & 0.93 \\
          \multirow{2}{*}{}  & \multicolumn{1}{c}{0.48$\pm0.03$} & \multicolumn{1}{c}{0.72$\pm0.02$} & 0.88$\pm0.03$ & \multicolumn{1}{c}{0.98$\pm0.00$} & \multicolumn{1}{c}{0.07$\pm0.02$} & 0.03$\pm0.00$ & 0.02$\pm0.00$ & 0.01$\pm0.00$ \\ \midrule
          \multirow{2}{*}{$14$}  & \multicolumn{1}{c}{0.11} & \multicolumn{1}{c}{0.16} & 0.17 & \multicolumn{1}{c}{0.04} & \multicolumn{1}{c}{0.95} & 0.94 & 0.83 & 0.98 \\
          \multirow{2}{*}{}  & \multicolumn{1}{c}{0.49$\pm0.02$} & \multicolumn{1}{c}{0.56$\pm0.02$} & 0.89$\pm0.01$ & \multicolumn{1}{c}{0.97$\pm0.02$} & \multicolumn{1}{c}{0.06$\pm0.01$} & 0.07$\pm0.01$ & 0.03$\pm0.00$ & 0.01$\pm0.00$ \\ \hline\hline
          \multirow{2}{*}{$6$}  & \multicolumn{1}{c}{0.30} & \multicolumn{1}{c}{0.02} & 0.30 & \multicolumn{1}{c}{0.07} & \multicolumn{1}{c}{0.71} & 0.97 & 0.74 & 0.94 \\
          \multirow{2}{*}{}  & \multicolumn{1}{c}{0.36$\pm0.03$} & \multicolumn{1}{c}{0.88$\pm0.03$} & 0.87$\pm0.04$ & \multicolumn{1}{c}{0.99$\pm0.00$} & \multicolumn{1}{c}{0.04$\pm0.02$} & 0.05$\pm0.01$ & 0.02$\pm0.00$ & 0.01$\pm0.00$ \\ \midrule
          \multirow{2}{*}{$8$}  & \multicolumn{1}{c}{0.18} & \multicolumn{1}{c}{0.18} & 0.13 & \multicolumn{1}{c}{0.21} & \multicolumn{1}{c}{0.86} & 0.87 & 0.89 & 0.80 \\
          \multirow{2}{*}{}  & \multicolumn{1}{c}{0.53$\pm0.04$} & \multicolumn{1}{c}{0.62$\pm0.03$} & 0.83$\pm0.04$ & \multicolumn{1}{c}{0.98$\pm0.00$} & \multicolumn{1}{c}{0.06$\pm0.03$} & 0.04$\pm0.01$ & 0.02$\pm0.00$ & 0.01$\pm0.00$ \\ \midrule
          \multirow{2}{*}{$10$}  & \multicolumn{1}{c}{0.21} & \multicolumn{1}{c}{0.12} & 0.22 & \multicolumn{1}{c}{0.07} & \multicolumn{1}{c}{0.87} & 0.84 & 0.87 & 0.94 \\
          \multirow{2}{*}{}  & \multicolumn{1}{c}{0.53$\pm0.02$} & \multicolumn{1}{c}{0.62$\pm0.01$} & 0.90$\pm0.01$ & \multicolumn{1}{c}{0.96$\pm0.00$} & \multicolumn{1}{c}{0.08$\pm0.03$} & 0.03$\pm0.00$ & 0.03$\pm0.00$ & 0.01$\pm0.00$ \\ \midrule
          \multirow{2}{*}{$12$}  & \multicolumn{1}{c}{0.15} & \multicolumn{1}{c}{0.23} & 0.14 & \multicolumn{1}{c}{0.17} & \multicolumn{1}{c}{0.88} & 0.81 & 0.87 & 0.85 \\
          \multirow{2}{*}{}  & \multicolumn{1}{c}{0.52$\pm0.02$} & \multicolumn{1}{c}{0.73$\pm0.02$} & 0.83$\pm0.01$ & \multicolumn{1}{c}{0.96$\pm0.00$} & \multicolumn{1}{c}{0.10$\pm0.02$} & 0.05$\pm0.00$ & 0.04$\pm0.00$ & 0.01$\pm0.00$ \\ \midrule
          \multirow{2}{*}{$14$}  & \multicolumn{1}{c}{0.22} & \multicolumn{1}{c}{0.16} & 0.25 & \multicolumn{1}{c}{0.22} & \multicolumn{1}{c}{0.69} & 0.79 & 0.80 & 0.73 \\
          \multirow{2}{*}{}  & \multicolumn{1}{c}{0.41$\pm0.01$} & \multicolumn{1}{c}{0.55$\pm0.01$} & 0.85$\pm0.01$ & \multicolumn{1}{c}{0.93$\pm0.00$} & \multicolumn{1}{c}{0.08$\pm0.01$} & 0.07$\pm0.01$ & 0.04$\pm0.00$ & 0.02$\pm0.00$ \\ \hline
        \end{tabular}
      }
  }
\end{table}

\begin{table}[hpbt]
  \centering{
    {\caption{$\beta$-VAE - Polynomial Mixing Dataset $R^2$ scores, \textbf{Independent SCM} DGP. The results are averaged over 5 seeds. $\hat{z},z\in R^{d}$ and $z_\mathcal{S}, z_{\mathcal{U}}\in R^{d/2}$ and $x=g(z)\in R^{200}$. Top section and bottom section correspond to polynomial degrees of 2 and 3. Each set of 4 rows correspond to a specific $d$ and from top to bottom denote the $R^2$ scores before training $\beta$-VAE, and after training with $\beta\in [0.1,1.0,10.0]$.}
        \label{table:beta_poly_indep_scm}
        }%
      {
        \begin{tabular}{c cccc cccc}
          \hline
          \multicolumn{1}{c}{}
        & \multicolumn{4}{c}{$R^2_{\mathcal{S}}$}
& \multicolumn{4}{c}{$R^2_{\mathcal{U}}$}  \\ \cmidrule(lr){2-5} \cmidrule(lr){6-9}
          \multicolumn{1}{c}{$d$} & \multicolumn{1}{c}{$k=2$}  & \multicolumn{1}{c}{$k=4$}   & \multicolumn{1}{c}{$k=8$} & $k=16$ & \multicolumn{1}{c}{$k=2$}  & \multicolumn{1}{c}{$k=4$} & \multicolumn{1}{c}{$k=8$} & $k=16$ \\ \hline
          \multirow{4}{*}{$6$}  & \multicolumn{1}{c}{0.37$\pm0.02$} & \multicolumn{1}{c}{0.19$\pm0.05$} & 0.35$\pm0.00$ & \multicolumn{1}{c}{0.19$\pm0.04$} & \multicolumn{1}{c}{0.96$\pm0.02$} & 0.99$\pm0.00$ & 0.98$\pm0.00$ & 0.99$\pm0.00$\\
          \multirow{4}{*}{}  & \multicolumn{1}{c}{0.12$\pm0.07$} & \multicolumn{1}{c}{0.03$\pm0.01$} & 0.08$\pm0.05$ & \multicolumn{1}{c}{0.02$\pm0.01$} & \multicolumn{1}{c}{0.91$\pm0.04$} & 0.95$\pm0.02$ & 0.84$\pm0.07$ & 0.95$\pm0.02$ \\ 
          \multirow{4}{*}{}  & \multicolumn{1}{c}{0.07$\pm0.05$} & \multicolumn{1}{c}{0.02$\pm0.00$} & 0.02$\pm0.00$ & \multicolumn{1}{c}{0.01$\pm0.00$} & \multicolumn{1}{c}{0.96$\pm0.01$} & 0.94$\pm0.04$ & 0.90$\pm0.08$ & 0.93$\pm0.04$ \\
          \multirow{4}{*}{}  & \multicolumn{1}{c}{0.09$\pm0.06$} & \multicolumn{1}{c}{0.01$\pm0.00$} & 0.02$\pm0.00$ & \multicolumn{1}{c}{0.01$\pm0.00$} & \multicolumn{1}{c}{0.97$\pm0.01$} & 0.96$\pm0.03$ & 0.95$\pm0.03$ & 0.92$\pm0.06$ \\ \midrule
          \multirow{4}{*}{$8$}  & \multicolumn{1}{c}{0.05$\pm0.01$} & \multicolumn{1}{c}{0.12$\pm0.05$} & 0.05$\pm0.03$ & \multicolumn{1}{c}{0.06$\pm0.04$} & \multicolumn{1}{c}{0.97$\pm0.00$} & 0.92$\pm0.04$ & 0.95$\pm0.04$ & 0.95$\pm0.04$\\
          \multirow{4}{*}{}  & \multicolumn{1}{c}{0.10$\pm0.04$} & \multicolumn{1}{c}{0.03$\pm0.01$} & 0.08$\pm0.06$ & \multicolumn{1}{c}{0.01$\pm0.00$} & \multicolumn{1}{c}{0.93$\pm0.03$} & 0.92$\pm0.03$ & 0.95$\pm0.03$ & 0.93$\pm0.03$ \\
          \multirow{4}{*}{}  & \multicolumn{1}{c}{0.07$\pm0.03$} & \multicolumn{1}{c}{0.02$\pm0.00$} & 0.04$\pm0.01$ & \multicolumn{1}{c}{0.03$\pm0.01$} & \multicolumn{1}{c}{0.91$\pm0.02$} & 0.96$\pm0.02$ & 0.94$\pm0.03$ & 0.91$\pm0.04$ \\
          \multirow{4}{*}{}  & \multicolumn{1}{c}{0.05$\pm0.02$} & \multicolumn{1}{c}{0.04$\pm0.01$} & 0.07$\pm0.04$ & \multicolumn{1}{c}{0.03$\pm0.01$} & \multicolumn{1}{c}{0.93$\pm0.03$} & 0.92$\pm0.03$ & 0.95$\pm0.04$ & 0.93$\pm0.04$ \\ \midrule
          \multirow{4}{*}{$10$}  & \multicolumn{1}{c}{0.06$\pm0.03$} & \multicolumn{1}{c}{0.05$\pm0.01$} & 0.04$\pm0.02$ & \multicolumn{1}{c}{0.02$\pm0.01$} & \multicolumn{1}{c}{0.80$\pm0.03$} & 0.83$\pm0.02$ & 0.77$\pm0.04$ & 0.80$\pm0.02$ \\
          \multirow{4}{*}{}  & \multicolumn{1}{c}{0.05$\pm0.01$} & \multicolumn{1}{c}{0.09$\pm0.03$} & 0.05$\pm0.02$ & \multicolumn{1}{c}{0.05$\pm0.03$} & \multicolumn{1}{c}{0.95$\pm0.01$} & 0.94$\pm0.03$ & 0.98$\pm0.00$ & 0.95$\pm0.01$ \\
          \multirow{4}{*}{}  & \multicolumn{1}{c}{0.09$\pm0.03$} & \multicolumn{1}{c}{0.12$\pm0.04$} & 0.04$\pm0.01$ & \multicolumn{1}{c}{0.09$\pm0.04$} & \multicolumn{1}{c}{0.94$\pm0.02$} & 0.96$\pm0.02$ & 0.95$\pm0.03$ & 0.95$\pm0.04$ \\
          \multirow{4}{*}{}  & \multicolumn{1}{c}{0.08$\pm0.03$} & \multicolumn{1}{c}{0.03$\pm0.00$} & 0.04$\pm0.01$ & \multicolumn{1}{c}{0.07$\pm0.04$} & \multicolumn{1}{c}{0.97$\pm0.00$} & 0.96$\pm0.03$ & 0.99$\pm0.00$ & 0.98$\pm0.01$ \\ \midrule
          \multirow{4}{*}{$12$}  & \multicolumn{1}{c}{0.04$\pm0.02$} & \multicolumn{1}{c}{0.03$\pm0.01$} & 0.02$\pm0.00$ & \multicolumn{1}{c}{0.01$\pm0.00$} & \multicolumn{1}{c}{0.70$\pm0.02$} & 0.79$\pm0.01$ & 0.72$\pm0.02$ & 0.69$\pm0.02$ \\
          \multirow{4}{*}{}  & \multicolumn{1}{c}{0.15$\pm0.03$} & \multicolumn{1}{c}{0.07$\pm0.02$} & 0.07$\pm0.03$ & \multicolumn{1}{c}{0.05$\pm0.02$} & \multicolumn{1}{c}{0.91$\pm0.02$} & 0.96$\pm0.01$ & 0.97$\pm0.00$ & 0.90$\pm0.03$ \\
          \multirow{4}{*}{}  & \multicolumn{1}{c}{0.08$\pm0.02$} & \multicolumn{1}{c}{0.09$\pm0.03$} & 0.06$\pm0.03$ & \multicolumn{1}{c}{0.10$\pm0.03$} & \multicolumn{1}{c}{0.97$\pm0.00$} & 0.95$\pm0.01$ & 0.95$\pm0.02$ & 0.96$\pm0.02$ \\
          \multirow{4}{*}{}  & \multicolumn{1}{c}{0.08$\pm0.03$} & \multicolumn{1}{c}{0.09$\pm0.03$} & 0.03$\pm0.01$ & \multicolumn{1}{c}{0.02$\pm0.00$} & \multicolumn{1}{c}{0.96$\pm0.01$} & 0.97$\pm0.00$ & 0.95$\pm0.03$ & 0.99$\pm0.00$ \\ \midrule
          \multirow{4}{*}{$14$}  & \multicolumn{1}{c}{0.02$\pm0.00$} & \multicolumn{1}{c}{0.03$\pm0.01$} & 0.02$\pm0.01$ & \multicolumn{1}{c}{0.01$\pm0.00$} & \multicolumn{1}{c}{0.70$\pm0.02$} & 0.68$\pm0.04$ & 0.65$\pm0.03$ & 0.58$\pm0.02$ \\
          \multirow{4}{*}{}  & \multicolumn{1}{c}{0.14$\pm0.01$} & \multicolumn{1}{c}{0.08$\pm0.02$} & 0.17$\pm0.02$ & \multicolumn{1}{c}{0.07$\pm0.02$} & \multicolumn{1}{c}{0.95$\pm0.01$} & 0.95$\pm0.00$ & 0.96$\pm0.01$ & 0.92$\pm0.02$ \\
          \multirow{4}{*}{}  & \multicolumn{1}{c}{0.08$\pm0.02$} & \multicolumn{1}{c}{0.07$\pm0.02$} & 0.07$\pm0.01$ & \multicolumn{1}{c}{0.11$\pm0.02$} & \multicolumn{1}{c}{0.94$\pm0.01$} & 0.94$\pm0.01$ & 0.96$\pm0.02$ & 0.93$\pm0.02$ \\
          \multirow{4}{*}{}  & \multicolumn{1}{c}{0.05$\pm0.00$} & \multicolumn{1}{c}{0.09$\pm0.03$} & 0.09$\pm0.03$ & \multicolumn{1}{c}{0.03$\pm0.01$} & \multicolumn{1}{c}{0.95$\pm0.02$} & 0.97$\pm0.00$ & 0.96$\pm0.02$ & 0.97$\pm0.01$ \\\hline\hline
          \multirow{4}{*}{$6$}  & \multicolumn{1}{c}{0.35$\pm0.03$} & \multicolumn{1}{c}{0.13$\pm0.04$} & 0.28$\pm0.01$ & \multicolumn{1}{c}{0.33$\pm0.00$} & \multicolumn{1}{c}{0.97$\pm0.03$} & 0.97$\pm0.02$ & 1.00$\pm0.00$ & 1.00$\pm0.00$ \\
          \multirow{4}{*}{}  & \multicolumn{1}{c}{0.13$\pm0.06$} & \multicolumn{1}{c}{0.02$\pm0.01$} & 0.02$\pm0.00$ & \multicolumn{1}{c}{0.03$\pm0.01$} & \multicolumn{1}{c}{0.86$\pm0.06$} & 0.93$\pm0.02$ & 0.97$\pm0.01$ & 0.97$\pm0.01$ \\
          \multirow{4}{*}{}  & \multicolumn{1}{c}{0.10$\pm0.05$} & \multicolumn{1}{c}{0.02$\pm0.00$} & 0.01$\pm0.00$ & \multicolumn{1}{c}{0.03$\pm0.01$} & \multicolumn{1}{c}{0.90$\pm0.03$} & 0.93$\pm0.03$ & 0.98$\pm0.01$ & 0.93$\pm0.04$ \\
          \multirow{4}{*}{}  & \multicolumn{1}{c}{0.05$\pm0.03$} & \multicolumn{1}{c}{0.02$\pm0.01$} & 0.01$\pm0.00$ & \multicolumn{1}{c}{0.02$\pm0.01$} & \multicolumn{1}{c}{0.97$\pm0.02$} & 0.95$\pm0.03$ & 0.99$\pm0.00$ & 0.99$\pm0.00$ \\ \midrule
          \multirow{4}{*}{$8$}  & \multicolumn{1}{c}{0.19$\pm0.04$} & \multicolumn{1}{c}{0.09$\pm0.03$} & 0.05$\pm0.02$ & \multicolumn{1}{c}{0.14$\pm0.05$} & \multicolumn{1}{c}{0.82$\pm0.06$} & 0.93$\pm0.02$ & 0.94$\pm0.02$ & 0.84$\pm0.07$ \\
          \multirow{4}{*}{}  & \multicolumn{1}{c}{0.08$\pm0.03$} & \multicolumn{1}{c}{0.14$\pm0.05$} & 0.08$\pm0.04$ & \multicolumn{1}{c}{0.05$\pm0.03$} & \multicolumn{1}{c}{0.87$\pm0.02$} & 0.92$\pm0.03$ & 0.96$\pm0.02$ & 0.94$\pm0.04$ \\
          \multirow{4}{*}{}  & \multicolumn{1}{c}{0.12$\pm0.04$} & \multicolumn{1}{c}{0.13$\pm0.04$} & 0.05$\pm0.03$ & \multicolumn{1}{c}{0.07$\pm0.02$} & \multicolumn{1}{c}{0.90$\pm0.04$} & 0.92$\pm0.03$ & 0.98$\pm0.00$ & 0.91$\pm0.03$ \\
          \multirow{4}{*}{}  & \multicolumn{1}{c}{0.05$\pm0.01$} & \multicolumn{1}{c}{0.13$\pm0.04$} & 0.03$\pm0.01$ & \multicolumn{1}{c}{0.01$\pm0.00$} & \multicolumn{1}{c}{0.94$\pm0.02$} & 0.97$\pm0.02$ & 0.97$\pm0.03$ & 0.88$\pm0.04$ \\ \midrule
          \multirow{4}{*}{$10$}  & \multicolumn{1}{c}{0.08$\pm0.04$} & \multicolumn{1}{c}{0.05$\pm0.01$} & 0.14$\pm0.03$ & \multicolumn{1}{c}{0.07$\pm0.03$} & \multicolumn{1}{c}{0.77$\pm0.05$} & 0.75$\pm0.03$ & 0.67$\pm0.05$ & 0.76$\pm0.04$ \\
          \multirow{4}{*}{}  & \multicolumn{1}{c}{0.12$\pm0.02$} & \multicolumn{1}{c}{0.07$\pm0.02$} & 0.13$\pm0.04$ & \multicolumn{1}{c}{0.08$\pm0.04$} & \multicolumn{1}{c}{0.95$\pm0.01$} & 0.90$\pm0.03$ & 0.88$\pm0.04$ & 0.94$\pm0.02$ \\
          \multirow{4}{*}{}  & \multicolumn{1}{c}{0.15$\pm0.01$} & \multicolumn{1}{c}{0.08$\pm0.04$} & 0.12$\pm0.03$ & \multicolumn{1}{c}{0.15$\pm0.03$} & \multicolumn{1}{c}{0.93$\pm0.02$} & 0.95$\pm0.01$ & 0.92$\pm0.03$ & 0.90$\pm0.03$ \\
          \multirow{4}{*}{}  & \multicolumn{1}{c}{0.08$\pm0.03$} & \multicolumn{1}{c}{0.04$\pm0.01$} & 0.05$\pm0.01$ & \multicolumn{1}{c}{0.10$\pm0.04$} & \multicolumn{1}{c}{0.94$\pm0.01$} & 0.93$\pm0.03$ & 0.96$\pm0.02$ & 0.98$\pm0.00$ \\ \midrule
          \multirow{4}{*}{$12$}  & \multicolumn{1}{c}{0.04$\pm0.01$} & \multicolumn{1}{c}{0.06$\pm0.01$} & 0.06$\pm0.01$ & \multicolumn{1}{c}{0.06$\pm0.01$} & \multicolumn{1}{c}{0.66$\pm0.03$} & 0.63$\pm0.04$ & 0.65$\pm0.03$ & 0.65$\pm0.02$ \\
          \multirow{4}{*}{}  & \multicolumn{1}{c}{0.16$\pm0.03$} & \multicolumn{1}{c}{0.13$\pm0.02$} & 0.15$\pm0.02$ & \multicolumn{1}{c}{0.15$\pm0.03$} & \multicolumn{1}{c}{0.89$\pm0.02$} & 0.91$\pm0.02$ & 0.88$\pm0.02$ & 0.86$\pm0.02$ \\
          \multirow{4}{*}{}  & \multicolumn{1}{c}{0.15$\pm0.01$} & \multicolumn{1}{c}{0.13$\pm0.01$} & 0.11$\pm0.01$ & \multicolumn{1}{c}{0.12$\pm0.02$} & \multicolumn{1}{c}{0.89$\pm0.02$} & 0.92$\pm0.02$ & 0.92$\pm0.01$ & 0.90$\pm0.02$ \\
          \multirow{4}{*}{}  & \multicolumn{1}{c}{0.13$\pm0.02$} & \multicolumn{1}{c}{0.11$\pm0.03$} & 0.12$\pm0.02$ & \multicolumn{1}{c}{0.11$\pm0.03$} & \multicolumn{1}{c}{0.93$\pm0.02$} & 0.91$\pm0.03$ & 0.93$\pm0.02$ & 0.96$\pm0.01$ \\ \midrule
          \multirow{4}{*}{$14$}  & \multicolumn{1}{c}{0.07$\pm0.02$} & \multicolumn{1}{c}{0.04$\pm0.01$} & 0.07$\pm0.01$ & \multicolumn{1}{c}{0.08$\pm0.01$} & \multicolumn{1}{c}{0.54$\pm0.02$} & 0.51$\pm0.04$ & 0.52$\pm0.01$ & 0.51$\pm0.03$ \\
          \multirow{4}{*}{}  & \multicolumn{1}{c}{0.13$\pm0.01$} & \multicolumn{1}{c}{0.09$\pm0.02$} & 0.16$\pm0.02$ & \multicolumn{1}{c}{0.11$\pm0.03$} & \multicolumn{1}{c}{0.81$\pm0.02$} & 0.81$\pm0.01$ & 0.88$\pm0.01$ & 0.80$\pm0.02$ \\
          \multirow{4}{*}{}  & \multicolumn{1}{c}{0.10$\pm0.01$} & \multicolumn{1}{c}{0.07$\pm0.01$} & 0.10$\pm0.02$ & \multicolumn{1}{c}{0.13$\pm0.02$} & \multicolumn{1}{c}{0.83$\pm0.01$} & 0.83$\pm0.02$ & 0.91$\pm0.01$ & 0.85$\pm0.02$ \\
          \multirow{4}{*}{}  & \multicolumn{1}{c}{0.08$\pm0.01$} & \multicolumn{1}{c}{0.05$\pm0.01$} & 0.07$\pm0.01$ & \multicolumn{1}{c}{0.10$\pm0.03$} & \multicolumn{1}{c}{0.85$\pm0.01$} & 0.92$\pm0.01$ & 0.92$\pm0.02$ & 0.91$\pm0.02$ \\\hline
        \end{tabular}
      }
  }
\end{table}

\begin{table}[hpbt]
  \centering{
    {\caption{$\beta$-VAE - Polynomial Mixing Dataset $R^2$ scores, \textbf{Dynamic SCM} DGP. The results are averaged over 5 seeds. $\hat{z},z\in R^{d}$ and $z_\mathcal{S}, z_{\mathcal{U}}\in R^{d/2}$ and $x=g(z)\in R^{200}$. Top section and bottom section correspond to polynomial degrees of 2 and 3. Each set of 4 rows correspond to a specific $d$ and from top to bottom denote the $R^2$ scores before training $\beta$-VAE, and after training with $\beta\in [0.1,1.0,10.0]$.}
        \label{table:beta_poly_dynamic_scm}
        }%
      {
        \begin{tabular}{c cccc cccc}
          \hline
          \multicolumn{1}{c}{}
        & \multicolumn{4}{c}{$R^2_{\mathcal{S}}$}
& \multicolumn{4}{c}{$R^2_{\mathcal{U}}$}  \\ \cmidrule(lr){2-5} \cmidrule(lr){6-9}
          \multicolumn{1}{c}{$d$} & \multicolumn{1}{c}{$k=2$}  & \multicolumn{1}{c}{$k=4$}   & \multicolumn{1}{c}{$k=8$} & $k=16$ & \multicolumn{1}{c}{$k=2$}  & \multicolumn{1}{c}{$k=4$} & \multicolumn{1}{c}{$k=8$} & $k=16$ \\ \hline
          \multirow{4}{*}{$6$}  & \multicolumn{1}{c}{0.27$\pm0.05$} & \multicolumn{1}{c}{0.41$\pm0.06$} & 0.23$\pm0.04$ & \multicolumn{1}{c}{0.34$\pm0.00$} & \multicolumn{1}{c}{0.98$\pm0.01$} & 0.92$\pm0.06$ & 0.99$\pm0.00$ & 0.99$\pm0.00$ \\
          \multirow{4}{*}{}  & \multicolumn{1}{c}{0.05$\pm0.01$} & \multicolumn{1}{c}{0.19$\pm0.04$} & 0.07$\pm0.03$ & \multicolumn{1}{c}{0.07$\pm0.03$} & \multicolumn{1}{c}{0.87$\pm0.06$} & 0.91$\pm0.03$ & 0.92$\pm0.04$ & 0.92$\pm0.02$ \\
          \multirow{4}{*}{}  & \multicolumn{1}{c}{0.07$\pm0.03$} & \multicolumn{1}{c}{0.25$\pm0.04$} & 0.07$\pm0.03$ & \multicolumn{1}{c}{0.07$\pm0.03$} & \multicolumn{1}{c}{0.91$\pm0.04$} & 0.83$\pm0.05$ & 0.89$\pm0.04$ & 0.89$\pm0.04$ \\
          \multirow{4}{*}{}  & \multicolumn{1}{c}{0.05$\pm0.01$} & \multicolumn{1}{c}{0.21$\pm0.06$} & 0.08$\pm0.04$ & \multicolumn{1}{c}{0.13$\pm0.05$} & \multicolumn{1}{c}{0.85$\pm0.07$} & 0.81$\pm0.05$ & 0.87$\pm0.04$ & 0.90$\pm0.04$ \\ \midrule
          \multirow{4}{*}{$8$}  & \multicolumn{1}{c}{0.18$\pm0.04$} & \multicolumn{1}{c}{0.05$\pm0.01$} & 0.05$\pm0.02$ & \multicolumn{1}{c}{0.03$\pm0.01$} & \multicolumn{1}{c}{0.89$\pm0.02$} & 0.96$\pm0.01$ & 0.96$\pm0.02$ & 0.97$\pm0.01$ \\
          \multirow{4}{*}{}  & \multicolumn{1}{c}{0.14$\pm0.05$} & \multicolumn{1}{c}{0.12$\pm0.03$} & 0.04$\pm0.01$ & \multicolumn{1}{c}{0.03$\pm0.01$} & \multicolumn{1}{c}{0.92$\pm0.02$} & 0.94$\pm0.03$ & 0.94$\pm0.01$ & 0.93$\pm0.02$ \\
          \multirow{4}{*}{}  & \multicolumn{1}{c}{0.14$\pm0.04$} & \multicolumn{1}{c}{0.12$\pm0.03$} & 0.08$\pm0.04$ & \multicolumn{1}{c}{0.05$\pm0.02$} & \multicolumn{1}{c}{0.86$\pm0.04$} & 0.91$\pm0.05$ & 0.92$\pm0.03$ & 0.90$\pm0.04$ \\
          \multirow{4}{*}{}  & \multicolumn{1}{c}{0.14$\pm0.03$} & \multicolumn{1}{c}{0.16$\pm0.02$} & 0.11$\pm0.04$ & \multicolumn{1}{c}{0.08$\pm0.04$} & \multicolumn{1}{c}{0.90$\pm0.01$} & 0.91$\pm0.06$ & 0.88$\pm0.04$ & 0.89$\pm0.04$ \\ \midrule
          \multirow{4}{*}{$10$}  & \multicolumn{1}{c}{0.06$\pm0.02$} & \multicolumn{1}{c}{0.02$\pm0.01$} & 0.03$\pm0.00$ & \multicolumn{1}{c}{0.05$\pm0.02$} & \multicolumn{1}{c}{0.81$\pm0.02$} & 0.83$\pm0.02$ & 0.79$\pm0.03$ & 0.77$\pm0.03$ \\
          \multirow{4}{*}{}  & \multicolumn{1}{c}{0.17$\pm0.03$} & \multicolumn{1}{c}{0.10$\pm0.03$} & 0.11$\pm0.03$ & \multicolumn{1}{c}{0.08$\pm0.02$} & \multicolumn{1}{c}{0.92$\pm0.01$} & 0.93$\pm0.02$ & 0.92$\pm0.03$ & 0.96$\pm0.01$ \\
          \multirow{4}{*}{}  & \multicolumn{1}{c}{0.17$\pm0.02$} & \multicolumn{1}{c}{0.08$\pm0.01$} & 0.14$\pm0.03$ & \multicolumn{1}{c}{0.09$\pm0.04$} & \multicolumn{1}{c}{0.89$\pm0.02$} & 0.92$\pm0.03$ & 0.95$\pm0.01$ & 0.92$\pm0.03$ \\
          \multirow{4}{*}{}  & \multicolumn{1}{c}{0.17$\pm0.03$} & \multicolumn{1}{c}{0.15$\pm0.02$} & 0.14$\pm0.04$ & \multicolumn{1}{c}{0.09$\pm0.04$} & \multicolumn{1}{c}{0.89$\pm0.02$} & 0.88$\pm0.04$ & 0.94$\pm0.02$ & 0.91$\pm0.04$ \\ \midrule
          \multirow{4}{*}{$12$}  & \multicolumn{1}{c}{0.04$\pm0.00$} & \multicolumn{1}{c}{0.03$\pm0.01$} & 0.02$\pm0.00$ & \multicolumn{1}{c}{0.03$\pm0.01$} & \multicolumn{1}{c}{0.69$\pm0.02$} & 0.79$\pm0.02$ & 0.71$\pm0.01$ & 0.69$\pm0.02$ \\
          \multirow{4}{*}{}  & \multicolumn{1}{c}{0.18$\pm0.01$} & \multicolumn{1}{c}{0.14$\pm0.02$} & 0.10$\pm0.03$ & \multicolumn{1}{c}{0.10$\pm0.03$} & \multicolumn{1}{c}{0.91$\pm0.01$} & 0.93$\pm0.02$ & 0.90$\pm0.01$ & 0.96$\pm0.01$ \\
          \multirow{4}{*}{}  & \multicolumn{1}{c}{0.14$\pm0.02$} & \multicolumn{1}{c}{0.12$\pm0.03$} & 0.09$\pm0.03$ & \multicolumn{1}{c}{0.11$\pm0.03$} & \multicolumn{1}{c}{0.91$\pm0.02$} & 0.93$\pm0.02$ & 0.90$\pm0.08$ & 0.94$\pm0.03$ \\
          \multirow{4}{*}{}  & \multicolumn{1}{c}{0.14$\pm0.03$} & \multicolumn{1}{c}{0.13$\pm0.03$} & 0.09$\pm0.03$ & \multicolumn{1}{c}{0.14$\pm0.03$} & \multicolumn{1}{c}{0.90$\pm0.02$} & 0.90$\pm0.03$ & 0.90$\pm0.01$ & 0.95$\pm0.02$ \\ \midrule
          \multirow{4}{*}{$14$}  & \multicolumn{1}{c}{0.03$\pm0.01$} & \multicolumn{1}{c}{0.03$\pm0.01$} & 0.02$\pm0.00$ & \multicolumn{1}{c}{0.01$\pm0.00$} & \multicolumn{1}{c}{0.68$\pm0.01$} & 0.69$\pm0.03$ & 0.64$\pm0.01$ & 0.58$\pm0.02$ \\
          \multirow{4}{*}{}  & \multicolumn{1}{c}{0.17$\pm0.01$} & \multicolumn{1}{c}{0.15$\pm0.03$} & 0.13$\pm0.02$ & \multicolumn{1}{c}{0.13$\pm0.03$} & \multicolumn{1}{c}{0.91$\pm0.01$} & 0.94$\pm0.01$ & 0.93$\pm0.02$ & 0.90$\pm0.01$ \\
          \multirow{4}{*}{}  & \multicolumn{1}{c}{0.19$\pm0.03$} & \multicolumn{1}{c}{0.16$\pm0.03$} & 0.15$\pm0.03$ & \multicolumn{1}{c}{0.15$\pm0.03$} & \multicolumn{1}{c}{0.91$\pm0.01$} & 0.91$\pm0.02$ & 0.88$\pm0.02$ & 0.87$\pm0.03$ \\
          \multirow{4}{*}{}  & \multicolumn{1}{c}{0.21$\pm0.02$} & \multicolumn{1}{c}{0.17$\pm0.03$} & 0.14$\pm0.02$ & \multicolumn{1}{c}{0.18$\pm0.03$} & \multicolumn{1}{c}{0.91$\pm0.01$} & 0.88$\pm0.02$ & 0.85$\pm0.03$ & 0.84$\pm0.03$ \\\hline\hline
          \multirow{4}{*}{$6$}  & \multicolumn{1}{c}{0.37$\pm0.04$} & \multicolumn{1}{c}{0.33$\pm0.01$} & 0.35$\pm0.01$ & \multicolumn{1}{c}{0.34$\pm0.00$} & \multicolumn{1}{c}{0.95$\pm0.04$} & 0.99$\pm0.00$ & 0.99$\pm0.00$ & 1.00$\pm0.00$ \\
          \multirow{4}{*}{}  & \multicolumn{1}{c}{0.07$\pm0.04$} & \multicolumn{1}{c}{0.10$\pm0.05$} & 0.09$\pm0.06$ & \multicolumn{1}{c}{0.03$\pm0.01$} & \multicolumn{1}{c}{0.94$\pm0.02$} & 0.94$\pm0.02$ & 0.95$\pm0.02$ & 0.90$\pm0.04$ \\
          \multirow{4}{*}{}  & \multicolumn{1}{c}{0.11$\pm0.06$} & \multicolumn{1}{c}{0.09$\pm0.05$} & 0.08$\pm0.06$ & \multicolumn{1}{c}{0.04$\pm0.02$} & \multicolumn{1}{c}{0.94$\pm0.02$} & 0.91$\pm0.04$ & 0.93$\pm0.03$ & 0.89$\pm0.05$ \\
          \multirow{4}{*}{}  & \multicolumn{1}{c}{0.06$\pm0.03$} & \multicolumn{1}{c}{0.09$\pm0.05$} & 0.08$\pm0.05$ & \multicolumn{1}{c}{0.06$\pm0.03$} & \multicolumn{1}{c}{0.92$\pm0.05$} & 0.92$\pm0.03$ & 0.96$\pm0.02$ & 0.91$\pm0.04$ \\ \midrule
          \multirow{4}{*}{$8$}  & \multicolumn{1}{c}{0.17$\pm0.04$} & \multicolumn{1}{c}{0.10$\pm0.04$} & 0.05$\pm0.02$ & \multicolumn{1}{c}{0.11$\pm0.04$} & \multicolumn{1}{c}{0.86$\pm0.06$} & 0.92$\pm0.03$ & 0.95$\pm0.02$ & 0.87$\pm0.05$ \\
          \multirow{4}{*}{}  & \multicolumn{1}{c}{0.13$\pm0.05$} & \multicolumn{1}{c}{0.10$\pm0.04$} & 0.07$\pm0.02$ & \multicolumn{1}{c}{0.09$\pm0.03$} & \multicolumn{1}{c}{0.87$\pm0.04$} & 0.90$\pm0.02$ & 0.84$\pm0.03$ & 0.87$\pm0.03$ \\
          \multirow{4}{*}{}  & \multicolumn{1}{c}{0.16$\pm0.05$} & \multicolumn{1}{c}{0.09$\pm0.04$} & 0.09$\pm0.02$ & \multicolumn{1}{c}{0.08$\pm0.02$} & \multicolumn{1}{c}{0.87$\pm0.03$} & 0.90$\pm0.03$ & 0.87$\pm0.03$ & 0.85$\pm0.03$ \\
          \multirow{4}{*}{}  & \multicolumn{1}{c}{0.21$\pm0.04$} & \multicolumn{1}{c}{0.12$\pm0.05$} & 0.12$\pm0.04$ & \multicolumn{1}{c}{0.13$\pm0.04$} & \multicolumn{1}{c}{0.87$\pm0.02$} & 0.92$\pm0.02$ & 0.88$\pm0.03$ & 0.84$\pm0.05$ \\ \midrule
          \multirow{4}{*}{$10$}  & \multicolumn{1}{c}{0.08$\pm0.03$} & \multicolumn{1}{c}{0.06$\pm0.01$} & 0.10$\pm0.02$ & \multicolumn{1}{c}{0.05$\pm0.02$} & \multicolumn{1}{c}{0.80$\pm0.03$} & 0.74$\pm0.03$ & 0.73$\pm0.02$ & 0.76$\pm0.03$ \\
          \multirow{4}{*}{}  & \multicolumn{1}{c}{0.19$\pm0.03$} & \multicolumn{1}{c}{0.17$\pm0.02$} & 0.16$\pm0.03$ & \multicolumn{1}{c}{0.09$\pm0.02$} & \multicolumn{1}{c}{0.88$\pm0.02$} & 0.82$\pm0.04$ & 0.85$\pm0.03$ & 0.85$\pm0.03$ \\
          \multirow{4}{*}{}  & \multicolumn{1}{c}{0.21$\pm0.02$} & \multicolumn{1}{c}{0.20$\pm0.03$} & 0.18$\pm0.04$ & \multicolumn{1}{c}{0.16$\pm0.03$} & \multicolumn{1}{c}{0.84$\pm0.03$} & 0.83$\pm0.03$ & 0.82$\pm0.04$ & 0.79$\pm0.02$ \\
          \multirow{4}{*}{}  & \multicolumn{1}{c}{0.22$\pm0.02$} & \multicolumn{1}{c}{0.20$\pm0.02$} & 0.16$\pm0.04$ & \multicolumn{1}{c}{0.19$\pm0.02$} & \multicolumn{1}{c}{0.81$\pm0.04$} & 0.81$\pm0.03$ & 0.79$\pm0.04$ & 0.79$\pm0.03$ \\ \midrule
          \multirow{4}{*}{$12$}  & \multicolumn{1}{c}{0.05$\pm0.01$} & \multicolumn{1}{c}{0.06$\pm0.01$} & 0.07$\pm0.02$ & \multicolumn{1}{c}{0.04$\pm0.00$} & \multicolumn{1}{c}{0.65$\pm0.03$} & 0.65$\pm0.03$ & 0.64$\pm0.03$ & 0.67$\pm0.02$ \\
          \multirow{4}{*}{}  & \multicolumn{1}{c}{0.17$\pm0.02$} & \multicolumn{1}{c}{0.18$\pm0.01$} & 0.20$\pm0.01$ & \multicolumn{1}{c}{0.13$\pm0.03$} & \multicolumn{1}{c}{0.84$\pm0.01$} & 0.82$\pm0.02$ & 0.82$\pm0.03$ & 0.83$\pm0.03$ \\
          \multirow{4}{*}{}  & \multicolumn{1}{c}{0.17$\pm0.02$} & \multicolumn{1}{c}{0.18$\pm0.01$} & 0.22$\pm0.01$ & \multicolumn{1}{c}{0.13$\pm0.02$} & \multicolumn{1}{c}{0.81$\pm0.01$} & 0.84$\pm0.02$ & 0.82$\pm0.02$ & 0.80$\pm0.02$ \\
          \multirow{4}{*}{}  & \multicolumn{1}{c}{0.16$\pm0.02$} & \multicolumn{1}{c}{0.19$\pm0.01$} & 0.23$\pm0.01$ & \multicolumn{1}{c}{0.12$\pm0.02$} & \multicolumn{1}{c}{0.80$\pm0.02$} & 0.82$\pm0.02$ & 0.81$\pm0.02$ & 0.80$\pm0.02$ \\ \midrule
          \multirow{4}{*}{$14$}  & \multicolumn{1}{c}{0.06$\pm0.02$} & \multicolumn{1}{c}{0.05$\pm0.01$} & 0.06$\pm0.01$ & \multicolumn{1}{c}{0.08$\pm0.01$} & \multicolumn{1}{c}{0.51$\pm0.07$} & 0.55$\pm0.07$ & 0.53$\pm0.01$ & 0.50$\pm0.03$ \\
          \multirow{4}{*}{}  & \multicolumn{1}{c}{0.18$\pm0.03$} & \multicolumn{1}{c}{0.20$\pm0.01$} & 0.21$\pm0.01$ & \multicolumn{1}{c}{0.18$\pm0.01$} & \multicolumn{1}{c}{0.82$\pm0.03$} & 0.82$\pm0.02$ & 0.83$\pm0.02$ & 0.79$\pm0.01$ \\
          \multirow{4}{*}{}  & \multicolumn{1}{c}{0.19$\pm0.03$} & \multicolumn{1}{c}{0.22$\pm0.02$} & 0.25$\pm0.02$ & \multicolumn{1}{c}{0.23$\pm0.03$} & \multicolumn{1}{c}{0.78$\pm0.03$} & 0.80$\pm0.03$ & 0.78$\pm0.02$ & 0.75$\pm0.02$ \\
          \multirow{4}{*}{}  & \multicolumn{1}{c}{0.21$\pm0.02$} & \multicolumn{1}{c}{0.22$\pm0.02$} & 0.25$\pm0.01$ & \multicolumn{1}{c}{0.25$\pm0.03$} & \multicolumn{1}{c}{0.75$\pm0.03$} & 0.80$\pm0.02$ & 0.75$\pm0.03$ & 0.75$\pm0.02$ \\\hline
        \end{tabular}
      }
  }
\end{table}

\subsection{Balls Dataset}
\label{appendix:balls_dataset}

\subsubsection{Data Generation and Model Architecture}

In Tables~\ref{table: balls_indep_supp}, \ref{table: balls_dynamic_supp} we provide additional results when $z$'s (i.e., balls' coordinates) follow an independent and dynamic SCM. As described in the main body, we observe that  as the number of domains increase we achieve high $R^2_{\mathcal{S}}$ and low  $R^2_{\mathcal{U}}$ and especially under the combination of Min-Max and MMD penalty. When $z$'s follow an independent SCM, $z_{\mathcal{S}}$, the invariant block of $z$ corresponds to the coordinates of the ball that is always sampled in an $m\times n$ rectangle that is at a fix location across all domains. The other ball that accounts for $z_{\mathcal{U}}$ is sampled from an $m'\times n'$ rectangle whose location varies across the $k$ domains. When $z$'s follow a dynamic SCM, we alter each component of $z_{\mathcal{U}}$ with probability 0.5 by adding or subtracting its counterpart in $z_{\mathcal{S}}$, subject to the constraints that $z_{\mathcal{U}}$ remains inside the frame, and that the two balls do not overlap to violate the injectivity assumption. The training and validation splits comprise 50000 and 10000 samples, respectively. We conducted experiments for a varying number of domains $k$. We divide the table into three sections with top block corresponding to Min-Max penalty, the middle block corresponding to the MMD penalty and the bottom block corresponding to the combination of the two denoted Min-Max + MMD. For each penalty we present two rows, the top row corresponding to the $R^2$ scores after training an autoencoder with reconstruction objective only, and right before enforcing any distributional invariances. Again, since we only need an autoencoder that can fully reconstruct the input, there is no need for training multiple perfect autoencoders, hence there is no standard error reported for such entries. We then take the perfectly trained autoencoder and enforce the distributional invariance penalty with 5 seeds, and present the results in the bottom row per each penalty. Our autoencoder architecture comprises a ResNet18 \citep{he2016deep} encoder with standard deconvolutional layers in the decoder. We closely follow the architecture from  \citet{ahuja2022interventional}. In all experiments, the encoder's output is 128-dimensional, and the invariance penalty is enforced on the first 64-dimensional block of encoder's output. For sample reconstructions see Figure \ref{figure: balls_recons_supp}.

\subsection{Unlabeled colored MNIST}
\label{appendix:unlabeled_colored_mnist}

\subsubsection{Data Generation and Model Architecture}

\paragraph{Data Generation}  All of the digit pixels will be coloured according to $z_{\mathcal{U}}$. The background remains untouched (coloured digits on black background). Now we describe the colouring scheme across domains and different SCMs.\\
\textbf{Independent SCM.} For each domain $i\in [k]$ we sample $l_c^i, h_c^i\sim \text{Uniform}[0, 1]$, such that $c\sim \text{Uniform}[l_c^i, h_c^i]$, where $c\in\{r,g,b\}$ denotes the colour channel. In other terms, each of the RGB channels comes from a uniform distribution that is unique to each domain $i$. The digits then are coloured by sampling $z_{\mathcal{U}}=(r,g,b)$ for each domain.\\
\textbf{Dynamic SCM.} To obtain a Dynamic SCM, we introduce a probabilistic relation among digits and $z_{\mathcal{U}}$ as follows. For any domain $i$, we sample each channel $c\sim \text{Uniform}[l_c^i, h_c^i]$ with a probability of 0.2, and with a probability of 0.8 we introduce the following relation among digits and the colours. If the image contains digits from 0-4, the channels are sampled according to $c\sim \text{Uniform}[l_c^i, (l_c^i+h_c^i)/2]$, and if the image contains digits from 5-9, the channels are sampled according to $c\sim \text{Uniform}[(l_c^i+h_c^i)/2, h_c^i]$. In simple words, most of the time we introduce a correlation between the digits and the colours, and for a small portion of the dataset, digits and colours are sampled independently, thus overall, we achieve a Dynamic SCM. 

\paragraph{Model Architecture}
All experiments are carried out in two stages similar to polynomial mixing, and balls image datasets. The architectures of the autoencoders at stages 1,2 are given in Tables \ref{table: mnist_architecture_stage_1},\ref{table: mnist_architecture_stage_2}.

\begin{table}[hpbt]
  \centering{
    {\caption{Autoencoder architecture for stage 1. First section presents the encoder layers, and the second section presents the decoder layers.}
        \label{table: mnist_architecture_stage_1}}%
      {
        \begin{tabular}{cccccc}
          \hline
          Layer      & Input Size & Output Size & Bias  & Activation & BatchNorm \\ \hline
          Linear (1) & 784         & 256          & True  & ReLU   & True    \\
          Linear (2) & 256         & 256          & True  & ReLU   & True   \\
          Linear (3) & 256         & 128          & True & ReLU   & True    \\\hline
          Linear (1) & 128         & 256          & True & ReLU   & True    \\
          Linear (2) & 256         & 256          & True  & ReLU   & True    \\
          Linear (3) & 256         & 784          & True  & -   & False   \\\hline
        \end{tabular}
      }}
\end{table}

\begin{table}[hpbt]
  \centering{
    {\caption{Autoencoder architecture for stage 2. First section presents the encoder layers, and the second section presents the decoder layers.}
        \label{table: mnist_architecture_stage_2}}%
      {
        \begin{tabular}{cccccc}
          \hline
          Layer      & Input Size & Output Size & Bias  & Activation & BatchNorm \\ \hline
          Linear (1) & 128         & 200          & True  & LeakyReLU(0.2)   & True    \\
          Linear (2) & 200         & 200          & True  & LeakyReLU(0.2)   & True   \\
          Linear (3) & 200         & 200          & True & LeakyReLU(0.2)   & True    \\
          Linear (3) & 200         & 128          & True & -   & False    \\\hline
          Linear (1) & 128         & 200          & True & LeakyReLU(0.2)   & True    \\
          Linear (2) & 200         & 200          & True  & LeakyReLU(0.2)   & True   \\
          Linear (3) & 200         & 200          & True & LeakyReLU(0.2)   & True    \\
          Linear (3) & 200         & 128          & True & -   & False    \\\hline
        \end{tabular}
      }}
\end{table}

The results for the Independent and Dynamic SCM are given in Tables \ref{table: mnist_digits_indep_supp},\ref{table: mnist_digits_dynamic_supp}, respectively.

\begin{table}[h]
  \centering{
    {\caption{Balls Dataset $R^2$, \textbf{Independent SCM} DGP. For each penalty, the top row corresponds to the scores after training the autoencoder, and the bottom row denotes the scores after enforcing distributional invariances. The results are averaged over 5 seeds. $\hat{z}\in R^{128}$ and $z_\mathcal{S}, z_{\mathcal{U}}\in R^{64}$. The underlying latent $z\in R^4$. The sections are Min-Max, MMD, and the combination, respectively.}
        \label{table: balls_indep_supp}
        }%
      {
        \begin{tabular}{cccc cccc}
          \hline
            \multicolumn{4}{c}{$R^2_{\mathcal{S}}$}
            & \multicolumn{4}{c}{$R^2_{\mathcal{U}}$}  \\ \cmidrule(lr){1-4} \cmidrule(lr){5-8}
          \multicolumn{1}{c}{$k=2$} & \multicolumn{1}{c}{$k=4$}   & \multicolumn{1}{c}{$k=8$} & $k=16$  & \multicolumn{1}{c}{$k=2$} & \multicolumn{1}{c}{$k=4$} & \multicolumn{1}{c}{$k=8$} & $k=16$ \\ \hline
           \multicolumn{1}{c}{0.99} & \multicolumn{1}{c}{0.99} & \multicolumn{1}{c}{0.99} & 0.99 & \multicolumn{1}{c}{0.99} & \multicolumn{1}{c}{0.98} & \multicolumn{1}{c}{0.98} & 0.97 \\
           \multicolumn{1}{c}{0.94$\pm0.00$} & \multicolumn{1}{c}{0.89$\pm0.01$} & \multicolumn{1}{c}{0.85$\pm0.04$} & 0.65$\pm0.01$ & \multicolumn{1}{c}{0.88$\pm0.00$} & \multicolumn{1}{c}{0.66$\pm0.02$} & \multicolumn{1}{c}{0.56$\pm0.04$} & 0.19$\pm0.01$ \\\hline
           \multicolumn{1}{c}{0.99} & \multicolumn{1}{c}{0.99} & \multicolumn{1}{c}{0.99} & 0.99 & \multicolumn{1}{c}{0.99} & \multicolumn{1}{c}{0.98} & \multicolumn{1}{c}{0.99} & 0.97 \\
           \multicolumn{1}{c}{0.76$\pm0.01$} & \multicolumn{1}{c}{0.83$\pm0.03$} & \multicolumn{1}{c}{0.67$\pm0.05$} & 0.63$\pm0.04$ & \multicolumn{1}{c}{0.65$\pm0.02$} & \multicolumn{1}{c}{0.72$\pm0.04$} & \multicolumn{1}{c}{0.57$\pm0.01$} & 0.27$\pm0.05$ \\\hline
           \multicolumn{1}{c}{0.99} & \multicolumn{1}{c}{0.99} & \multicolumn{1}{c}{0.99}     & 0.99 & \multicolumn{1}{c}{0.99} & \multicolumn{1}{c}{0.98} & \multicolumn{1}{c}{0.99} & 0.97 \\
           \multicolumn{1}{c}{0.77$\pm0.01$} & \multicolumn{1}{c}{0.91$\pm0.01$} & \multicolumn{1}{c}{0.86$\pm0.03$}     & 0.81$\pm0.04$ & \multicolumn{1}{c}{0.30$\pm0.01$} & \multicolumn{1}{c}{0.11$\pm0.01$} & \multicolumn{1}{c}{0.14$\pm0.01$} & 0.18$\pm0.02$ \\\hline
        \end{tabular}
      }
  }
\end{table}

\begin{table}[h]
  \centering{
    {\caption{Balls Dataset $R^2$, \textbf{Dynamic SCM} DGP. For each penalty, the top row corresponds to the scores after training the autoencoder, and the bottom row denotes the scores after enforcing distributional invariances. The results are averaged over 5 seeds. $\hat{z}\in R^{128}$ and $z_\mathcal{S}, z_{\mathcal{U}}\in R^{64}$. The underlying latent $z\in R^4$. The sections are Min-Max, MMD, and the combination, respectively.}
        \label{table: balls_dynamic_supp}
        }%
      {
        \begin{tabular}{cccc cccc}
          \hline
        \multicolumn{4}{c}{$R^2_{\mathcal{S}}$}
& \multicolumn{4}{c}{$R^2_{\mathcal{U}}$}  \\ \cmidrule(lr){1-4} \cmidrule(lr){5-8}
          \multicolumn{1}{c}{$k=2$} & \multicolumn{1}{c}{$k=4$}   & \multicolumn{1}{c}{$k=8$} & $k=16$ & \multicolumn{1}{c}{$k=2$} & \multicolumn{1}{c}{$k=4$} & \multicolumn{1}{c}{$k=8$} & $k=16$ \\ \hline
          \multicolumn{1}{c}{0.99}  & \multicolumn{1}{c}{0.99} & \multicolumn{1}{c}{0.99} & 0.99 & \multicolumn{1}{c}{0.97} & \multicolumn{1}{c}{0.95} & \multicolumn{1}{c}{0.99} & 0.99 \\
          \multicolumn{1}{c}{0.93$\pm0.00$} & \multicolumn{1}{c}{0.77$\pm0.03$} & \multicolumn{1}{c}{0.42$\pm0.01$} & 0.61$\pm0.03$ & \multicolumn{1}{c}{0.92$\pm0.01$} & \multicolumn{1}{c}{0.84$\pm0.00$} & \multicolumn{1}{c}{0.67$\pm0.04$} & 0.22$\pm0.01$ \\\hline
          \multicolumn{1}{c}{0.99} & \multicolumn{1}{c}{0.99} & \multicolumn{1}{c}{0.99} & 0.99  & \multicolumn{1}{c}{0.97} & \multicolumn{1}{c}{0.95} & \multicolumn{1}{c}{0.99} & 0.99 \\
          \multicolumn{1}{c}{0.55$\pm0.01$} & \multicolumn{1}{c}{0.46$\pm0.01$} & \multicolumn{1}{c}{0.31$\pm0.01$} & 0.55$\pm0.12$ & \multicolumn{1}{c}{0.67$\pm0.01$} & \multicolumn{1}{c}{0.46$\pm0.01$} & \multicolumn{1}{c}{0.32$\pm0.01$} & 0.15$\pm0.04$ \\\hline
          \multicolumn{1}{c}{0.99}  & \multicolumn{1}{c}{0.99} & \multicolumn{1}{c}{0.99} & 0.99 & \multicolumn{1}{c}{0.97} & \multicolumn{1}{c}{0.95} & \multicolumn{1}{c}{0.99} & 0.99 \\
           \multicolumn{1}{c}{0.73$\pm0.01$} & \multicolumn{1}{c}{0.71$\pm0.03$} & \multicolumn{1}{c}{0.77$\pm0.02$}     & 0.82$\pm0.02$ & \multicolumn{1}{c}{0.35$\pm0.02$} & \multicolumn{1}{c}{0.22$\pm0.01$} & \multicolumn{1}{c}{0.19$\pm0.01$} & 0.20$\pm0.04$ \\\hline
        \end{tabular}
      }
  }
\end{table}

\begin{figure}[h]
    \centering
    \includegraphics[ width=4.5in]{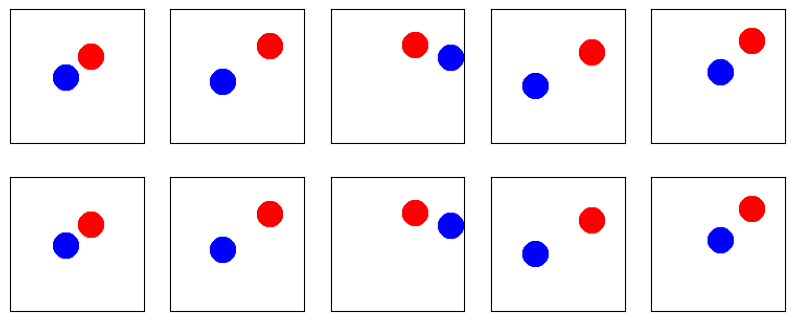}
    \caption{The top row shows the inputs to the image autoencoder, and the bottom row shows model's reconstructions.}
    \label{figure: balls_recons_supp}
\end{figure}

\begin{table}[hpbt]
  \centering{
    {\caption{$\beta$-VAE - Balls Dataset $R^2$, \textbf{Independent SCM} DGP. The top row corresponds to the scores after training the autoencoder and before enforcing $\beta$-VAE's KL divergence constraint, and the rest denote the disentanglement performance after training the $\beta$-VAE for each value of $\beta$. $\beta$-VAE The results are averaged over 5 seeds. $\hat{z}\in R^{128}$ and $z_\mathcal{S}, z_{\mathcal{U}}\in R^{64}$. The underlying latent $z\in R^4$.}
        \label{table:beta_balls_indep_scm}
        }%
      {
        \begin{tabular}{c cccc cccc}
          \hline
            \multicolumn{1}{c}{} & \multicolumn{4}{c}{$R^2_{\mathcal{S}}$}
            & \multicolumn{4}{c}{$R^2_{\mathcal{U}}$}  \\ \cmidrule(lr){2-5} \cmidrule(lr){6-9}
          \multicolumn{1}{c}{$\beta$} & \multicolumn{1}{c}{$k=2$} & \multicolumn{1}{c}{$k=4$}   & \multicolumn{1}{c}{$k=8$} & $k=16$  & \multicolumn{1}{c}{$k=2$} & \multicolumn{1}{c}{$k=4$} & \multicolumn{1}{c}{$k=8$} & $k=16$ \\ \hline
          \multicolumn{1}{c}{} & \multicolumn{1}{c}{0.28$\pm0.02$} & \multicolumn{1}{c}{0.24$\pm0.04$} & \multicolumn{1}{c}{0.25$\pm0.02$} & 0.25$\pm0.03$ & \multicolumn{1}{c}{0.29$\pm0.06$} & \multicolumn{1}{c}{0.25$\pm0.03$} & \multicolumn{1}{c}{0.34$\pm0.04$} & 0.26$\pm0.04$ \\
           \multicolumn{1}{c}{0.1} & \multicolumn{1}{c}{0.97$\pm0.00$} & \multicolumn{1}{c}{0.97$\pm0.00$} & \multicolumn{1}{c}{0.96$\pm0.00$} & 0.96$\pm0.00$ & \multicolumn{1}{c}{0.98$\pm0.00$} & \multicolumn{1}{c}{0.98$\pm0.00$} & \multicolumn{1}{c}{0.97$\pm0.00$} & 0.93$\pm0.01$ \\
           \multicolumn{1}{c}{1.0} & \multicolumn{1}{c}{0.96$\pm0.00$} & \multicolumn{1}{c}{0.94$\pm0.01$} & \multicolumn{1}{c}{0.93$\pm0.00$} & 0.92$\pm0.01$ & \multicolumn{1}{c}{0.95$\pm0.01$} & \multicolumn{1}{c}{0.97$\pm0.00$} & \multicolumn{1}{c}{0.97$\pm0.00$} & 0.93$\pm0.01$ \\
           \multicolumn{1}{c}{10.0} & \multicolumn{1}{c}{0.83$\pm0.03$} & \multicolumn{1}{c}{0.82$\pm0.03$} & \multicolumn{1}{c}{0.77$\pm0.02$} & 0.77$\pm0.02$ & \multicolumn{1}{c}{0.92$\pm0.01$} & \multicolumn{1}{c}{0.93$\pm0.01$} & \multicolumn{1}{c}{0.94$\pm0.00$} & 0.87$\pm0.01$ \\\hline
        \end{tabular}
      }
  }
\end{table}

\begin{table}[hpbt]
  \centering{
    {\caption{$\beta$-VAE - Balls Dataset $R^2$, \textbf{Dynamic SCM} DGP. The top row corresponds to the scores after training the autoencoder and before enforcing $\beta$-VAE's KL divergence constraint, and the rest denote the disentanglement performance after training the $\beta$-VAE for each value of $\beta$. $\beta$-VAE The results are averaged over 5 seeds. $\hat{z}\in R^{128}$ and $z_\mathcal{S}, z_{\mathcal{U}}\in R^{64}$. The underlying latent $z\in R^4$.}
        \label{table:beta_balls_dynamic_scm}
        }%
      {
        \begin{tabular}{c cccc cccc}
          \hline
            \multicolumn{1}{c}{} & \multicolumn{4}{c}{$R^2_{\mathcal{S}}$}
            & \multicolumn{4}{c}{$R^2_{\mathcal{U}}$}  \\ \cmidrule(lr){2-5} \cmidrule(lr){6-9}
          \multicolumn{1}{c}{$\beta$} & \multicolumn{1}{c}{$k=2$} & \multicolumn{1}{c}{$k=4$}   & \multicolumn{1}{c}{$k=8$} & $k=16$  & \multicolumn{1}{c}{$k=2$} & \multicolumn{1}{c}{$k=4$} & \multicolumn{1}{c}{$k=8$} & $k=16$ \\ \hline
          \multicolumn{1}{c}{} & \multicolumn{1}{c}{0.29$\pm0.03$} & \multicolumn{1}{c}{0.32$\pm0.03$} & \multicolumn{1}{c}{0.29$\pm0.04$} & 0.37$\pm0.03$ & \multicolumn{1}{c}{0.34$\pm0.03$} & \multicolumn{1}{c}{0.29$\pm0.03$} & \multicolumn{1}{c}{0.26$\pm0.04$} & 0.37$\pm0.05$ \\
           \multicolumn{1}{c}{0.1} & \multicolumn{1}{c}{0.96$\pm0.00$} & \multicolumn{1}{c}{0.96$\pm0.00$} & \multicolumn{1}{c}{0.95$\pm0.00$} & 0.95$\pm0.00$ & \multicolumn{1}{c}{0.94$\pm0.01$} & \multicolumn{1}{c}{0.93$\pm0.00$} & \multicolumn{1}{c}{0.98$\pm0.00$} & 0.98$\pm0.00$ \\
           \multicolumn{1}{c}{1.0} & \multicolumn{1}{c}{0.94$\pm0.00$} & \multicolumn{1}{c}{0.94$\pm0.01$} & \multicolumn{1}{c}{0.92$\pm0.01$} & 0.90$\pm0.01$ & \multicolumn{1}{c}{0.93$\pm0.00$} & \multicolumn{1}{c}{0.93$\pm0.01$} & \multicolumn{1}{c}{0.97$\pm0.00$} & 0.98$\pm0.00$ \\
           \multicolumn{1}{c}{10.0} & \multicolumn{1}{c}{0.82$\pm0.02$} & \multicolumn{1}{c}{0.81$\pm0.05$} & \multicolumn{1}{c}{0.77$\pm0.02$} & 0.74$\pm0.03$ & \multicolumn{1}{c}{0.88$\pm0.01$} & \multicolumn{1}{c}{0.89$\pm0.01$} & \multicolumn{1}{c}{0.94$\pm0.00$} & 0.94$\pm0.01$ \\\hline
        \end{tabular}
      }
  }
\end{table}

\begin{table}[h]
  \centering{
    {\caption{MNIST, \textbf{Coloured Digits}, \textbf{Independent SCM} DGP. The top row corresponds to the scores after training the autoencoder, and the following rows denote the scores after enforcing distributional invariances through Min-Max penalty, MMD, and the combination, respectively. The results are averaged over 5 seeds. $\hat{z}\in R^{128}$ and $\hat{z}_{\mathcal{\hat{S}}}, \hat{z}_{\mathcal{\hat{U}}}\in R^{64}$.}
        \label{table: mnist_digits_indep_supp}
        }%
      {
        \begin{tabular}{cccc cccc}
          \hline
        \multicolumn{4}{c}{Digits Classification Accuracy}
& \multicolumn{4}{c}{Colours $R^2_{\mathcal{U}}$}  \\ \cmidrule(lr){1-4} \cmidrule(lr){5-8}
          \multicolumn{1}{c}{$k=2$} & \multicolumn{1}{c}{$k=4$}   & \multicolumn{1}{c}{$k=8$} & $k=16$ & \multicolumn{1}{c}{$k=2$} & \multicolumn{1}{c}{$k=4$} & \multicolumn{1}{c}{$k=8$} & $k=16$ \\ \hline
          \multicolumn{1}{c}{0.87}  & \multicolumn{1}{c}{0.33} & \multicolumn{1}{c}{0.33} & 0.32 & \multicolumn{1}{c}{0.76} & \multicolumn{1}{c}{0.67} & \multicolumn{1}{c}{0.73} & 0.74 \\
          \multicolumn{1}{c}{0.71$\pm0.02$} & \multicolumn{1}{c}{0.59$\pm0.01$} & \multicolumn{1}{c}{0.58$\pm0.01$} & 0.66$\pm0.01$ & \multicolumn{1}{c}{0.72$\pm0.02$} & \multicolumn{1}{c}{0.55$\pm0.01$} & \multicolumn{1}{c}{0.51$\pm0.03$} & 0.49$\pm0.02$ \\
          \multicolumn{1}{c}{0.72$\pm0.01$} & \multicolumn{1}{c}{0.70$\pm0.01$} & \multicolumn{1}{c}{0.73$\pm0.01$} & 0.73$\pm0.01$ & \multicolumn{1}{c}{0.77$\pm0.01$} & \multicolumn{1}{c}{0.64$\pm0.01$} & \multicolumn{1}{c}{0.64$\pm0.02$} & 0.63$\pm0.02$ \\
           \multicolumn{1}{c}{0.73$\pm0.02$} & \multicolumn{1}{c}{0.70$\pm0.02$} & \multicolumn{1}{c}{0.74$\pm0.00$}     & 0.74$\pm0.01$ & \multicolumn{1}{c}{0.73$\pm0.02$} & \multicolumn{1}{c}{0.54$\pm0.02$} & \multicolumn{1}{c}{0.38$\pm0.01$} & 0.28$\pm0.01$ \\\hline
        \end{tabular}
      }
  }
\end{table}

\begin{table}[h]
  \centering{
    {\caption{MNIST, \textbf{Coloured Digits}, \textbf{Dynamic SCM} DGP. The top row corresponds to the scores after training the autoencoder, and the following rows denote the scores after enforcing distributional invariances through Min-Max penalty, MMD, and the combination, respectively. The results are averaged over 5 seeds. $\hat{z}\in R^{128}$ and $\hat{z}_{\mathcal{\hat{S}}}, \hat{z}_{\mathcal{\hat{U}}}\in R^{64}$.}
        \label{table: mnist_digits_dynamic_supp}
        }%
      {
        \begin{tabular}{cccc cccc}
          \hline
        \multicolumn{4}{c}{Digits Classification Accuracy}
& \multicolumn{4}{c}{Colours $R^2_{\mathcal{U}}$}  \\ \cmidrule(lr){1-4} \cmidrule(lr){5-8}
          \multicolumn{1}{c}{$k=2$} & \multicolumn{1}{c}{$k=4$}   & \multicolumn{1}{c}{$k=8$} & $k=16$ & \multicolumn{1}{c}{$k=2$} & \multicolumn{1}{c}{$k=4$} & \multicolumn{1}{c}{$k=8$} & $k=16$ \\ \hline
          \multicolumn{1}{c}{0.84}  & \multicolumn{1}{c}{0.90} & \multicolumn{1}{c}{0.70} & 0.75 & \multicolumn{1}{c}{0.81} & \multicolumn{1}{c}{0.55} & \multicolumn{1}{c}{0.74} & 0.77 \\
          \multicolumn{1}{c}{0.56$\pm0.01$} & \multicolumn{1}{c}{0.78$\pm0.01$} & \multicolumn{1}{c}{0.48$\pm0.02$} & 0.53$\pm0.01$ & \multicolumn{1}{c}{0.72$\pm0.02$} & \multicolumn{1}{c}{0.16$\pm0.01$} & \multicolumn{1}{c}{0.56$\pm0.02$} & 0.43$\pm0.02$ \\
          \multicolumn{1}{c}{0.70$\pm0.01$} & \multicolumn{1}{c}{0.80$\pm0.01$} & \multicolumn{1}{c}{0.71$\pm0.01$} & 0.75$\pm0.01$ & \multicolumn{1}{c}{0.80$\pm0.01$} & \multicolumn{1}{c}{0.16$\pm0.01$} & \multicolumn{1}{c}{0.63$\pm0.02$} & 0.65$\pm0.02$ \\
           \multicolumn{1}{c}{0.70$\pm0.02$} & \multicolumn{1}{c}{0.79$\pm0.02$} & \multicolumn{1}{c}{0.64$\pm0.01$}     & 0.72$\pm0.02$ & \multicolumn{1}{c}{0.58$\pm0.03$} & \multicolumn{1}{c}{0.13$\pm0.01$} & \multicolumn{1}{c}{0.46$\pm0.01$} & 0.31$\pm0.03$ \\\hline
        \end{tabular}
      }
  }
\end{table}

\subsection{$\beta-$VAE Baseline}
For all experiments in sections \ref{appendix:linear_mixing}, \ref{appendix:polynomial_mixing}, \ref{appendix:balls_dataset}, we implement a baseline based on $\beta-$VAE \citep{higgins2017betavae} and report the metrics in tables \ref{table:beta_linear_indep_scm}, \ref{table:beta_linear_dynamic_scm} for Linear Mixing, in tables \ref{table:beta_poly_indep_scm}, \ref{table:beta_poly_dynamic_scm} for Polynomial Mixing, and in tables \ref{table:beta_balls_indep_scm}, \ref{table:beta_balls_dynamic_scm} for the Balls image dataset. To obtain the scores for this baseline, we employ a similar 2 stage procedure, where in stage 1, we train an autoencoder with reconstruction objective only. Then at stage 2, we employ the KL divergence constraint from \citet{higgins2017betavae} on the representations obtained from the autoencoder at stage 1, and randomly divide the resulting $\hat{z}$ into two halves to represent $\hat{z}_{\hat{\mathcal{S}}}$,$\hat{z}_{\hat{\mathcal{U}}}$, and compute the $R^2$ scores against $z_{\mathcal{S}}$,$z_{\mathcal{U}}$. Note that unlike our method that directly affects a known subset of $\hat{z}$ to obtain $\hat{z}_{\hat{\mathcal{S}}}$, we have no way of knowing beforehand such subsets with the KL divergence penalty of \citet{higgins2017betavae}, hence the need for randomly selecting such features.

\paragraph{Training Details and Hyperparameter Selection}
\label{appendix: hp_search}
It should be noted that hyperparameter selection in unsupervised scenarios such as this work differs crucially from the conventional setups as in practice one does not have access to the ground-truth latents $z$. Therefore we focus on using default hyperparameters and demonstrate the robustness and versatility of our approach across the different datasets. We train all models with Adam \citep{Adam} optimizer with a learning rate of $10^{-3}$ without weight decay, $\epsilon=10^{-8}, \beta_1=0.9, \beta_2=0.999$. We reduce the learning rate by a factor of 0.5 if the training objective is not improved for 10 epochs. This drop is followed by a cool-down period of 10 epochs, and the learning rate cannot decrease to lower than $10^{-4}$. For all datasets we use a batch size of 1024 and early stop the training at 2000 steps. The weight of invariance penalty is always set to 1.0, regardless of the combination of penalties used. To ensure the robustness of the Min-Max penalty, we enforce the support invariance not just on the minimum and maximum across a batch, rather, we sort the batch and for each component of $z_{\mathcal{S}}$ take the top 10 for computing the penalty. For MMD penalty we always use the standard RBF kernel with a default bandwidth of 1.0, with the only exception of using an adaptive bandwidth for linear mixing experiments.

\end{document}